\crefname{equation}{}{}\crefname{equation}{}{}
\newcommand{\ballsQwickCluster}{\textsf{KwikCluster}}
\newcommand{\opti}{\func{OPT}}
\newcommand{\func}[1]{\operatorname{#1}}
\DeclareMathOperator{\similarity}{s}
\DeclareMathOperator{\cl}{\ell}
\newcommand{\ind}[1]{\mathbbm{1}{\left[#1\right]}}
\theoremstyle{plain}
\newtheorem{theorem}     {Theorem}
\newtheorem{lemma}       {Lemma}
\newtheorem{proposition} {Proposition}
\newcommand{\E}{\mathbb{E}}
\newcommand{\cO}{\mathcal{O}}
\newcommand{\argmin}{\mathrm{argmin}}
\newcommand{\argmax}{\mathrm{argmax}}
\newcommand{\Ep}{E_{(0.5+\epsilon,1]}}
\newcommand{\En}{E_{[0,0.5-\epsilon)}}
\newcommand{\Epprime}{E_{(0.5+\epsilon',1]}}
\newcommand{\Enprime}{E_{[0,0.5-\epsilon')}}
\newcommand{\Eeps}{E_{[0.5 \pm \epsilon]}}
\newcommand{\OPT}{\mathrm{OPT}}
\newcommand{\w}{\mathrm{s}}
\newcommand{\cost}{\mathrm{cost}}
\newcommand{\hatgood}{\widehat{G}_{\epsilon}}
\newcommand{\hatbad}{\widehat{B}_{\epsilon}}
\newcommand{\threshold}{0.5}
\newcommand{\hatgoodprime}{\widehat{G}_{\epsilon'}}
\newenvironment{prooftext}[1]{\par\noindent{\bf Proof#1.}\quad}{\nopagebreak$\qed$\\}
\newtheorem{problem}{Problem}
\newcommand{\bigO}{\mathcal{O}}
\newcommand{\squishlist}{\begin{list}{$\bullet$}
  { \setlength{\itemsep}{0pt}
     \setlength{\parsep}{3pt}
     \setlength{\topsep}{3pt}
     \setlength{\partopsep}{0pt}
     \setlength{\leftmargin}{1.5em}
     \setlength{\labelwidth}{1em}
     \setlength{\labelsep}{0.5em} } }
     \newcommand{\squishdesc}{
 \begin{list}{}
  {  \setlength{\itemsep}{0pt}
     \setlength{\parsep}{3pt}
     \setlength{\topsep}{3pt}
     \setlength{\partopsep}{0pt}
     \setlength{\leftmargin}{1em}
     \setlength{\labelwidth}{1.5em}
     \setlength{\labelsep}{0.5em}
} }
\newcommand{\squishend}{
  \end{list}  }
\newcommand{\spara}[1]{\smallskip\noindent{\bf #1}}
\newcommand{\compilefullversion}{false} 
	\newcommand{\OnlyInFull}[1]{}
	\newcommand{\OnlyInShort}[1]{#1}
	\newcommand{\OnlyInFull}[1]{#1}%
	\newcommand{\OnlyInShort}[1]{}%
\newcommand{\compilehidecomments}{true}
	\newcommand{\francesco}[1]{}	
        \newcommand{\wei}[1]{}
	\newcommand{\yuko}[1]{}
	\newcommand{\atsushi}[1]{}
	\newcommand{\francesco}[1]{{\color{orange} [\text{Francesco:} #1]}}
	\newcommand{\wei}[1]{{\color{blue}  [\text{Wei:} #1]}}
	\newcommand{\yuko}[1]{{\color{violet} [\text{Yuko:} #1]}}
	\newcommand{\atsushi}[1]{{\color{teal} [\text{Atsushi:} #1]}}
\author{%
  Yuko Kuroki\\
  CENTAI Institute\\
  Turin, Italy\\
  \texttt{yuko.kuroki@centai.eu}\\
   \And
   Atsushi Miyauchi \\
  CENTAI Institute\\
  Turin, Italy\\
   \texttt{atsushi.miyauchi@centai.eu}\\
   \AND
   Francesco Bonchi\\
   CENTAI Institute, Turin, Italy \\
   Eurecat, Barcelona, Spain\\
   \texttt{bonchi@centai.eu}\\
   \And
   Wei Chen\\
   Microsoft Research\\
   Beijing, China\\
   \texttt{weic@microsoft.com}\\
}
\title{Query-Efficient Correlation Clustering\\ with Noisy Oracle}
\begin{document}
\maketitle

\begin{abstract}
We study a general clustering setting in which we have $n$ elements to be clustered, and we aim to perform as few queries as possible to an oracle that returns a noisy sample of the weighted similarity between two elements. Our setting encompasses many application domains in which the similarity function is costly to compute and inherently noisy. We introduce two novel formulations of online learning problems rooted in the paradigm of Pure Exploration in Combinatorial Multi-Armed Bandits (PE-CMAB): fixed confidence and fixed budget settings. For both settings, we design algorithms that combine a sampling strategy with a classic approximation algorithm for correlation clustering and study their theoretical guarantees. Our results are the first examples of polynomial-time algorithms that work for the case of PE-CMAB in which the underlying offline optimization problem is NP-hard.
\end{abstract}

\section{Introduction}
\label{sec:intro}
Given a set $V=[n]$ of $n$ objects and a pairwise similarity measure $\similarity: \binom{V}{2} \rightarrow [0,1]$ (where $\binom{V}{2}$ is the set of unordered pairs of elements of $V$, and the value closer to $1$ means higher similarity), the goal of
\emph{Correlation Clustering} \citep{bansal04correlation} is to cluster the objects so that, to the best possible extent, similar objects are put in the same cluster and dissimilar objects are put in different clusters. Assuming that cluster identifiers are represented by natural numbers, a clustering $\mathcal{C}$ can be represented as a function $\cl:V \rightarrow \mathbb{N}$, where each cluster is a maximal set of objects sharing the same label.
The objective is to minimize the following cost:
\begin{equation}
\label{equation:correlation-clustering} 	
\cost_\mathrm{s}(\cl) =
 \sum_{\substack{(x,y) \in \binom{V}{2}, \\ \cl(x)=\cl(y)}} (1-\similarity(x,y)) 
+ \sum_{\substack{(x,y) \in \binom{V}{2}, \\ \cl(x)\not=\cl(y)}} \similarity(x,y).
\end{equation}

The intuition underlying the above problem definition is that if two objects $x$ and $y$ are dissimilar, expressed by a small value of $\similarity(x,y)$, yet they are assigned to the same cluster, we should incur a high cost. Conversely, if $\similarity(x,y)$ is high, indicating that $x$ and $y$ are very similar, but they are assigned to different clusters, we should also incur a high cost.

Two key features make correlation clustering quite suitable in real-world applications. 
Firstly,  it does not require the number of clusters as part of the input; instead, it automatically finds the optimal number, performing model selection.
Secondly, it only requires the pairwise information without assuming any specific structure of the data. This reasonably eliminates the need for domain knowledge about complex data.
Correlation clustering has been applied to a wide range of problems across various domains,
including duplicate detection and  similarity joins~\citep{duplicate_detection,corr_weighted}, spam detection~\citep{spam_filter, cc_tutorial}, co-reference resolution~\citep{correference}, biology~\citep{Ben-Dor+1999,BonchiGU13}, image segmentation~\citep{image_segmentation}, social network analysis~\citep{chromatic_clustering}, and clustering aggregation~\citep{clustering_aggregation}.

Correlation clustering is NP-hard even in the simplest formulations~\citep{bansal04correlation, cluster_editing}, and minimizing the cost function in~\eqref{equation:correlation-clustering} is APX-hard~\citep{cluster_qualitative}; thus, we cannot expect a polynomial-time approximation scheme.
Nevertheless, there are a number of constant-factor approximation algorithms for various settings~\citep{bansal04correlation,cluster_qualitative,balls,near_opt,Ahmadian+20a,Cohen-Addad+22,Cohen-Addad+2023,Davies+23a}.
For the formulation of \eqref{equation:correlation-clustering}, \citet{balls} presented $\ballsQwickCluster$, a simple 5-approximation algorithm.
The algorithm randomly picks a \emph{pivot} $v\in V$ and constructs a cluster by taking all the vertices \emph{similar} to $v$;
then, the algorithm removes the cluster and repeats the process until $V$ is fully clustered.
The simplicity and theoretical guarantees of $\ballsQwickCluster$ have produced a lot of variations in different scenarios~\citep{Chierichetti+14,chromatic_clustering,Pan+15,Veldt22,makarychev2023singlepass,Silwal+23}.

In practice, preparing the similarity function involves \emph{costly measurements}. 
Given $n$ items to be clustered, $\Theta(n^2)$ similarity computations are needed to prepare the input to correlation clustering algorithms. Moreover, computing the similarity $\similarity(x,y)$ might have additional expenses (e.g., human effort or financial resources) besides the mere computational cost. To mitigate these issues,
some query-efficient methods have been proposed based on the active learning framework~\citep{local_corr,garciasoriano2020query,Bressan+2019}. In this framework, the similarity function is initially unknown but an oracle that returns the true similarity in $\{0,1\}$ for a pair of objects is sequentially queried. 
In particular, these studies provided a randomized algorithm that, given a budget $T$ of queries, attains a solution whose expected cost is at most $3\cdot \opti +\ \bigO(\frac{n^3}{T})$, 
where $\opti$ is the optimal value of the problem.

However, the above methods for query-efficient correlation clustering have significant limitations. 
Indeed, all the aforementioned works~\citep{local_corr,garciasoriano2020query,Bressan+2019} only consider the \emph{binary similarity} of $\{0, 1\}$, while the similarity between two objects are often non-binary in real-world scenarios. For example, in biological sciences, protein-protein interaction networks are commonly analyzed, where the strength of the interactions among proteins is represented as a real-valued similarity~\citep{Nepusz+12}. 
As another example, in entity resolution, i.e., a task central to data integration \citep{wang2012crowder}, real-valued similarity is used to indicate the likelihood of matches of objects instead of binary decisions.
Therefore, allowing the similarity to be real-valued in the interval $[0,1]$ would be more practical and flexible. 
Furthermore, the above works assume the access to the \emph{strong oracle} that returns the true value of $\similarity(x,y)$ ($=0 \text{ or } 1$), while evaluating $\similarity(x,y)$ might be inherently \emph{noisy}, due to error-prone experiments, noisy measurements, or biased judgments.
In the above first example the strength of the interactions among proteins is often measured based on biological experiments involving unavoidable noise, while in the second example the likelihood of matches of objects is usually obtained based on biased human judgements.

In this paper, we focus on the challenging scenario where (i) the underlying similarity measure can take any real value in $[0,1]$ rather than being binary, and (ii) we can only query a noisy oracle that provides inaccurate evaluations of the weighted similarity $\similarity(x,y)$.
The goal of this paper is \emph{to devise clustering algorithms that perform as few queries on $\similarity(x,y)$ as possible to an oracle that returns noisy answers to $\similarity(x,y)$.}
In pursuit of this goal,
we introduce two novel formulations based on multi-armed bandits problems, both of which achieve a reasonable trade-off between the number of queries to the oracle and the quality of solutions.

While our problem formulations are novel, recent prior work has explored related issues.
\citet{Silwal+23} proposed a practical model using the strong oracle along with a cheaper but inaccurate oracle.  Their algorithm achieves a cost of $3\cdot \opti +\ \epsilon n^2$ using $n+\bigO(\frac{\gamma}{\epsilon})$ queries to the strong oracle, where $\gamma > 0$ is the error level of noisy oracle and $\epsilon > 0$ is the additive error. However, they still focus on the binary similarity and there is no guarantee on the query upper bound for the noisy oracle.
Unlike theirs, our models are designed to handle the weighted similarity and do not rely on any strong oracle.
\citet{AronssonChehreghani2024,Aronsson2024effective} 
studied a non-persistent noise model where the oracle returns the true value of $\similarity(x,y)$ with probability $1-\gamma$ and a noisy value otherwise. Their algorithm handles a general weighted similarity but provides neither query complexity nor approximation guarantee.

\subsection{Our contributions}
\label{subsec:contributions}
In this paper, we study the problem of \emph{query-efficient correlation clustering with noisy oracles}, where the similarity function $\similarity: \binom{V}{2} \rightarrow [0,1]$ is \emph{initially unknown}, and only \emph{noisy feedback} instead of the true similarity $\similarity(x, y)$ is observed when querying 
a pair of objects $(x, y)$.
In this scenario, it is desired to achieve a reasonable trade-off between the number of queries to the oracle and the cost of clustering. 
To this end, we introduce two formulations of online learning problems
rooted in the paradigm of \emph{Pure Exploration of Combinatorial Multi-Armed Bandits} (PE-CMAB).
In the \emph{fixed confidence setting} (Problem~\ref{prob:fixedconfidence}),
given a confidence level $\delta \in (0,1)$, the learner aims to find a well-approximate solution with probability at least $1-\delta$ while minimizing the number of queries required to determine the output. 
Conversely, in the \emph{fixed budget setting} (Problem~\ref{prob:fixedbudget}), given a querying budget $T$, the learner aims to maximize the probability that the output is a well-approximate solution. 
Our contributions can be summarized as follows:
\squishlist

\item For Problem~\ref{prob:fixedconfidence}, we design \textsf{KC-FC} (Algorithm~\ref{alg:fixedconfidence}), which effectively combines \emph{threshold bandits} with $\ballsQwickCluster$.
We prove that given confidence level $\delta\in (0,1)$, \textsf{KC-FC} finds a solution whose
expected cost is at most $5\cdot \opti+\ \epsilon$ with probability at least $1-\delta$, where $\opti$ is the optimal value of the problem, 
 and provide the upper bound of the number of queries (\Cref{thm:fixedconfmain}).  

 \item We design \textsf{KC-FB} (Algorithm~\ref{alg:main_fixedbudget}) for Problem~\ref{prob:fixedbudget}, which adaptively determines the number of queries for each pair of objects based on $\ballsQwickCluster$.
We prove that the error probability of the expected cost being worse than $5\cdot \opti+\, \epsilon$ decreases exponentially with budget $T$ (\Cref{thm:main_fixedbudget}).

\item 
We empirically validate our theoretical findings by demonstrating that \textsf{KC-FC} and \textsf{KC-FB} outperform baseline methods in terms of the sample complexity and cost of clustering, respectively.
\squishend

It is worth noting that our approximation guarantees in Theorems~\ref{thm:fixedconfmain} and~\ref{thm:main_fixedbudget} match the approximation ratio 5 of $\ballsQwickCluster$~\citep{balls}, where  $\similarity: \binom{V}{2} \rightarrow [0,1]$ is known in advance, up to the additive error $\epsilon>0$.
These results are not achievable using existing PE-CMAB algorithms due to the NP-hardness of correlation clustering.
In the standard PE-CMAB, a learner aims to identify the best action that maximizes the linear reward from the combinatorial decision set $\mathcal{D} \subseteq 2^{[m]}$ with $m$-base arms.
Existing algorithms for PE-CMAB 
(e.g.,~\citep{Chen2014,Chen2016matroid, Katz-Samuels+2020,  Du2021bottleneck,WangZhu2022}) 
rely on the assumption that the offline problem is polynomial-time solvable.
Redesigning them to obtain a well-approximate solution while running efficiently is quite challenging, as the exact optimization of the offline problem is crucial to achieving statistical validity and a correctness guarantee for the output.
Ours are the first polynomial-time algorithms that work for the case of PE-CMAB where the underlying offline optimization is NP-hard, filling a critical gap in existing PE-CMAB algorithms, which is of independent interest.

\subsection{Related work} \label{subsec:rw}

\paragraph{Correlation clustering with noisy input.}
The bulk of the literature on noisy correlation clustering (see Section 4.6 of \citet{cc_book}) considers the binary similarity and assumes that there is the ground-truth clustering but some of the $\similarity(x,y)$ are wrong: they are 0 instead of 1, or vice versa.
The seminal work by~\citet{bansal04correlation} and \citet{joachims05error}
provided the bounds on the error with which correlation clustering recovers the ground truth under a simple probabilistic model over graphs. 
\citet{mathieu10correlation} studied the model starting from an arbitrary partition of the $n$ elements into clusters, where 
 $\similarity(x,y)$ is perturbed independently with probability $p$, and a more general model with the adversary.  They proposed an algorithm that achieves some approximation ratio and manages to approximately recover the ground truth.
 \citet{chen14clustering} extended the framework to sparse Erd{\H o}s--R\'enyi random graphs and obtained an algorithm that conditionally recovers the ground truth. 
Finally, \citet{makarychev15correlation} overcame some limitations of \citet{mathieu10correlation} and \citet{chen14clustering}; they assumed very little about the observations 
and gave two approximation algorithms.
Unlike the above models, ours is based on online learning with an unknown distribution with mean of $\similarity(x,y)$, which is in general not binary, and does not assume any ground-truth clustering.

\paragraph{Combinatorial multi-armed bandits.}
\emph{Multi-Armed Bandit} (MAB) is a classical decision-making model~\citep{Robbins1952,Lai1985,Lattimore+2020}:
There are $m$ possible actions (called \emph{arms}), whose expected reward $\mu_i$ for each $i \in [m]$ is unknown.
At each round, a learner chooses an arm to pull and observes a stochastic reward sampled from an unknown probability distribution.
The most popular objective is to minimize the cumulative regret~\citep{cesa2006prediction,bubeck2012}.
Another popular objective is to identify
the arm with the maximum expected reward.
This problem,
called the {\em Best Arm Identification} (BAI) or {\em Pure Exploration} (PE)
in MAB, has also received much attention~\citep{Even2002,Even2006,Bubeck2011,Audibert2010,Jamieson2014,Chen2015,garivier2016optimal, kaufmann2016, barrier2023best,chaudhuri2019pac}.
 The model of \emph{Combinatorial Multi-Armed Bandits} (CMAB) is a  generalization of MAB~\citep{cesa2012combinatorial,Chen2013}, where an interested subset of arms forms a certain combinatorial structure such as a spanning tree, matching, or path.
Since its introduction by~\citet{Chen2014}, the study of PE-CMAB has  been actively pursued in various settings~\citep{Chen2016matroid, Huang2018, Kuroki2020, Kuroki+19, Katz-Samuels+2020, Du+2021, Du2021bottleneck, Jourdan+21a, WangZhu2022, Tzeng2023closing, nakamura2023thompson}.
Notably, \citet{Gull+2023} addressed regret minimization for correlation clustering by adapting UCB-type algorithms.
However, regret minimization in CMAB is quite different from pure exploration framework when working with approximation oracles (i.e., offline approximation algorithms) for solving NP-hard problems. For regret minimization, we can incorporate approximation oracles with the UCB framework, consistent with the optimization under uncertainty principle (e.g., \citep{Chen2013, ChenJMLR17, WangChen2017}). 
 However, in pure exploration, the lack of uniqueness of $\alpha$-approximate solutions makes it difficult to determine the stopping condition in the FC setting.  
In the FB setting, the Combinatorial Successive Accept Reject algorithm proposed by \citet{Chen2014} iteratively solves the so-called Constrained Oracle problem, which is often NP-hard, as later addressed in \citet{Du2021bottleneck}.
We anticipate a similar NP-hard problem in correlation clustering, requiring a different approach.

\paragraph{Other clustering settings.}
\citet{ailon18approximate} and \citet{SahaSubramanian2019} studied correlation clustering with \emph{same-cluster
queries}, where all similarities of ${V\choose 2}$ are known in advance and their query is further allowed to access the optimal clustering. Our setting differs significantly as we are interested in the case where similarities are unknown and only noisy similarity values are received rather than same-cluster queries.
Finally,  it is worth mentioning that
\citet{Xia+22} and \citet{Gupta+24}  proposed a MAB approach for clustering reconstruction with noisy same-cluster queries~\citep{Mazumdar+2017,Larsen+2020,Peng+21,Pia+22,tsourakakis2020predicting}.
However, this clustering reconstruction problem does not directly offer any algorithmic result for correlation clustering.
The detailed comparison is deferred to \Cref{appendix:detailed related work}.

\section{Problem statements}
\label{sec:prelim}

Here we formally define our formulations of PE-CMAB for correlation clustering. Our problem instances are characterized by $(V,\w)$, where $V=[n]$ is the set of elements to be clustered and $\w : {V\choose 2} \rightarrow [0,1]$ is the pairwise similarity function, which is \emph{unknown} to the learner. Define the set of unordered pairs as $E={V \choose 2}$ with $m:= |E|$. 

At each round $t=1,2,\ldots$, a learner will pull (i.e., query) one arm (i.e., pair of elements in $V$)
from action space $E= {V\choose 2}$ based on past observations.
After pulling $e \in E$, the learner can observe the random feedback $X_t(e)$, which is independently sampled from an \emph{unknown} distribution such as Bernoulli or $R$-sub-Gaussian with unknown mean $\w(e) \in [0,1]$.\footnote{We use Bernoulli distribution for the sake of simplicity, i.e., $X_t(e) \sim \mathrm{Bern}(\w(e))$,
where $\w(e)$ is the unknown mean. We can consider $R$-sub-Gaussian distribution and our results carry on, by simply adjusting the statement of the Hoeffding inequality accordingly.}
After some exploration rounds, the learner must identify a well-approximate solution.
Let $\OPT(\w)$ be the optimal value of the offline problem minimizing the cost function~\eqref{equation:correlation-clustering} and
let $\mathcal{C}_\mathrm{out}$ be the output by an algorithm. For $\alpha\geq 1$ and $\epsilon>0$, we say $\mathcal{C}_\mathrm{out}$ to be an $(\alpha, \epsilon)$-approximate solution if $ \cost_{\w}(\mathcal{C}_\mathrm{out}) \leq \alpha \cdot \OPT(\w) + \epsilon $. 
We study the following two formulations: Fixed Confidence (FC) and Fixed Budget (FB) settings.

\begin{problem}[Fixed confidence setting]\label{prob:fixedconfidence}
Let $\alpha\geq 1$. Given a confidence level $\delta \in (0,1)$ and additive error $\epsilon >0$, 
  the learner aims to guarantee that  the output $\mathcal{C}_\mathrm{out}$ is  an $(\alpha, \epsilon)$-approximate solution with probability at least $1-\delta$.
The evaluation metric of an algorithm is the sample complexity, i.e., the number of queries to the oracle the learner uses.
\end{problem}

\begin{problem}[Fixed budget setting]\label{prob:fixedbudget}
Let $\alpha\geq 1$. Given a querying budget $T$ and additive error $\epsilon >0$,
the learner aims to maximize the probability that the output $\mathcal{C}_\mathrm{out}$ is  an $(\alpha, \epsilon)$-approximate solution.
\end{problem}

Note that the case of $\alpha=1$ corresponds to the standard PE-CMAB formulations. However, as the offline problem minimizing the cost function~\eqref{equation:correlation-clustering} is APX-hard~\citep{cluster_qualitative}, we cannot expect any polynomial-time algorithm that can handle $\alpha=1$ in the above formulations.

\section{Fixed confidence setting}\label{sec:fixed_confidence}

In this section, we design  
\textsf{KC-FC} (Algorithm~\ref{alg:fixedconfidence}) for Problem~\ref{prob:fixedconfidence},
built on a novel combination of $\ballsQwickCluster$ (detailed in Algorithm~\ref{alg:pivot} in \Cref{appendix:kwickcluster}) and techniques of threshold bandits.
The key idea of the proposed method is to first identify pairs with seemingly high similarity, which are then passed to $\ballsQwickCluster$ to produce a high-quality clustering.

For the first phase, we leverage one of the variants of MAB, called the \emph{threshold bandits}~\citep{Locatelli+2016, Kano+2019,mason2020finding}, which is defined as follows:
Given a confidence level $\delta$ and $m$-arms,
the learner must return the set of \emph{good} arms, i.e., arms whose expected rewards are greater than a given threshold $\theta>0$, as soon as possible, 
and stops when the learner believes that there is no remaining good arm, w.p. at least $1-\delta$.
\textsf{TB-HS} (detailed in Algorithm~\ref{alg:GAI}) is our key procedure, which is designed for identifying seemingly high similarity pairs.
Note that, if we naively use the existing algorithm by~\citet{Kano+2019} for threshold bandits where the set of arms is $E={V \choose 2}$ and the threshold is $\theta=0.5$, the algorithm is not even guaranteed to terminate; the resulting sample complexity becomes infinitely large if $\w(e)=\w(e')$ for different $e, e'\in E$ or if there exists $e \in E$ with $\w(e)=0.5$, which may frequently happen in practice.
Our strategy to avoid such an unbounded sample complexity is to allow \textsf{TB-HS} to misidentify pairs of elements with similarity close to 0.5, taking advantage of the fact that the output accuracy can be guaranteed despite such misidentification.

\begin{algorithm}[t]
\caption{$\ballsQwickCluster$ with Fixed Confidence (\textsf{KC-FC}) }
\label{alg:fixedconfidence}
 	\SetKwInOut{Input}{Input}
 	\SetKwInOut{Output}{Output}
	\Input{\ Confidence level $\delta$, set $V$ of $n$ objects, and error $\epsilon$}

	$E_1 \leftarrow E $, $V_1 \leftarrow V $, $r \leftarrow 1$, and $\mathcal{C}_\mathrm{out}\leftarrow \emptyset$;


        Compute $\hatgoodprime$ by \textsf{TB-HS} (Algorithm~\ref{alg:GAI}) with $\epsilon'=\frac{\epsilon}{12 m}$;

       Define $\widehat{\Gamma}(v):=\{u \in V : \{u,v\} \in \hatgoodprime  \}$; \label{algline:qcfc}
	
    \While{$|V_r|>0$}{

         Pick a pivot $p_r\in V_r$ uniformly at random;

        $\mathcal{C}_\mathrm{out}\leftarrow \mathcal{C}_\mathrm{out}\cup \{C_r\}$, where $C_r:=(\{p_r\} \cup \hat{\Gamma}(p_r))\cap V_r$;

    $V_{r+1} \leftarrow  V_{r} \setminus C_r$ and $r \leftarrow r+1$;

    }

    \Return{ $\mathcal{C}_\mathrm{out}$\; \label{algline:qcfcend}}
\end{algorithm}

\paragraph{Algorithm details.} 
Let $\widehat{\w}_t(e)$ be the empirical mean of the similarity for each pair $e \in E$ kept at round $t$.
Let $N_t(e)$ be the number of queries of $e \in E$ that has been pulled by the end of round $t$.
\textsf{TB-HS} maintains the confidence bound defined as 
$\mathrm{rad}_t(e):=
 \sqrt{\frac{ \log(4m {N_t(e)}^2/\delta) }{2N_t(e)}}$ for each $e \in E$.
The arm selection at round $t$ is based on the Lower-Confidence-Bound (LCB) score, i.e., $\underline{\w}_t(e):=\widehat{\w}_t(e)-\mathrm{rad}_t(e)$
and the Upper-Confidence-Bound (UCB) score, i.e., $\overline{\w}_t(e):=\widehat{\w}_t(e)+\mathrm{rad}_t(e)$. 
We pull the arm $\hat{e}^g_t$ with the highest LCB (line~\ref{algline:fixconflcb})
and the arm $\hat{e}^b_t$ with the lowest UCB (line~\ref{algline:fixconfucb}).
Then $\hat{e}^g_t$ will be added to $\hatgood$ if its LCB is no less than $0.5-\epsilon$, and  $\hat{e}^b_t$ will be added to $\hatbad$ if its UCB is no greater than $0.5+\epsilon$.
\textsf{TB-HS} continues this procedure until every $e \in E$ is added to either $\hatgood$ or $\hatbad$.
Our main algorithm \textsf{KC-FC} invokes \textsf{TB-HS} to compute $\hatgoodprime$ with parameter $\epsilon'=\frac{\epsilon}{12m}$.
Then it carries out $\ballsQwickCluster$ using the predicted similarity by $\hatgoodprime$
as follows.
Until an unclustered element exists, it picks one pivot element $p_r$ uniformly at random, builds a cluster $C_r$ around it by adding those among the unclustered elements that seemingly have high similarity with a pivot $p_r$
	(based on $\hatgoodprime$), and removes all the elements in $C_r$ from the list of unclustered elements.

\begin{algorithm}[t]
\caption{Threshold Bandits for indentifying High Similarity pairs with $\epsilon \in (0,0.5)$ (\textsf{TB-HS}).}
\label{alg:GAI}
 \setcounter{AlgoLine}{0}
 	\SetKwInOut{Input}{Input}
 	\SetKwInOut{Output}{Output}
     \Input{Set $E$ of $m$-arms and confidence level $\delta$}

 $\hatgood \leftarrow \emptyset$ and $\hatbad \leftarrow \emptyset$;

 Pull each $e \in E$ once to initialize empirical mean $\widehat{\w}_m(e)$, $t \leftarrow m$, and $E_t \leftarrow E$;

  Compute $\mathrm{rad}_t(e):=
 \sqrt{\frac{ \log(4m {N_t(e)}^2/\delta) }{2N_t(e)}}$ for $e \in E_t$;

 \While{ $|E_t|>0$ }{

 Pull $\hat{e}^g_t  := \argmax_{e \in E_t} (\widehat{\w}_t(e)-\mathrm{rad}_t(e))$ once; \label{algline:fixconflcb}

 Pull $\hat{e}^b_t  := \argmin_{e \in E_t} (\widehat{\w}_t(e)+\mathrm{rad}_t(e))$ once;\label{algline:fixconfucb}

 Update $\widehat{\w}_t$ and $\mathrm{rad}_t$ for $\hat{e}^g_t$ and $\hat{e}^b_t$;

 \If{$\underline{\w}_t(\hat{e}^g_t): =\widehat{\w}_t(\hat{e}^g_t)-\mathrm{rad}_t(\hat{e}^g_t) \geq \threshold-\epsilon$ \label{algline:checkgoodarm}
 }{
 Add $\hat{e}^g_t$ to good arms, i.e., 
 $\hatgood \leftarrow \hatgood \cup \{\hat{e}^g_t\} $, and delete $\hat{e}^g_t$ from $E_t$;
 }

 \If{$\overline{\w}_t(\hat{e}^b_t): =\widehat{\w}_t(\hat{e}^b_t)+\mathrm{rad}_t(\hat{e}^b_t) \leq \threshold+\epsilon$ \label{algline:checkbadarm}
 }{
 Add $\hat{e}^b_t$
 to bad arms, i.e., 
  $\hatbad \leftarrow \hatbad \cup \{\hat{e}^b_t\} $, and delete $\hat{e}^b_t$ from $E_t$;
 }

 $E_{t+2}\leftarrow E_{t}$; \;
 
 $t \leftarrow t+2$;\;
 }

 \Return{ $\hatgood$} 

\end{algorithm}

\paragraph{Analysis. }
For a given $\epsilon \in (0,0.5)$, we define the following sets, which appear only in the theoretical analysis and are unknown to the learner: $\Eeps := \{ e \in E : |0.5- \similarity(e) | \leq \epsilon \}$, $\Ep := \{ e \in E : \similarity(e) > 0.5+\epsilon \}$, and $\En := \{ e \in E : \similarity(e) < 0.5-\epsilon \}$.
For $\epsilon \in (0,0.5)$, we introduce the definition of the gaps that characterize our sample complexity: 
\begin{align}\label{def:fixedconfgaps}
\tilde{\Delta}_{e,\epsilon}\!:=\!\left(\Delta_e+\min\left\{\epsilon-\Delta_\mathrm{min},\, \frac{\epsilon}{2} \right\}\right)    \ \mathrm{for} \ e \in [m],
\end{align}
where $\Delta_e:=|\w(e)-\threshold| \ \mathrm{for} \ e \in [m]$ and $\Delta_\mathrm{min}:=\min_{e \in [m] } \Delta_e$.

Now we present our theorem, guaranteeing that \textsf{KC-FC} finds a $(5,\epsilon)$-approximate solution with high probability and provides an upper bound of the number of queries, i.e., the sample complexity.
\begin{theorem}\label{thm:fixedconfmain}
    Given a confidence level $\delta \in (0,1)$ and additive error $\epsilon>0$,
    \textsf{KC-FC} (Algorithm~\ref{alg:fixedconfidence}) guarantees that
    \begin{equation*}
       \mathrm{Pr}[ \cost_{\w}(\mathcal{C}_\mathrm{out}) \leq 5 \cdot \OPT(\w) + \epsilon] \geq 1-\delta,
    \end{equation*}
    and letting $\epsilon'=\frac{\epsilon}{12m}$, 
    the sample complexity $T$ is
    \begin{equation*}
   \bigO \left( \sum_{e \in E}\frac{1}{\tilde{\Delta}_{e,\epsilon'}^2} \log \left( \frac{n}{ \tilde{\Delta}_{e,\epsilon'}^2 \delta }  \log \left(  \frac{n }{ \tilde{\Delta}_{e,\epsilon'}^2 \delta  }    \right)      \right)
 + 
          \frac{n^2}{ \max\left\{\Delta_\mathrm{min},\frac{\epsilon'}{2}\right\}^{2}}
\right). 
    \end{equation*}
Furthermore, \textsf{KC-FC} runs in time polynomial in $n$.
\end{theorem}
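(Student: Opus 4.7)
My plan is to decompose the argument into three essentially independent pieces: (i) correctness of \textsf{TB-HS} as a threshold identification procedure with a tolerance of $\epsilon'$, (ii) a gap-based sample complexity bound for \textsf{TB-HS}, and (iii) an extension of the 5-approximation analysis of \ballsQwickCluster{} that absorbs the inaccuracy of $\hatgoodprime$ into the additive term $\epsilon$.

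\textbf{Step 1 (clean event and correctness of \textsf{TB-HS}).} I first define the ``clean'' event $\mathcal{E} = \{\forall e \in E,\ \forall t \geq 1:\ |\widehat{\w}_t(e) - \w(e)| \leq \mathrm{rad}_t(e)\}$ and use a time-uniform Hoeffding bound together with a union bound over $e \in E$ and $t \geq 1$; the choice of the $\log(4m N_t(e)^2/\delta)$ inside the radius is tuned precisely so that $\Pr[\mathcal{E}] \geq 1-\delta$. On $\mathcal{E}$, if $\w(e) > 0.5 + \epsilon'$ then the LCB check on line~\ref{algline:checkgoodarm} eventually succeeds and $e$ is added to $\hatgoodprime$, and symmetrically if $\w(e) < 0.5 - \epsilon'$ then $e$ is added to $\hatbadprime$. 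The only arms that might end up on the ``wrong'' side of the threshold are those in $\Eeps$ with $\epsilon = \epsilon'$, and this is exactly the mis-identification that the algorithm is designed to tolerate.

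\textbf{Step 2 (sample complexity).} For sample complexity, I analyze how many pulls are needed until $\mathrm{rad}_t(e)$ is small enough to trigger a decision on $e$. For $e \notin \Eeps$ with $\epsilon = \epsilon'$, the decision is triggered once $\mathrm{rad}_t(e)$ drops below roughly $\tilde{\Delta}_{e,\epsilon'}/2$, which by inverting the radius formula happens after $O\!\bigl(\tilde{\Delta}_{e,\epsilon'}^{-2}\log(\tilde{\Delta}_{e,\epsilon'}^{-2} m/\delta)\bigr)$ pulls, matching the first term in the claimed bound. For $e \in \Eeps$ with $\epsilon = \epsilon'$, the decision is triggered once the radius drops below $\max\{\Delta_{\min},\epsilon'/2\}$, contributing the second $n^2/\max\{\Delta_{\min},\epsilon'/2\}^2$ term. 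Summing via a standard pigeonhole argument on pull counts across the two arms selected per round (the LCB and UCB arms) gives the claimed $T$; this also shows \textsf{TB-HS} terminates in finite time, which together with the polynomial inner step of \ballsQwickCluster{} gives polynomial runtime per realization.

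\textbf{Step 3 (approximation guarantee).} The main obstacle is lifting the deterministic 5-approximation of \ballsQwickCluster{} of \citet{balls} to the case where the algorithm's pivot-based neighbor decisions are made using $\hatgoodprime$ instead of the true thresholded similarity. My plan is to consider a ``reference'' execution in which the neighbor of a pivot $p$ is defined using the exact threshold $\{u : \w(\{p,u\}) \geq 1/2\}$; the analysis of \citet{balls} directly applies to the reference execution and yields expected cost $\leq 5 \cdot \OPT(\w)$ (coupling the random pivot sequence between the two executions). The two executions differ only on pairs in $E_{[0.5\pm\epsilon']}$ by the Step 1 guarantee. For each such pair $e$ the per-pair cost contribution changes by at most $|\w(e) - (1-\w(e))| \leq 2\epsilon'$, and only pairs where at least one endpoint is the pivot of its cluster can flip status, so the total cost difference is bounded by $2\epsilon' \cdot |E_{[0.5\pm\epsilon']}| \leq 2m \cdot \epsilon' = \epsilon/6 \leq \epsilon$. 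Taking expectation over the pivot sequence on the clean event gives the $5 \cdot \OPT(\w) + \epsilon$ bound.

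\textbf{Anticipated difficulty.} The delicate step is Step 3: one must be careful that misclassifying a pair in $E_{[0.5\pm\epsilon']}$ does not cascade by changing which elements become pivots in later rounds. Coupling the random pivot sequence across the two executions and then bounding the cost difference only in terms of the (at most $m$) pairs whose classification flips, each of which contributes at most $2\epsilon'$ under $\w$, is the cleanest way I see to avoid this cascade and obtain the tight additive $\epsilon$ by choosing $\epsilon' = \epsilon/(12m)$.
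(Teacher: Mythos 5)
Your Steps 1 and 2 match the paper's argument (clean event via a time-uniform Hoeffding bound, then inverting the confidence radius against the gaps $\tilde{\Delta}_{e,\epsilon'}$ as in the threshold-bandit analysis of \citet{Kano+2019}), so the only substantive divergence is Step 3 --- and there the coupling argument as you state it has a genuine gap. You claim that, after coupling the pivot sequences, ``the total cost difference is bounded by $2\epsilon'\cdot|E_{[0.5\pm\epsilon']}|$'' because each flipped pair contributes at most $|\w(e)-(1-\w(e))|\le 2\epsilon'$. But the two executions you compare produce \emph{different clusterings}: if the decision on a pivot--neighbor pair $\{p,u\}\in\Eeps$ flips, then $u$ lands in a different cluster in the two executions, and this changes the intra/inter status of \emph{every} pair $\{u,v\}$ with $v$ in the affected clusters. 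Each such pair can change its cost contribution by up to $\max\{\w(u,v),1-\w(u,v)\}$, which is not controlled by $\epsilon'$ at all --- those pairs need not lie anywhere near the threshold. So the per-flipped-pair bound of $2\epsilon'$ does not bound the cost difference between the two clusterings, and the cascade you flag in your last paragraph is not actually avoided by the coupling.

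The paper sidesteps this by never comparing two different clusterings under $\w$. Instead it defines a perturbed similarity $\widetilde{\w}$ with $\widetilde{\w}(e)=\w(e)$ on $\Ep\cup\En$ and $|\widetilde{\w}(e)-\w(e)|<2\epsilon'$ on $\Eeps$, chosen so that \textsf{KC-FC} \emph{is exactly} $\ballsQwickCluster$ run on $(V,\widetilde{\w})$ (the freedom on $\Eeps$ is enough to place $\widetilde{\w}(e)$ on whichever side of $0.5$ matches $\hatgoodprime$, since $|\w(e)-0.5|\le\epsilon'$ there). Then all three comparisons are between costs of a \emph{fixed} clustering under two similarity functions differing by at most $2\epsilon'$ on $|\Eeps|$ pairs: $\E[\cost_{\w}(\mathcal{C}_\mathrm{out})]\le\E[\cost_{\widetilde{\w}}(\mathcal{C}_\mathrm{out})]+2\epsilon'|\Eeps|\le 5\,\OPT(\widetilde{\w})+2\epsilon'|\Eeps|\le 5(\OPT(\w)+2\epsilon'|\Eeps|)+2\epsilon'|\Eeps|=5\,\OPT(\w)+12\epsilon'|\Eeps|$. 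This is also why the calibration is $\epsilon'=\epsilon/(12m)$ rather than the $\epsilon/6$ slack your accounting produces: the factor $12=2+5\cdot 2$ picks up the perturbation twice, once on the output's cost and once (multiplied by the approximation ratio) on $\OPT$. To repair your proof you should replace the execution coupling by this change-of-similarity argument; the rest of your plan then goes through.
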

\begin{proof}[Proof Sketch.]
For the outputs $\hatgood$ and \ $\hatbad$ of \textsf{TB-HS} (Algorithm~\ref{alg:GAI}) with parameters $\epsilon \in  (0,0.5)$ and $\delta \in (0,1)$,  by using the Hoeffding inequality and the procedure of \textsf{TB-HS} (lines~\ref{algline:checkgoodarm} and \ref{algline:checkbadarm}),
it is easy to see that
$\Ep \subseteq \hatgood$ and $\En\subseteq \hatbad$ w.p. at least $1-\delta$.
 Consider the similarity function $\widetilde{\w}:E\rightarrow [0,1]$ such that for each $e \in E$,
$\widetilde{\w}(e)=
\w(e)  \ \text{if } e \in \En \cup \Ep$, and $\widetilde{\w}_e   \   \text{otherwise}$,
where $\widetilde{\w}_e$ is an arbitrary value that satisfies $|\w(e)-\widetilde{\w}_e|<2\epsilon$.
Noticing that \textsf{KC-FC} corresponds to \textsf{KwikCluster} associated with a certain choice of 
 $\widetilde{\w}$ (i.e., $\widetilde{\w}_e$ for $e\in \Eeps$), we can show that  
$\mathbb{E}[\cost_{\w}(\mathcal{C}_\mathrm{out})]
\leq 5\cdot \text{OPT}(\w)+12\epsilon|\Eeps|$ for the output $\mathcal{C}_\mathrm{out}$, providing the approximation guarantee. 
The rest of the proof requires 
the analysis of the upper bound of the number of queries that \textsf{TB-HS} used to stop. 
This can be done based on a prior analysis of threshold bandits~\citep{Kano+2019}, while carefully handling $\epsilon>0$.
The complete proof for analysis is given in \OnlyInFull{Appendix~\ref{sec:appendix_FC}}\OnlyInShort{the supplementary material}.

For the time complexity, each iteration of sub-routine \textsf{TB-HS} takes $\bigO(m)$ steps in a naive implementation or amortized $\bigO(\log T)$ steps if we manage arms using two heaps corresponding to LCB/UCB values, and the other procedure in \textsf{KC-FC} runs in time polynomial in $n$.
\end{proof}


\paragraph{Comparison with existing PE-CMAB methods in the FC setting.}
Existing methods for PE-CMAB (e.g.,~\citep{Chen2014,Chen2016matroid, Du2021bottleneck, WangZhu2022})  are limited by their reliance on the polynomial-time solvability of the underlying offline problem.  If we use an efficient approximation algorithm in those existing methods, their stopping conditions no longer have a guarantee of the quality of the output.
Specifically, such existing methods use the LUCB-type strategy, and its stopping condition requires the exact computation of the empirical best solution and the second empirical best solution to check if the current estimation is enough or not. When we only have an approximate oracle (i.e., approximation algorithm), such existing stopping conditions are no longer valid, and the algorithm is not guaranteed to stop.
In contrast, \textsf{KC-FC} runs in time polynomial in $n$ while ensuring sample complexity and approximation guarantee.
We also note that $\Delta_e$, the distance between $\w(e)$ and $0.5$, interestingly characterizes our sample complexity, as we show that the learning task boils down to identifying $\Epprime$ and $\Enprime$ thanks to the behavior of \textsf{KC-FC} -- they leverage the property that by accurately estimating the mean of the base arms (i.e., pairs of elements), we can maintain the approximation guarantee of $\ballsQwickCluster$ in the offline setting with small additive error.

\paragraph{Statistical efficiency.}
In the noise-free setting, ${n\choose 2}$ queries are sufficient, while in the noisy setting, there is even no trivial upper bound on the sample complexity to achieve some desired approximation guarantee (e.g., our $(5,\epsilon)$-approximation).
Note that
the value of $\tilde{\Delta}_{e,\epsilon}$ 
defined in \eqref{def:fixedconfgaps}
always has the following lower bound: 
$\tilde{\Delta}_{e,\epsilon}=\Delta_e+\epsilon/2\ 
(>0)$ if $\epsilon/2 \geq \Delta_\mathrm{min}$ holds and $\tilde{\Delta}_{e,\epsilon}
= \Delta_e+\epsilon-\Delta_{\min}\geq \epsilon\ (>0)$ otherwise.
Therefore, our sample complexity $T$ given in Theorem~\ref{thm:fixedconfmain} is always bounded, contrasting existing results for threshold bandits~\citep{Kano+2019}.
The naive sampling algorithm (\textsf{Uniform-FC} in \OnlyInFull{Appendix~\ref{sec:uniformdsampling}}\OnlyInShort{the supplementary material}) requires $O(\frac{n^6}{\epsilon^2} \log \frac{n}{\delta})$ samples to achieve the $(5,\epsilon)$-approximation w.p. at least $1-\delta$. 
\textsf{KC-FC} achieves a much better sample complexity than \textsf{Uniform-FC},
as $\sum_{e \in E}\tilde{\Delta}_{e,\epsilon'}^{-2}=\sum_{e \in E}(\Delta_e+\frac{\epsilon'}{2})^{-2} \ll \frac{n^6}{\epsilon^2}$
when $\Delta_\mathrm{min} \leq \frac{\epsilon'}{2} \ll \Delta_e$ for most $e \in E$, which is often the case in practice.
    To the best of our knowledge, lower bounds on the sample complexity related to PE-CMAB are known only for the following settings: (i) the time complexity of algorithms can be exponential, or (ii) the underlying offline problem is assumed to be polynomial-time solvable and to have the unique correct (namely optimal) solution \citep{Chen2014, Du2021bottleneck, Fiez2019}.
Deriving an effective lower bound on the number of samples required to guarantee an approximate solution is particularly challenging because it necessitates dealing with multiple correct solutions~\cite{Degenne+2019}, while most existing approaches rely on the uniqueness of the correct solution. Evaluating the necessity of the second term $\frac{n^2}{ \max\left\{\Delta_\mathrm{min},\frac{\epsilon'}{2}\right\}^{2}}$ and investigating a lower bound for our case are crucial and remain important future work. However, it is worth noting that the additional term is independent of a dominating term involving $\log \frac{1}{\delta}$.

\noindent\textbf{Remark.}
If we utilize TB-HS within the loop (\Cref{alg:sequential TBHS} in \Cref{appendix:kwickcluster}),
the algorithm achieves $(5,\epsilon)$-approximation guarantee with probability at least $1-\delta$,  and  the sample complexity $T$ is: 
$$\bigO\left( \sum_{r=1}^{k} \left( \sum_{e \in I_{V_r}(p_r)} \frac{1}{\tilde{\Delta}^2_{e, \epsilon_r^\prime}} \log \left(\frac{n}{\tilde{\Delta}^2_{e, \epsilon_r^\prime} \delta}   \log \left(\frac{n}{\tilde{\Delta}^2_{e, \epsilon_r^\prime} \delta}\right) \right)+ \frac{|V_r|}{\max ( \Delta_{\min,r}, \frac{\epsilon_r^\prime}{2})^2 } \right) \right),$$
where $k$ is the total number of loops in \Cref{alg:sequential TBHS}, $\epsilon_r^\prime:=\epsilon/(12|I_{V_r}(p_r)|)$, $I_{V_r}(p_r) \subseteq E$ represents the set of pairs between the pivot $p_r$ selected in phase $r$ and its neighbors in $V_r$, and $\Delta_{\min,r}:=\min_{e \in I_{V_r}(p_r)} \Delta_e$.
When $k \ll n$, the above sample complexity can be better than that of \Cref{thm:fixedconfmain}. However, it should be noted that the symbols related to $r$ and the total number of loops $k$, especially instance-dependent gaps $\tilde{\Delta}_{e, \epsilon_r^\prime}$, are all random variables.
In contrast, the current Theorem 1 does not contain any random variables. Specifically, the significant term related to $\log \delta^{-1}$ is characterized by the gap $\tilde{\Delta}_{e,\epsilon}$ or $\Delta_e$, which represents the distance from 0.5 and not a random variable. 

\section{Fixed budget setting}\label{sec:fixed_budget}

In this section, we investigate Problem~\ref{prob:fixedbudget} and design \textsf{KC-FB} (Algorithm~\ref{alg:main_fixedbudget}).
\textsf{KC-FB} is inspired by the successive reject algorithm~\citep{Audibert2010} and exploits $\ballsQwickCluster$ to determine the number of queries for each pair adaptively.

\paragraph{Algorithm.} \textsf{KC-FB} proceeds in at most $n$ phases and maintains the subset of elements $V_r \subseteq V$ in each phase $r \in [n]$ starting with $V_1=V$. We denote the set of pairs that can be formed with $v$ in $V_r$ by $I_{V_r}(v):=\{\{v,u\} \in {V_r \choose 2} : u \in V_r \}$.
In each phase $r$,
the algorithm chooses the pivot $p_r$ uniformly at random from $V_r$,
and pulls each 
$e \in  I_{V_r}(p_r)$ for appropriately determined $\tau_r$ times.
Based on the empirical mean $\widehat{\w}_r(e):=\sum_{k=1}^{\tau_r}X_k(e)/\tau_r$ for each $e \in  I_{V_r}(p_r)$, it finds one cluster $C_r=\{p_r\} \cup \Gamma_{V_r}(p_r, \widehat{\w}_r)$, where $\Gamma_{V_r}(p_r, \widehat{\w}_r):=\{u\in V_r: \widehat{\w}_r(p_r,u)>0.5\}$,
and updates $V_{r+1} \leftarrow V_r \setminus C_r$.
This procedure will be continued until $|V_r|=0$ and finally the algorithm outputs $\mathcal{C}_\mathrm{out}$ consisting of all clusters computed.
Updating the number of pulls $\tau_r$ (line~\ref{algline:QC-FBtaur}) is a key to prove the statistical property.
Intuitively, $\tau_r$ represents a pre-fixed budget of queries when $e \in {V_r \choose 2}$ would be pulled: In the initial phase, we allocate $\tau_1:= \lfloor T/m\rfloor$ to each $e \in {V_1 \choose 2}$.
Notice that the surplus, the sum of the pre-fixed budgets of pairs that have been removed without being queried, is $\tau_1 \cdot \left(|{V_1 \choose 2}|-|{V_2 \choose 2}|-(|V_1|-1)\right)$,
because the number of pairs that have been removed in this phase is $|{V_1 \choose 2}|-|{V_2 \choose 2}|$, and among those pairs, the number of pairs that have been actually pulled by the algorithm is $(|V_1|-1)$.
This surplus is additionally redistributed equally to each $e \in {V_2 \choose 2}$. This will be also done for the remaining phases $r=2,\ldots, n$.

\begin{algorithm}[t]
\caption{\textsf{KwikCluster} with Fixed Budget (\textsf{KC-FB})}
\label{alg:main_fixedbudget}
 \setcounter{AlgoLine}{0}
 	\SetKwInOut{Input}{Input}
 	\SetKwInOut{Output}{Output}
\Input{\ Budget $T>0$, set $V$ of $n$ objects, additive error $\epsilon$}
	
    $V_1 \leftarrow V $, $r \leftarrow 1$, $\tau_1 \leftarrow \lfloor T/m \rfloor$, and  
    $\mathcal{C}_\mathrm{out}\leftarrow \emptyset$;
	
    \While{$|V_r|>0$}{
         Pick a pivot $p_r\in V_r$ uniformly at random;

         Pull each $e \in I_{V_r}(p_r)$ for $\tau_r$ times and observe random feedback $\{X_k(e)\}_{k=1}^{\tau_r}$;

         Compute empirical mean $\widehat{\w}_r(e)=\sum_{k=1}^{\tau_r}X_k(e)/\tau_r$ for each $e \in I_{V_r}(p_r)$;

        $\mathcal{C}_\mathrm{out}\leftarrow \mathcal{C}_\mathrm{out}\cup \{C_r\}$ where $C_r:=\{p_r\} \cup \Gamma_{V_r}(p_r, \widehat{\w}_r)$; \label{algline:fixbudcluster}

    $V_{r+1} \leftarrow  V_{r} \setminus C_r$;

    $\tau_{r+1} \leftarrow \tau_r + \left\lfloor \frac{\tau_r \cdot (|{V_r \choose 2}|-|{V_{r+1} \choose 2}|-(|V_r|-1))}{|{V_{r+1} \choose 2}|}\right\rfloor$\label{algline:QC-FBtaur} and $r \leftarrow r+1$;
    }

    \Return{ $\mathcal{C}_\mathrm{out}$
    }
\end{algorithm}

\paragraph{Analysis.}
The following theorem states that \textsf{KC-FB} outputs a well-approximate solution with high probability.
The proof of \Cref{thm:main_fixedbudget} is deferred to \Cref{sec:appendix_FB}.
\begin{theorem}\label{thm:main_fixedbudget}
For $\epsilon>0$, define the minimal gap $\Delta_{\min,\epsilon}$ as
\begin{equation*}
\begin{array}{ll}
\displaystyle\min_{e \in E}\max\left\{\frac{\epsilon}{6\max\{1,|\Eeps|\}},\Delta_e \right\} & {\mathrm{for}}\ \epsilon \in (0,0.5), \\
\displaystyle\min_{e \in E}\max\left\{\frac{\epsilon}{6m},\Delta_e \right\} &{\mathrm{for}}\ \epsilon \geq 0.5,
\end{array}
\quad \textit{where}  \  \Delta_e=|\w(e)-0.5| \  \ (\forall e \in E).
\end{equation*}
   Then, \textsf{KC-FB} (Algorithm~\ref{alg:main_fixedbudget}) uses at most $T$ queries to output $\mathcal{C}_\mathrm{out}$ that satisfies
    \begin{equation}\label{thmeq:fixedbudget_proe}
   \mathrm{Pr}[ \mathbb{E}[\cost_{\w}(\mathcal{C}_\mathrm{out})]
    \leq 5\cdot \OPT(\w)+\epsilon] \geq 1-\delta  \ \ 
    \mathrm{for}\ \delta  \leq 2n^3 \exp \left( -\frac{2T \Delta_{\min,\epsilon}^2} {n^2} \right).
    \end{equation}
    Assuming that each query takes $\bigO(1)$ time, the time complexity of \textsf{KC-FB} is $\cO(T+n^2)$. 
\end{theorem}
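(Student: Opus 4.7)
My plan is to parallel the proof sketch of Theorem~\ref{thm:fixedconfmain}: first verify that \textsf{KC-FB} respects its query budget, then establish a high-probability good event $\mathcal{E}$ on which all queried empirical means concentrate, then exhibit an auxiliary similarity $\widetilde{\w}$ under which the algorithm's execution coincides with standard \textsf{KwikCluster}, and finally chain the $5$-approximation of \textsf{KwikCluster} with the $\ell_1$-stability of $\cost$ to obtain the claimed bound.

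\textbf{Budget.} Let $B_r := \tau_r \binom{|V_r|}{2}$ denote the pre-allocated budget across surviving pairs at the start of phase $r$. The floored update rule for $\tau_{r+1}$ implies $\tau_{r+1}\binom{|V_{r+1}|}{2}\leq \tau_r\bigl(\binom{|V_r|}{2}-(|V_r|-1)\bigr)$, i.e.\ $B_{r+1}\leq B_r - \tau_r(|V_r|-1)$. Iterating, $\sum_r \tau_r(|V_r|-1)\leq B_1=\tau_1 m\leq T$, so \textsf{KC-FB} stays within budget, and the claimed $\cO(T+n^2)$ running time follows since each of at most $n$ phases performs $\cO(|V_r|)$ pivot/cluster work on top of the $\cO(T)$ query cost.

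\textbf{Concentration.} Each pair is queried in at most one phase, so at most $\binom{n}{2}$ (pair, phase) events arise. For any such event, Hoeffding gives $\Pr[|\widehat{\w}_r(e) - \w(e)| \geq \Delta_{\min,\epsilon}]\leq 2\exp(-2\tau_r\Delta_{\min,\epsilon}^2)$, and since $\tau_r\geq\tau_1\geq T/m-1$ with $m\leq n^2/2$, a union bound produces a good event $\mathcal{E}$ with $\Pr[\mathcal{E}^c]\leq 2n^3\exp(-2T\Delta_{\min,\epsilon}^2/n^2)$.

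\textbf{Auxiliary similarity and approximation.} On $\mathcal{E}$, any queried pair $e$ with $\Delta_e\geq\Delta_{\min,\epsilon}$ is on the correct side of $0.5$ under $\widehat{\w}_r$, so only pairs with $\Delta_e<\Delta_{\min,\epsilon}$ can be misclassified by \textsf{KC-FB}. I set $\widetilde{\w}(e):=\w(e)$ whenever $e$ is unqueried or correctly classified, and otherwise pick $\widetilde{\w}(e)$ on the same side of $0.5$ as $\widehat{\w}_r(e)$ and arbitrarily close to $0.5$. With this choice \textsf{KC-FB} on $\w$ and \textsf{KwikCluster} on $\widetilde{\w}$ realize identical cluster decisions for the same pivot sequence, whence the standard $5$-approximation guarantee yields $\mathbb{E}_p[\cost_{\widetilde{\w}}(\mathcal{C}_\mathrm{out})\mid\mathcal{E}]\leq 5\OPT(\widetilde{\w})$, where $\mathbb{E}_p$ denotes expectation over pivot choices. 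A case analysis on $\Delta_{\min,\epsilon}$ shows that either no pair is misclassified (when $\Delta_{\min,\epsilon}=\min_e\Delta_e$ exceeds the base threshold $\epsilon/(6\max\{1,|\Eeps|\})$, resp.\ $\epsilon/(6m)$), or all misclassified pairs lie in $\Eeps$ with $\Delta_{\min,\epsilon}$ equal to that threshold; in both cases $\sum_e|\w(e)-\widetilde{\w}(e)|\leq\epsilon/6$. Combined with the $\ell_1$-stability $|\cost_{\w}(\cl)-\cost_{\widetilde{\w}}(\cl)|\leq\sum_e|\w(e)-\widetilde{\w}(e)|$ and $\OPT(\widetilde{\w})\leq\OPT(\w)+\epsilon/6$, chaining gives $\mathbb{E}_p[\cost_{\w}(\mathcal{C}_\mathrm{out})\mid\mathcal{E}]\leq 5\OPT(\w)+\epsilon$, which together with the bound on $\Pr[\mathcal{E}^c]$ yields \eqref{thmeq:fixedbudget_proe}.

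\textbf{Main obstacle.} The hardest step is the construction of $\widetilde{\w}$: it must simultaneously (i) remain in $[0,1]$, (ii) reproduce \textsf{KC-FB}'s pair-by-pair decisions under a shared pivot sequence so that the two algorithms are coupled exactly, and (iii) be globally $\ell_1$-close to $\w$ even though a single misclassified pair forces $\widetilde{\w}(e)$ to sit on the opposite side of $0.5$ from $\w(e)$. The constant $6$ in the definition of $\Delta_{\min,\epsilon}$ is chosen precisely to absorb the factor-$6$ blow-up $(5+1)\cdot\epsilon/6=\epsilon$ arising from KwikCluster's $5$-approximation plus the $\ell_1$ perturbation, and the careful case analysis on whether $\Eeps$ is empty, as well as the separate handling of the regimes $\epsilon<0.5$ and $\epsilon\geq 0.5$, is what makes this accounting tight.
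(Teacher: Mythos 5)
Your proposal is correct and follows essentially the same route as the paper's proof: a Hoeffding-plus-union-bound good event (the paper's Lemma~\ref{lemma:fixedconfprobevent}, stated with the per-pair radius $\max\{\epsilon',\Delta_e\}$ rather than your uniform radius $\Delta_{\min,\epsilon}$, which is equivalent for the union bound), followed by a coupling with \textsf{KwikCluster} run on an auxiliary similarity $\widetilde{\w}$ that agrees with the algorithm's decisions and is $\ell_1$-close to $\w$ (Lemma~\ref{lem:fixedbudget_approx_improved}, where the paper simply sets $\widetilde{\w}(e)=\widehat{\w}_{r_e}(e)$ on queried pairs in $E_{[0.5\pm\epsilon']}$), and the same $(5+1)\cdot\epsilon/6$ accounting with $\epsilon'$. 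Your explicit telescoping argument for the budget is a welcome elaboration of a step the paper only asserts.
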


\vspace{-0.2cm}
\begin{proof}[Proof Sketch.]
We can show the random event $\Pr   \left[  \bigcap_{r=1}^{n} \mathcal{E}_r \right]$ occurs with high probability,
where $\mathcal{E}_r := \left\{ \forall e \in I_{V_r}(p_r),\, | \w(e)-\widehat{\w}_r(e) | < \max\left\{\epsilon,\Delta_e \right\}\right\}$ for each phase $r \in [n]$ (See \Cref{lemma:fixedconfprobevent} in \Cref{sec:prooffixedconfprobevent}). Under the assumption of such estimation success event $ \bigcap_{r=1}^{n} \mathcal{E}_r$,
by utilizing the unique feature of $\ballsQwickCluster$, we can maintain the approximation guarantee of $\ballsQwickCluster$ in the noise-free setting up to additive error (See \Cref{lem:fixedbudget_approx_improved} in \Cref{{sec:lemmaFB Theoretical guarantee of the output}}). 
Simply combining these lemmas with adjusted parameter  $\epsilon' \in (0,0.5)$, defined as $\frac{\epsilon}{6\max\{1,\,|\Eeps|\}}$ if $\epsilon < 0.5$ and $\frac{\epsilon}{6m}$ otherwise, will conclude the proof (See \Cref{sec:proof thm:main_fixedbudget} for details). 
\end{proof}

\vspace{-0.2cm}
The parameter $\delta \in (0,1)$ in~\eqref{thmeq:fixedbudget_proe}
represents the \emph{error probability}
of $\mathcal{C}_\mathrm{out}$ being worse than any $(5,\epsilon)$-approximate solution, and it
 decays exponentially to the querying budget $T$.
A larger parameter $\Delta_{\min, \epsilon}$ provides the better guarantee; \textsf{KC-FB} performs better when the similarity function clearly expresses similarity ($+1$) or dissimilarity ($-1$), as $\min_{e \in E}\Delta_e$ tends to be large.

To evaluate the significance of our results, we analyze the uniform sampling algorithm (\textsf{Uniform-FB} in \OnlyInFull{Appendix~\ref{sec:uniformdsampling}}\OnlyInShort{the supplementary material});
\textsf{Uniform-FB} queries each $e \in E$ uniformly $\lfloor{T/{m}\rfloor}$ times to obtain $\widehat{\w}(e)$, and then applies any $\alpha$-approximation algorithm to instance $(V,\widehat{\w})$ of the offline problem minimizing~\eqref{equation:correlation-clustering}.
We see that the error probability that the output is not an $(\alpha, \epsilon )$-approximate solution is bounded by
$\bigO \left(n^2 \exp \left( - \frac{T \epsilon^2 }{ \alpha^2 n^6} \right) \right)$.
In contrast, \textsf{KC-FB} adaptively allocates the budget to the remaining pairs, which enables us to query essential pairs of elements, i.e., pairs whose estimated similarity values affect the behavior of cluster construction, more times than $\lfloor T/m\rfloor$. This leads to a better performance in the cost of clustering in practice (see Section~\ref{sec:experiments}).

 \paragraph{Comparison with existing PE-CMAB methods in the FB setting.}
In the literature of PE-CMAB, the FB setting presents even more computational challenges and a scarcity of theoretical results.  The current state-of-the-art algorithms~\citep{Audibert2010,Chen2014, Du+2021} suffer from one or more of the following issues: (i) inability to handle a partition structure in correlation clustering, (ii) requiring exponential running time, and (iii) lacking any approximation guarantees when the underlying problem is NP-hard. By leveraging the properties of $\ballsQwickCluster$, our approach ensures the polynomial-time complexity of $O(T+n^2)$ while guaranteeing that the probability of obtaining a well-approximate solution exponentially increases with the budget $T$, along with instance-dependent analysis.

\section{Experimental evaluation}
\label{sec:experiments}

{\renewcommand\topfraction{0.7}
\renewcommand\bottomfraction{0.7}
\renewcommand\textfraction{0.2}
\renewcommand\floatpagefraction{0.2}
\setlength\floatsep{.5\baselineskip plus 3pt minus 2pt}
\setlength\textfloatsep{.5\baselineskip plus 3pt minus 2pt}
\setlength\intextsep{.5\baselineskip plus 3pt minus 2pt}

In this section, we evaluate the performance of our proposed algorithms, \textsf{KC-FC} and \textsf{KC-FB}, using various datasets, providing empirical evidence to support our theoretical findings.

\begin{wraptable}{r}{0.55\textwidth} 
\vspace{-3mm}
    \centering
    \caption{Real-world graphs used in our experiments.}
    \label{tab:graphs}
    \scalebox{0.71}{
        \begin{tabular}{lrrl}
        \toprule
        Name &\# of vertices  &\# of edges  & Description  \\
        \midrule
        \texttt{Lesmis}         & 77 & 254  & Co-appearance network\\
        \texttt{Adjnoun}        &112 & 425  & Word-adjacency network\\
        \texttt{Football}       &115 & 613  & Sports team network \\
        \texttt{Jazz}           &198 & 2,742 & Social network\\
        \texttt{Email}          &1,133 & 5,451 & Communication network\\
        \texttt{ego-Facebook}   &4,039 & 88,234 & Social network  \\
        \texttt{Wiki-Vote}      &7,066 & 100,736 & Wikipedia voting network \\
        \bottomrule
        \end{tabular}
    }
\end{wraptable}


\spara{Datasets.}
We use publicly-available real-world graphs presented in Table~\ref{tab:graphs}.
In the FC setting, to observe the behavior of the sample complexity with respect to the hidden minimum gap $\Delta_\mathrm{min}$ in \eqref{def:fixedconfgaps},
we generate our instances as follows.
For each graph, we vary the lower bound on $\Delta_\mathrm{min}$, which we denote by $\mathrm{LB}_{\Delta_\mathrm{min}}$, in $\{0.10, 0.15, 0.20, \dots, 0.50\}$.
For each pair of vertices $u,v$, we set $\w(u,v)=\mathrm{uniform}[0.5+\mathrm{LB}_{\Delta_\mathrm{min}}, 1]$ if $u,v$ have an edge in the graph,
and $\w(u,v)=\mathrm{uniform}[0, 0.5-\mathrm{LB}_{\Delta_\mathrm{min}}]$ otherwise,
where $\mathrm{uniform}[a,b]$ is the value drawn from the interval $[a,b]$ uniformly at random.
 On the other hand, in the FB setting, we employ a more realistic setting: For each graph, our problem instance is generated
 by embedding the vertices into a $d$-dimensional Euclidean space using node2vec~\citep{Grover+16}, obtaining a vector $\mathrm{vec}(v)\in \mathbb{R}^d$ for each vertex $v$.
 Specifically, we used the publicly-available Python module of node2vec\footnote{\url{https://pypi.org/project/node2vec/}} with default parameter settings (particularly $d=64$).
 Then, define the unknown similarity of each pair of vertices $u,v$ as
$\mathrm{s}(u,v)=\frac{\mathrm{sim}_{\mathrm{cos}}(\mathrm{vec}(u),\mathrm{vec}(v))-\mathrm{min\_cos}}{\mathrm{max\_cos} - \mathrm{min\_cos}}\in [0,1]$, where $\mathrm{min\_cos}$ and $\mathrm{max\_cos}$ are the minimal and maximal cosine similarities, respectively, among all pairs of vertices. We note that $\mathrm{max\_cos} > \mathrm{min\_cos}$ holds for all instances. 
In all experiments,
noisy feedback when querying a pair $e \in E$ is generated by
a Bernoulli distribution with mean $\similarity(e)$.

\spara{Baselines.} We compare our methods with \textsf{Uniform-FC}  in the FC setting and \textsf{Uniform-FB}   in the FB setting, whose pseudocode and full analysis are given in \OnlyInFull{Appendix~\ref{sec:uniformdsampling}}\OnlyInShort{the supplementary material}.
\textsf{Uniform-FC} pulls each $e \in E$ for $\lceil\frac{18m^2}{\epsilon^2} \log \frac{2m}{\delta} \rceil$ times and employs $\ballsQwickCluster$ with respect to the empirical similarity, while \textsf{Uniform-FB} is its adaption to the FB setting.
Moreover, we compare the cost of clustering of our algorithms with that of $\ballsQwickCluster$ having access
to the unknown (true) similarity, which is regarded as the stronger baseline than other $\ballsQwickCluster$-based methods for the binary case~\citep{local_corr, Bressan+2019, garciasoriano2020query, Silwal+23}.

\spara{Machine and code.}
The experiments were performed on a machine with Apple M1 Chip and 16~GB RAM. The code was written in Python~3, which is available online.\footnote{\url{https://github.com/atsushi-miyauchi/CC-Bandits}}

\spara{Performance of \textsf{KC-FC}.}
We evaluate the performance of algorithms in terms of not only the cost of clustering but also the sample complexity.
In both \textsf{KC-FC} and \textsf{Uniform-FC}, we set $\epsilon=\sqrt{n}$ allowing each element to make only $1/\sqrt{n}$ mistakes, and $\delta=0.01$ following a standard choice in PE-MAB.
Taking into account the limited scalability of the algorithms,
we only use the instances with $n<\text{1,000}$.
In particular, as will be shown later, \textsf{Uniform-FC} requires a large number of samples, which makes the algorithm prohibitive even for quite small instances.
Therefore, we do not run the algorithm and just report the sample complexity, which can be calculated without running it.
For each  $\mathrm{LB}_{\Delta_\mathrm{min}}$, we run both  \textsf{KC-FC} and \textsf{KwikCluster} having access to the unknown similarity 100 times and report the average value and the standard deviation.

\begin{figure}[htb]
\centering
\includegraphics[width=0.24\textwidth]{./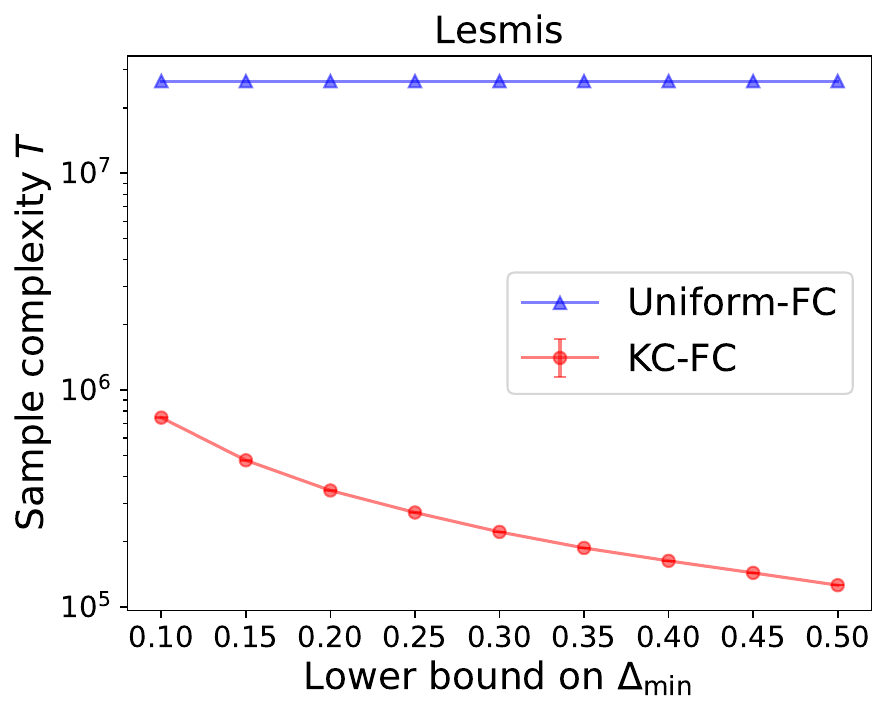}
\includegraphics[width=0.24\textwidth]{./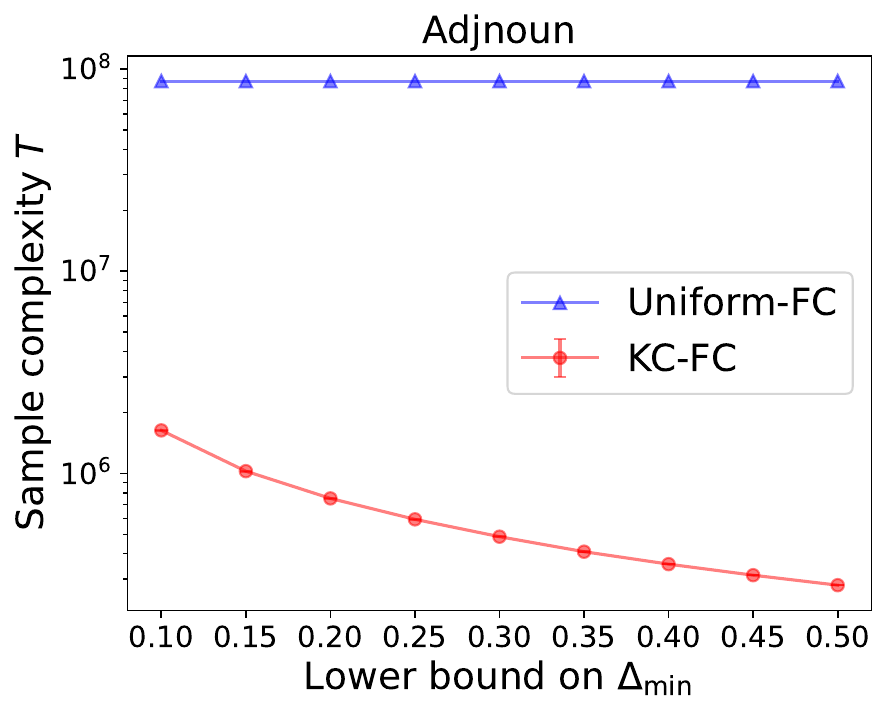}
\includegraphics[width=0.24\textwidth]{./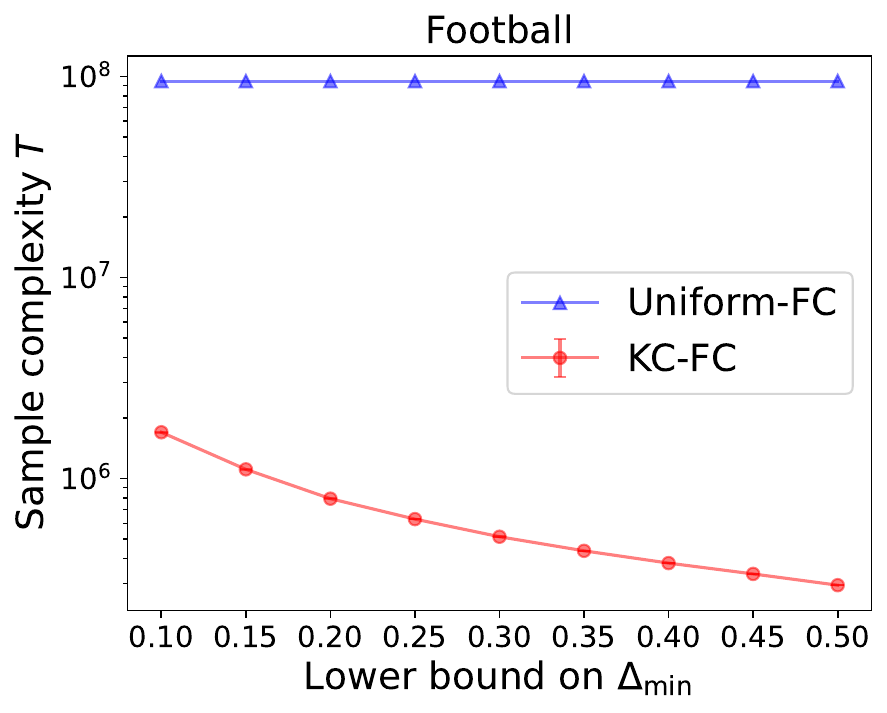}
\includegraphics[width=0.24\textwidth]{./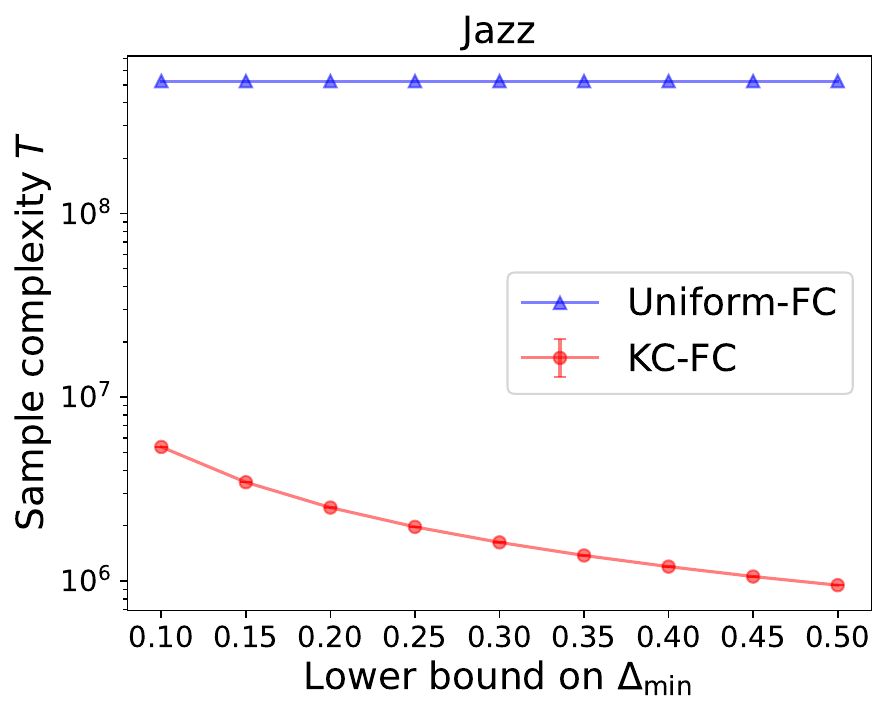}
\caption{Sample complexity of \textsf{KC-FC} \& \textsf{Uniform-FC}.}
\label{fig:FC_small_sample_complexity}
\end{figure}

\begin{figure}[t]
\centering
\includegraphics[width=0.24\textwidth]{./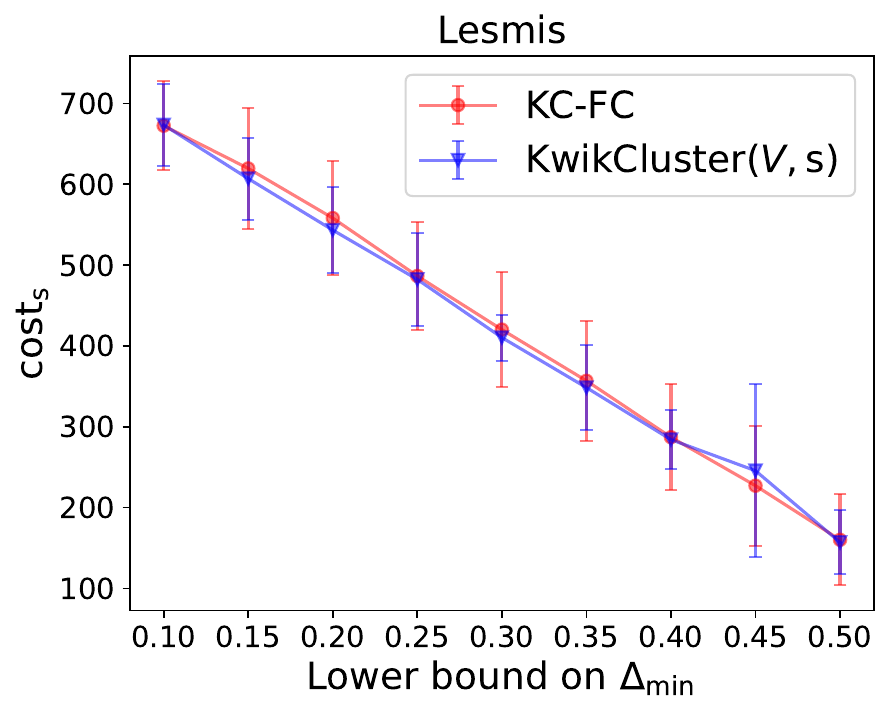}
\includegraphics[width=0.24\textwidth]{./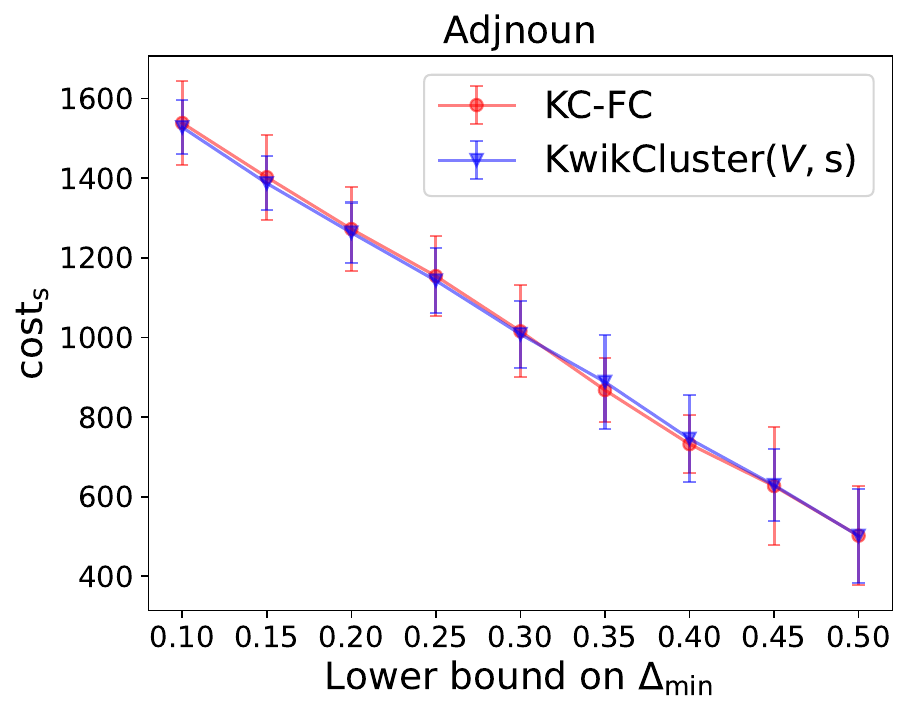}
\includegraphics[width=0.24\textwidth]{./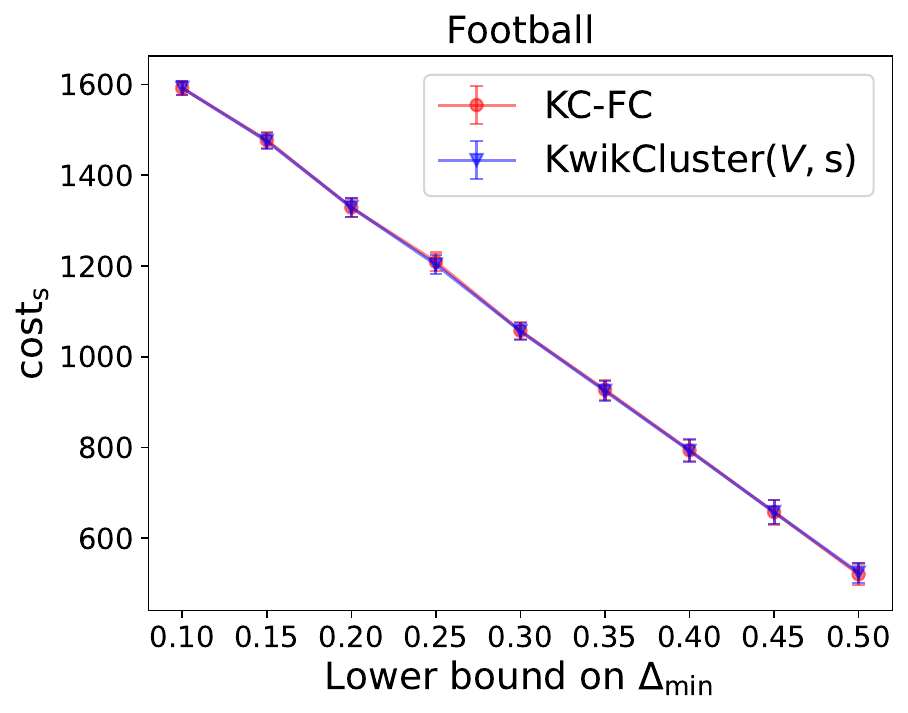}
\includegraphics[width=0.24\textwidth]{./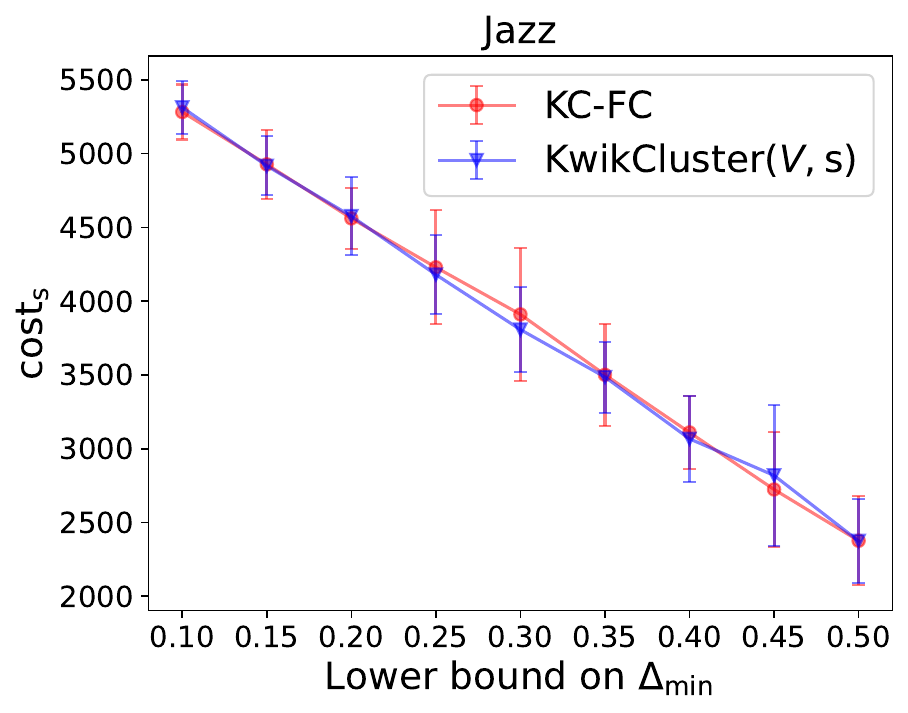}
\caption{Cost of clustering of \textsf{KC-FC} \& \textsf{KwikCluster} having the access to the unknown similarity.}
\label{fig:FC_small_cost}
\end{figure}

The results are depicted in Figures~\ref{fig:FC_small_sample_complexity} and~\ref{fig:FC_small_cost}.
As can be seen, the sample complexity of \textsf{KC-FC} is much smaller than that of \textsf{Uniform-FC}.
In fact, the sample complexity of \textsf{Uniform-FC} makes the algorithm prohibitive even for very small instances.
Moreover, consistent with the theoretical analysis, as (the lower bound $\mathrm{LB}_{\Delta_\mathrm{min}}$ on) $\Delta_\mathrm{min}$ increases, the sample complexity of \textsf{KC-FC} becomes smaller.
This desirable property is not possessed by \textsf{Uniform-FC}.
Remarkably, looking at Figure~\ref{fig:FC_small_cost}, we see that \textsf{KC-FC} outputs a clustering whose quality is comparable with that of \textsf{KwikCluster} having access to the unknown similarity.

\begin{table}[h!]
\centering
\vspace{-3mm}
\caption{Cost of clustering of \textsf{KC-FB} \& baselines ($n\geq \text{1,000}$).}\label{tab:FB_large}
\scalebox{0.75}{
\begin{tabular}{lrrr}
\toprule
Name & \textsf{KC-FB} & \textsf{Uniform-FB} & \textsf{KwikCluster}$(V,\w)$\\
\midrule
\texttt{Email} & 218k$\pm$1.1k         &221k$\pm$0.5k          &209k$\pm$0.5k         \\
\texttt{ego-Facebook} &3,716k$\pm$36.5k          &3,780k$\pm$29.6k        &3,373k$\pm$59.8k  \\
\texttt{Wiki-Vote} &10,222k$\pm$45.5k          & 10,428k$\pm$32.0k        &9,749k$\pm$34.7k        \\
\bottomrule
\end{tabular}
}
\end{table} 

\spara{Performance of \textsf{KC-FB}.}
Here we evaluate the performance of \textsf{KC-FB}.
For small instances with $n<\text{1,000}$, we vary $T$ in $\{n^{2.1},n^{2.2},\dots, n^{3.0}\}$
and observe the cost of clustering with respect to the budget $T$.
For large instances with $n\geq \text{1,000}$, we fix $T=n^{2.2}$ for scalability.
For each instance and $T$, we run both \textsf{KC-FB} and \textsf{Uniform-FB} 100 times
and report the average value and the standard deviation.
As \textsf{KwikCluster} having access to the unknown similarity is independent of $T$, we just run it 100 times for each instance.

\begin{figure}[htb]
\centering
\includegraphics[width=0.24\textwidth]{./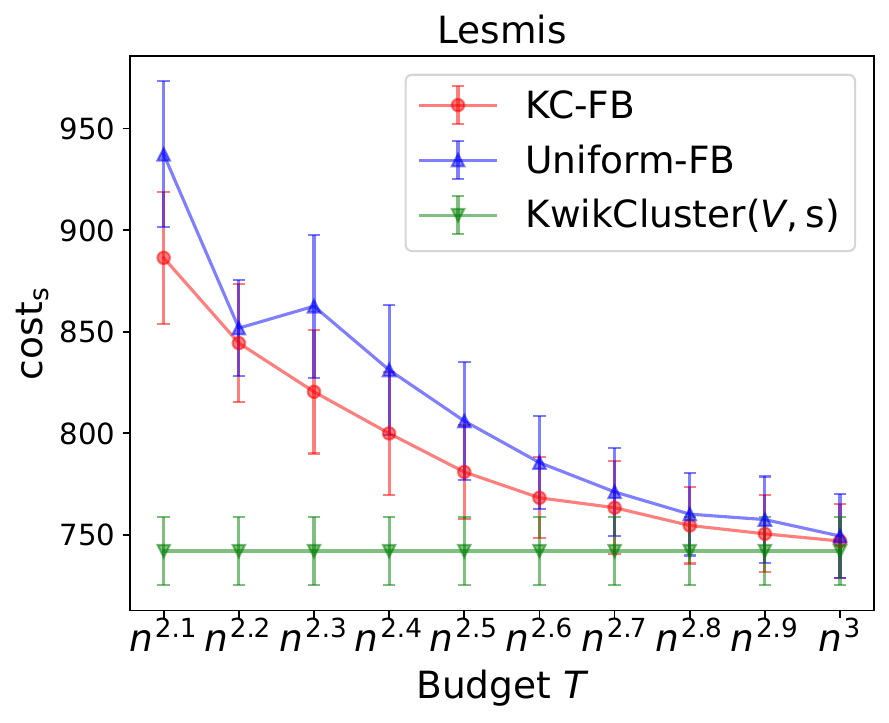}
\includegraphics[width=0.24\textwidth]{./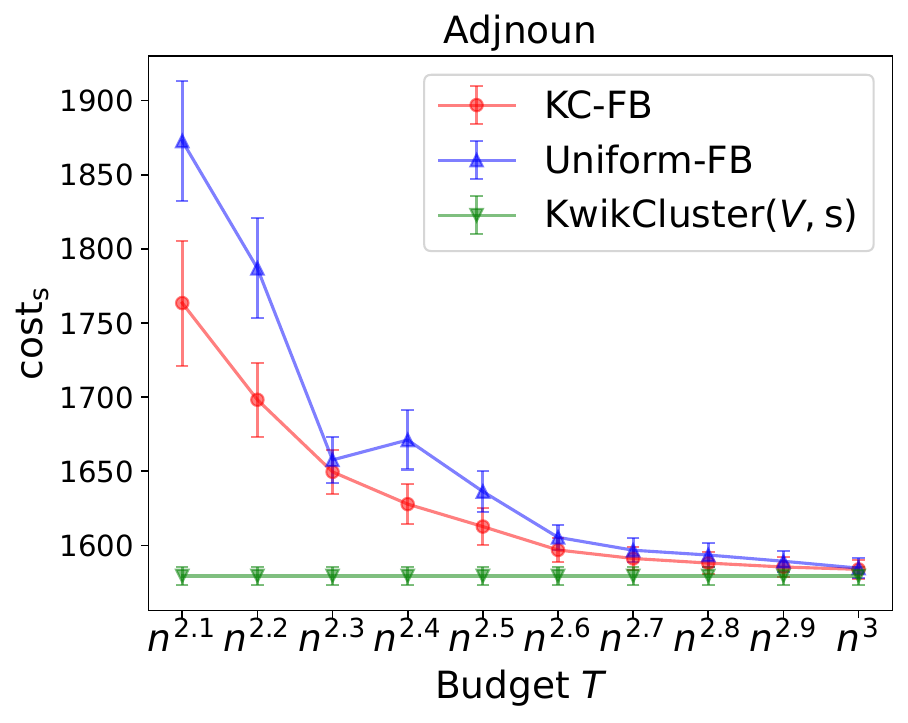}
\includegraphics[width=0.24\textwidth]{./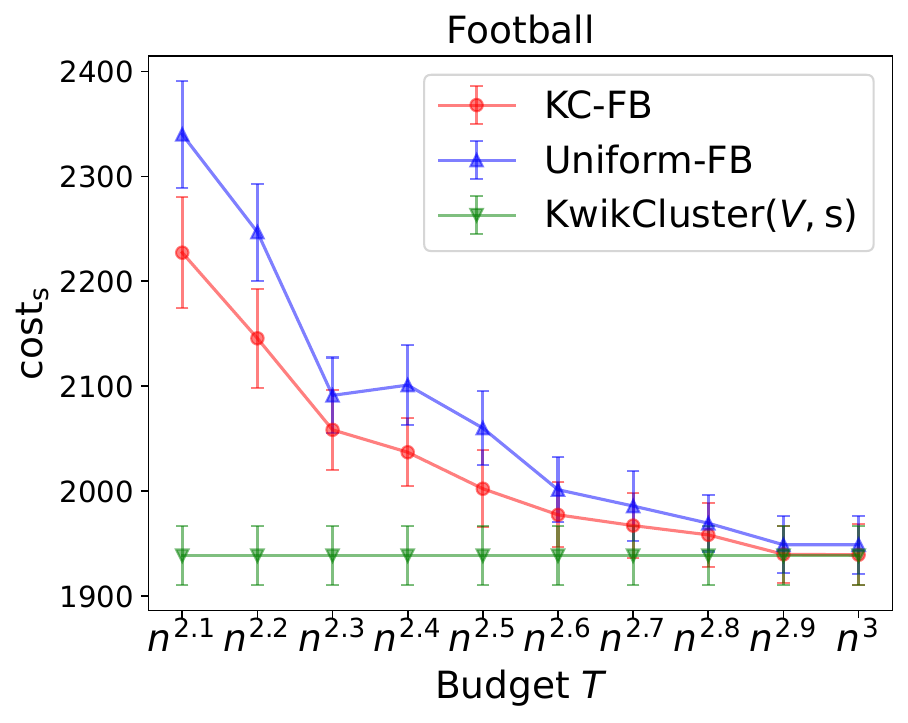}
\includegraphics[width=0.24\textwidth]{./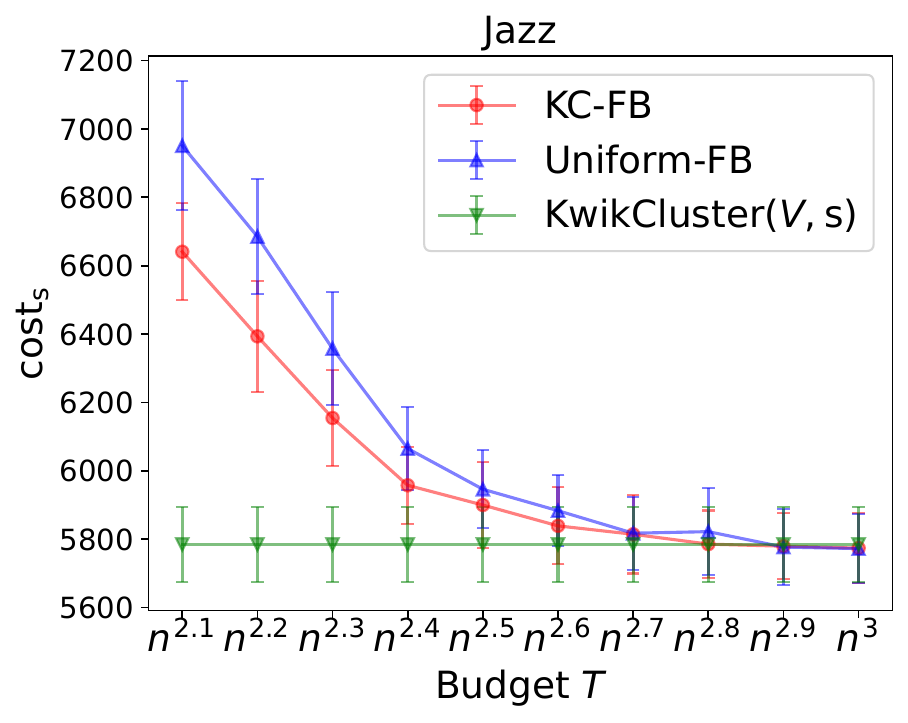}
\caption{Cost of clustering of \textsf{KC-FB} \& baselines ($n<\text{1,000}$).}\label{fig:FB_small}
\end{figure}

The results are shown in Figure~\ref{fig:FB_small} and Table~\ref{tab:FB_large}. 
As can be seen, \textsf{KC-FB} outperforms the baseline method \textsf{Uniform-FB}.
In fact, for all instances and almost all values of $T$, \textsf{KC-FB} outputs a better clustering than that of \textsf{Uniform-FB}.
We can see that this superiority comes from the fact that \textsf{KC-FB} estimates the unknown similarity better than \textsf{Uniform-FB} thanks to its sophisticated sampling strategy.
Indeed, \textsf{KwikCluster} having access to the unknown similarity showcases the best performance, verifying the importance of the precise estimation of the unknown similarity.

}

\section{Conclusions}
\label{sec:conclusions}


We studied the online learning problems of correlation clustering, where the similarity function is initially unknown and only noisy feedback is observed.
For the FC setting, we devised \textsf{KC-FC} and proved the upper bound of the number of queries required to find a clustering whose cost is at most $5\cdot \opti+ \epsilon$ with high probability.
For the FB setting, we devised \textsf{KC-FB} and showed that the error probability of the expected cost being worse than $5\cdot \opti+ \epsilon$ decays exponentially with budget $T$.
Importantly, our algorithms are the first examples of PE-CMAB with NP-hard offline problems.
One future work, yet a significant challenge, is to derive information-theoretic lower bounds of PE-CMAB in the case where the offline problem is NP-hard. Investigating other variants of correlation clustering or exploring the case where the variance of random feedback differs across pairs, namely heteroscedastic noise,  would also be worthwhile directions.

\clearpage
\section*{Acknowledgment}
The work of Yuko Kuroki is supported by Japan Science and Technology Agency (JST) Strategic Basic Research Programs PRESTO ``R\&D Process Innovation by AI and Robotics: Technical Foundations and Practical Applications'' grant number JPMJPR24T2, and
was partially supported by Microsoft Research
Asia and JST Strategic Basic Research Programs ACT-X  grant number JPMJAX200E while she was at The University of Tokyo.
The authors would like to thank the anonymous reviewers for their insightful comments and useful feedback.







 \bibliographystyle{apalike}
\bibliography{all, reference}




\newpage
\appendix
\section*{Appendix}
\section{Additional comparison with other clustering models}\label{appendix:detailed related work}

\paragraph{Cluster recovering with noisy same-cluster queries.} Another line of studies has focused on clustering reconstruction with noisy same-cluster queries, which was first proposed by \citet{Mazumdar+2017} and further investigated by~\citet{Larsen+2020,Peng+21,Pia+22,tsourakakis2020predicting} and \citet{Xia+22}.
In this model, given a set of $n$ elements, the goal is to recover the underlying ground-truth $k$-clustering by asking pairwise queries to an oracle, which tells us if the two elements belong to the same cluster, but whose answer is correct only with probability $\frac{1}{2}+\frac{\delta}{2}$.
Recently, \citet{Gupta+24} first considered a noisy and inconsistent oracle, in contrast to a consistent oracle that returns the same answer when queried.
Although this clustering reconstruction problem shares the intuition with correlation clustering, there are also important differences which do not allow to transfer algorithmic results from one to the other. Firstly, in correlation clustering the input information might be inconsistent (e.g., $a$ is very similar to $b$, which is very similar to $c$, but $a$ is not similar to $c$),
instead in clustering reconstruction this is not possible: if $a$ is in the same cluster of $b$ and $b$ is in the same cluster of $c$, then $a$ is in the same cluster of $c$. Secondly, the aim is to reconstruct the exact underlying clustering, while we aim at minimizing the cost function in \Cref{equation:correlation-clustering}. Lastly and more importantly, the number of clusters $k$ is part of the input to the problem, while in correlation clustering it is unknown.
Therefore, the theoretical results and techniques for solving the clustering reconstruction problem cannot be directly applied to correlation clustering.

\paragraph{Query-based correlation clustering.} 
As fully discussed in the main text, our work lies in query-efficient correlation clustering, for which we utilize the methodology of PE-CMAB. \Cref{tab:comparison} summarizes how existing query-based settings differ from our attempt.
Note that the oracle in \citet{AronssonChehreghani2024, Aronsson2024effective} is assumed to return the true value of $\similarity(x,y)$ with probability $1-\gamma$ and a noisy value with probability $\gamma$, which is different from our models; we only observe a random variable independently sampled from an unknown distribution with mean $\similarity(x,y) \in [0,1]$.
While one might consider using majority voting on repeated queries to handle noise when the underlying distribution is Bernoulli, this approach lacks any approximation guarantees and query complexity bounds. Moreover, for $R$-sub-Gaussian noise, majority voting is not well-defined. Instead, using sample mean estimates, as done in PE-MAB methods, is standard. Our approach leverages these principles, ensuring a $(5, \epsilon)$-approximation guarantee with fewer queries and statistical guarantees.

\newcolumntype{C}{>{\centering\arraybackslash}m{3cm}}

\begin{table}[h]
\centering
\caption{Different problem settings in correlation clustering with queries.}
\resizebox{0.98\textwidth}{!}{
\begin{tabular}{|c|p{1.8cm}|p{1.8cm}|p{4.9cm}|p{2.0cm}|}
\hline
\textbf{Feature/Study} & Similarity Function & Similarity Type & Oracle & Theoretical Guarantee  \\ \hline
\makecell{\citet{ailon18approximate} \\ \citet{SahaSubramanian2019}} & Known & Binary & Strong with access to same-cluster queries in the optimal clustering & \checkmark\\ \hline
\makecell{\citet{local_corr} \\ \citet{Bressan+2019} \\ \citet{garciasoriano2020query}} & Unknown & Binary & Strong with access to the true value of $\similarity(x,y) \in \{0,1\}$ & \checkmark\\ \hline
\citet{Silwal+23} & Unknown & Binary & Both strong  with access to the true value of $\similarity(x,y) \in \{0,1\}$ and noisy & \checkmark\\ \hline
\citet{AronssonChehreghani2024, Aronsson2024effective} & Unknown & Weighted & Noisy (true value of $\similarity(x,y)$ is returned with probability $1-\gamma$ and a noisy value is retuned otherwise)& Not provided \\ \hline
\textbf{Our Work} &\textbf{Unknown} & \textbf{Weighted} & \textbf{Noisy (stochastic feedback)} & \checkmark\\ \hline
\end{tabular}
}
\label{tab:comparison}
\end{table}

\section{Pseudocode of \ballsQwickCluster\ and \Cref{alg:sequential TBHS}}\label{appendix:kwickcluster}

We detail the pseudocode of $\ballsQwickCluster$~\citep{balls} in \Cref{alg:pivot}.
The approximation guarantee of  $\ballsQwickCluster$ is $5$ for the weighted similarity case (and $3$ for the restricted binary similarity case).
We also detail the pseudocode of \Cref{alg:sequential TBHS}, which sequentially uses \textsf{TB-HS} at each phase in the framework of \textsf{KC-FC}.

%
%

\begin{algorithm}[h]
\caption{\ballsQwickCluster$(V,\mathrm{s})$}
\label{alg:pivot}
\SetKwInOut{Input}{Input}
\SetKwInOut{Output}{Output}
 \setcounter{AlgoLine}{0}
     \Input{\ Set $V$ of $n$ objects, and similarity function $\w$}
 $\mathcal{C}\leftarrow \emptyset$;
 
 \While{$|V|>0$}{
   Pick a pivot $p\in V$ uniformly at random;
   
   $\mathcal{C}\leftarrow \mathcal{C}\cup \{C_p\}$ where $C_p := \{p\}\cup \{u\in V: \w(p,u)>0.5\}$;
   
   $V \leftarrow V\setminus C_p$;

   }
   \Return{$\mathcal{C}$}
\end{algorithm}

\begin{algorithm}[t]
\caption{\textsf{KC-FC} variant with sequential use of  \textsf{TB-HS}}
\label{alg:sequential TBHS}
 	\SetKwInOut{Input}{Input}
 	\SetKwInOut{Output}{Output}
	\Input{\ Confidence level $\delta$, set $V$ of $n$ objects, and error $\epsilon$}

	$E_1 \leftarrow E $, $V_1 \leftarrow V $, $r \leftarrow 1$, and $\mathcal{C}_\mathrm{out}\leftarrow \emptyset$;


    \While{$|V_r|>0$}{

    Pick a pivot $p_r\in V_r$ uniformly at random;

      Let $I_{V_r}(p_r) \subseteq E$ be the set of pairs between the pivot $p_r$ and its neighbors in $V_r$;

     $\epsilon_r^\prime:=\epsilon/(12|I_{V_r}(p_r)|)$;

        Compute $\hatgoodprime^{(r)}$ by \textsf{TB-HS} (Algorithm~\ref{alg:GAI}) with the input of error $\epsilon_r^\prime$ , confidence level $\delta/n$, and the set of pairs $I_{V_r}(p_r)$;

       Define $\widehat{\Gamma}^{(r)}(v):=\{u \in V : \{u,v\} \in \hatgoodprime^{(r)}  \}$; \label{algline:qcfc}

        $\mathcal{C}_\mathrm{out}\leftarrow \mathcal{C}_\mathrm{out}\cup \{C_r\}$, where $C_r:=(\{p_r\} \cup \widehat{\Gamma}^{(r)}(p_r))\cap V_r$;

    $V_{r+1} \leftarrow  V_{r} \setminus C_r$ and $r \leftarrow r+1$;

    }

    \Return{ $\mathcal{C}_\mathrm{out}$\; \label{algline:qcfcend}}
\end{algorithm}

\section{Analysis of \textsf{KC-FC}}\label{sec:appendix_FC}

In this section, we provide a complete proof of Theorem~\ref{thm:fixedconfmain} in \Cref{sec:fixed_confidence}. 
In particular, \Cref{lem:fixedconfidence_rescont} guarantees the accuracy of the subroutine \textsf{TB-HS}, and based on this, \Cref{lemFC:approx} assures the $(5, \epsilon)$-approximation using the properties of $\ballsQwickCluster$. Moreover, \Cref{lemma:fixconfboundGAI} establishes an upper bound crucial for the sample complexity via novel analysis dependent on $\epsilon$ and $\Delta_\mathrm{min}$ (\Cref{lemmaFC:keybound}). 
Finally, by combining the sample complexity required by subroutine \textsf{TB-HS} (\Cref{lemma:fixconfboundGAI}) and the output guarantee of \textsf{KC-FC} (\Cref{lemFC:approx}), \Cref{thm:fixedconfmain} is demonstrated.

\subsection{Basic lemmas}
We first introduce the Hoeffding inequality, which will be frequently used in our proof.
Note that we consider Bernoulli distribution for the sake of simplicity, but our results would carry on for $R$-sub-Gaussian distribution by simply adjusting the Hoeffding inequality for the case accordingly.

\begin{lemma}[Hoeffding inequality for  bounded random variables]\label{lemma:Hoeffding_be}
Let $X_1, \ldots, X_k$ be $k$ independent random variables such that,
$\mathbb{E}[X_i] = \mu$ and $a \leq X_i \leq b$ for each $i \in [k]$. 
Let $\bar{X}= \frac{1}{k} \sum_{i=1}^k X_i$ denote the average of these random variables.
Then, for any $\lambda >0$, we have
\[
\Pr \left[ \bar{X} \leq \mu-\lambda \right] \leq  \exp \left( - \frac{2k\lambda^2}{(b-a)^2} \right).
\]
\end{lemma}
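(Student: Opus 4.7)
The plan is to prove this via the standard Chernoff--Cram\'er method, namely by controlling the moment generating function of each centered variable and then optimizing the resulting exponential bound. First I would reduce the lower tail to an upper tail by writing
\[
\Pr\!\left[\bar X \leq \mu - \lambda\right] = \Pr\!\left[-(\bar X - \mu) \geq \lambda\right],
\]
and then, for any $s > 0$, apply Markov's inequality to the random variable $\exp\!\left(-s(\bar X - \mu)\right)$ to obtain
\[
\Pr\!\left[\bar X \leq \mu - \lambda\right] \leq e^{-s\lambda}\, \mathbb{E}\!\left[\exp\!\left(-s(\bar X - \mu)\right)\right].
\]
By independence of the $X_i$ and the definition $\bar X = \frac{1}{k}\sum_i X_i$, the right-hand expectation factors as $\prod_{i=1}^{k} \mathbb{E}\!\left[\exp\!\left(-\tfrac{s}{k}(X_i - \mu)\right)\right]$.

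The key technical ingredient will be Hoeffding's moment generating function lemma: for any random variable $Y$ with $\mathbb{E}[Y]=0$ and $a' \leq Y \leq b'$, one has $\mathbb{E}[e^{tY}] \leq \exp\!\left(t^2 (b'-a')^2 / 8\right)$ for every $t \in \mathbb{R}$. This I would derive by using the convexity of $y \mapsto e^{ty}$ to bound $e^{tY}$ by the linear interpolation between its values at $a'$ and $b'$, taking expectations (using $\mathbb{E}[Y]=0$), and then showing that the resulting function of $t$ is dominated by $\exp(t^2(b'-a')^2/8)$ via a Taylor expansion of its logarithm (whose second derivative is bounded by $(b'-a')^2/4$, with vanishing zeroth- and first-order terms at the origin). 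Applying this with $Y = X_i - \mu \in [a-\mu, b-\mu]$ (so $b'-a' = b-a$) and $t = -s/k$ yields
\[
\mathbb{E}\!\left[\exp\!\left(-\tfrac{s}{k}(X_i-\mu)\right)\right] \leq \exp\!\left(\tfrac{s^2 (b-a)^2}{8k^2}\right).
\]

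Multiplying these $k$ bounds and combining with the Markov step gives
\[
\Pr\!\left[\bar X \leq \mu - \lambda\right] \leq \exp\!\left(-s\lambda + \tfrac{s^2 (b-a)^2}{8k}\right).
\]
The final step is to optimize the exponent over $s > 0$: differentiating with respect to $s$ and setting the derivative to zero yields the minimizer $s^\star = 4k\lambda/(b-a)^2$, and substituting back produces the exponent $-2k\lambda^2/(b-a)^2$, completing the proof.

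The main obstacle is the proof of Hoeffding's moment generating function lemma itself, since everything else in this argument is routine bookkeeping (the Chernoff reduction and the calculus-level optimization of a quadratic in $s$). If one wishes to avoid proving Hoeffding's lemma from scratch, it can simply be cited as a classical result, in which case the entire proof reduces to the two display equations above and the one-line optimization.
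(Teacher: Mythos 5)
Your proof is correct: the Chernoff reduction, the factorization by independence, the application of Hoeffding's moment generating function lemma with $t=-s/k$, and the optimization $s^\star = 4k\lambda/(b-a)^2$ (which indeed yields the exponent $-2k\lambda^2/(b-a)^2$) are all the standard argument, and your sketch of Hoeffding's lemma itself (convexity interpolation plus a second-order Taylor bound on the log of the resulting expression) is the usual correct derivation. For comparison: the paper does not prove this lemma at all --- it states it as a classical fact (the Hoeffding inequality) and uses it as a black box throughout the appendix. So the closing option you mention, namely citing Hoeffding's lemma (or the whole inequality) as classical rather than reproving it, is exactly the paper's treatment; your fuller writeup simply makes the self-contained version explicit, at the cost of carrying the one genuinely nontrivial ingredient (the MGF bound) yourself.
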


The next lemma presents the probability that some random event happens, which will be used later. 
\begin{lemma}\label{lemma:randomevent}
Let $\widehat{\w}_{e,k}$ be the empirical mean of the rewards when $e$ has been pulled $k$ times.
    For each $e \in [m]$ and $k$, define the random event $\mathcal{E}_{e,k}$ as follows:
    \begin{align*}\label{event_ek}
        \mathcal{E}_{e,k}:=\left\{|\w(e)-\widehat{\w}_{e,k}|  < \sqrt{\frac{ \log(4m k^2/\delta) }{2k}} \right\}.
    \end{align*}
  Let $\mathcal{E}_{k}$ be the random event that  for all $e \in [m]$, the random event $\mathcal{E}_{e,k}$ happens. 
    Then we have
    \begin{align*}
    \Pr \left[ \bigcap_{k=1}^{\infty} \mathcal{E}_{k} \right] \geq 1- \delta.
    \end{align*}
\end{lemma}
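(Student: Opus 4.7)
The plan is a standard two-level union bound argument built on top of the Hoeffding inequality stated in \Cref{lemma:Hoeffding_be}. First, I would fix a pair $e \in [m]$ and a round count $k \geq 1$, and apply Hoeffding to the two-sided deviation of $\widehat{\w}_{e,k}$ from its mean $\w(e)$. Since each sample lies in $[0,1]$, choosing $\lambda = \sqrt{\log(4mk^2/\delta)/(2k)}$ yields
\begin{equation*}
\Pr[\mathcal{E}_{e,k}^c] \;=\; \Pr\bigl[|\widehat{\w}_{e,k} - \w(e)| \geq \lambda\bigr] \;\leq\; 2\exp(-2k\lambda^2) \;=\; 2\exp\bigl(-\log(4mk^2/\delta)\bigr) \;=\; \frac{\delta}{2mk^2}.
\end{equation*}
(A minor point is that \Cref{lemma:Hoeffding_be} as stated is one-sided, so I would invoke it twice, once for each tail, absorbing the factor of $2$ into the bound above.)

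Next, I would union bound over the $m$ pairs to control $\Pr[\mathcal{E}_k^c]$, obtaining $\Pr[\mathcal{E}_k^c] \leq m \cdot \frac{\delta}{2mk^2} = \frac{\delta}{2k^2}$. Finally, I would union bound over all $k \geq 1$ and use the convergence of $\sum_{k=1}^{\infty} k^{-2} = \pi^2/6$:
\begin{equation*}
\Pr\Bigl[\bigcup_{k=1}^{\infty} \mathcal{E}_k^c\Bigr] \;\leq\; \sum_{k=1}^{\infty} \frac{\delta}{2k^2} \;=\; \frac{\delta \pi^2}{12} \;\leq\; \delta,
\end{equation*}
since $\pi^2/12 < 1$. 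Taking complements gives the claimed bound $\Pr[\bigcap_{k \geq 1} \mathcal{E}_k] \geq 1 - \delta$.

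There is no serious obstacle here; the proof is essentially a calibration of the confidence radius. The only design choice worth highlighting is that the factor $k^2$ inside the logarithm is precisely what makes the per-round failure probabilities summable across all $k$, which in turn is what allows the anytime guarantee required by the adaptive stopping rule of \textsf{TB-HS}. I would make sure this point is visible in the proof, since the lemma will subsequently be used to justify that throughout the entire (random, data-dependent) run of \textsf{TB-HS}, every empirical mean lies within its confidence radius of the true mean simultaneously for all pairs and all query counts, which is the clean high-probability event on which the correctness analyses of \Cref{lem:fixedconfidence_rescont} and \Cref{lemFC:approx} will be built.
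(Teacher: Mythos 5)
Your proof is correct and follows essentially the same argument as the paper: Hoeffding gives $\Pr[\lnot\mathcal{E}_{e,k}]\leq \delta/(2mk^2)$ for each pair $(e,k)$, and a double union bound over $e\in[m]$ and $k\geq 1$ together with $\sum_{k\geq 1}k^{-2}=\pi^2/6$ yields the claim (the paper merely performs the two union bounds in the opposite order, which is immaterial). Your remark about invoking the one-sided Hoeffding bound twice for the two-sided deviation is a correct and careful detail that the paper leaves implicit.
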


\begin{proof}[Proof of Lemma~\ref{lemma:randomevent}]
We have 
\begin{align*}
       \Pr \left[ \bigcup_{k=1}^{\infty}  \lnot \mathcal{E}_{e,k} \right] &=  \sum_{k=1}^\infty  \Pr \left[ \  |\w(e)-\widehat{\w}_{e,k} |  \geq  \sqrt{\frac{ \log(4m k^2/\delta) }{2k}} \right]
       \leq \sum_{k=1}^\infty \frac{\delta}{2m k^2}=\frac{\pi^2 \delta}{12 m} \leq \frac{\delta}{m},
\end{align*}
where the first inequality follows from the Hoeffding inequality (Lemma~\ref{lemma:Hoeffding_be}).
Therefore, by taking the union bound, we have
\[
\Pr \left[\bigcap_{k=1}^{\infty} \mathcal{E}_{k} \right] \geq 1- \Pr \left[\bigcup_{e \in [m]} \bigcup_{k=1}^{\infty}    \lnot \mathcal{E}_{e,k} \right]  \geq 1-\delta.
\]

\end{proof}

\subsection{
 Approximation guarantee
}

We provide the following lemmas for guaranteeing the quality of the output.


\begin{lemma}\label{lem:fixedconfidence_good_bad_prop}
Let $\epsilon \in (0,0.5)$ and $\delta\in (0,1)$.
    \textsf{TB-HS} (Algorithm~\ref{alg:GAI}) with parameter $\epsilon,\delta$ outputs, with probability at least $1-\delta$,  $\hatgood$ and $\hatbad$ such that
    \begin{align*}
        &\w(e) \geq \threshold -\epsilon \ \ \mathrm{for\ every} \  e \in \hatgood,   \\ 
        &\w(e) \leq \threshold +\epsilon \ \ \mathrm{for\ every} \  e \in \hatbad, \\
        &\hatgood \cup \hatbad=E,\  \hatgood \cap \hatbad=\emptyset.
    \end{align*} 
    
\end{lemma}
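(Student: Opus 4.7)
The plan is to condition on the clean event guaranteed by Lemma~\ref{lemma:randomevent} and then read off the three conclusions directly from the structure of \textsf{TB-HS}. Concretely, by Lemma~\ref{lemma:randomevent}, with probability at least $1-\delta$ the event $\bigcap_{k=1}^\infty \mathcal{E}_k$ holds, which says that for every $e\in E$ and every value of $N_t(e)=k$, the true mean lies in the confidence interval, i.e., $|\w(e)-\widehat{\w}_t(e)|<\mathrm{rad}_t(e)$. I will work for the remainder of the proof on this event.

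Next I would verify the two inclusion-type statements. Suppose $e\in \hatgood$; then there is some round $t$ at which $e$ was flagged in line~\ref{algline:checkgoodarm}, i.e., $\underline{\w}_t(e)=\widehat{\w}_t(e)-\mathrm{rad}_t(e)\geq \threshold-\epsilon$. Under the clean event, $\w(e)>\widehat{\w}_t(e)-\mathrm{rad}_t(e)\geq \threshold-\epsilon$, which yields $\w(e)\geq \threshold-\epsilon$. The symmetric argument, using line~\ref{algline:checkbadarm} and the UCB $\overline{\w}_t(e)$, gives $\w(e)\leq \threshold+\epsilon$ for every $e\in \hatbad$.

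The remaining two set-theoretic claims are purely structural and follow from the algorithm's control flow. Whenever an arm $e$ is added to $\hatgood$ or $\hatbad$ it is immediately deleted from $E_t$, so it can never be tested against the other threshold on a later round; hence $\hatgood\cap\hatbad=\emptyset$. For $\hatgood\cup\hatbad=E$, note that every $e\in E$ is inserted in $E_m$ after initialization, and the while-loop terminates only when $|E_t|=0$; the only way an element is removed from $E_t$ is by being placed into $\hatgood$ or $\hatbad$, so upon termination every arm lies in one of the two sets.

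I do not expect any serious obstacle here: the main content has already been done in Lemma~\ref{lemma:randomevent}, and the rest is bookkeeping on the algorithm. The only mild subtlety is that termination of \textsf{TB-HS} is not guaranteed a priori (and is exactly what Lemma~\ref{lemma:fixconfboundGAI} will later control); the present lemma is a conditional statement about the output if and when the algorithm halts, and thus needs no termination argument of its own.
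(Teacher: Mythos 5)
Your proposal is correct and follows essentially the same route as the paper's proof: condition on the clean event $\bigcap_{k}\mathcal{E}_k$ from Lemma~\ref{lemma:randomevent}, read off the two similarity bounds from the confidence intervals at the round each arm is flagged in lines~\ref{algline:checkgoodarm} and~\ref{algline:checkbadarm}, and derive the partition claims from the algorithm's control flow. Your added remark that the lemma is conditional on termination (which is controlled separately by the sample-complexity analysis) is a fair clarification, but it does not change the substance of the argument.
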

\begin{proof}[Proof of Lemma~\ref{lem:fixedconfidence_good_bad_prop}]
The proof is almost straightforward by the procedure of the algorithm, line~\ref{algline:checkgoodarm} and \ref{algline:checkbadarm}, as follows. 
By Lemma~\ref{lemma:randomevent},
we have $\Pr \left[ \bigcap_{k=1}^{\infty} \mathcal{E}_{k} \right] \geq 1- \delta$.
Now we assume that  $\bigcap_{k=1}^{\infty} \mathcal{E}_{k}$ happens.
Let $t>0$ be the stopping time, where every $e \in E$ has been added to either $\hatgood$ or $\hatbad$.
For $e \in \hatgood$, from the stopping condition and the random event $\bigcap_{k=1}^{\infty} \mathcal{E}_{k}$, it is easy to see that $\w (e) \geq \widehat{\w}_{t'}(e)-\mathrm{rad}_{t'}(e) \geq \threshold-\epsilon$, where $t'$ denotes the round that arm $e$ was added to $\hatgood$.
For $e \in \hatbad$, it is also easy to see that $\w(e) \leq \widehat{\w}_{t'}(e)+\mathrm{rad}_{t'}(e) \leq \threshold+\epsilon$, where $t'$ denotes the round that arm $e$ was added to $\hatbad$.
The third condition is obvious from the stopping condition of the algorithm.
\end{proof}


\begin{lemma}\label{lem:fixedconfidence_rescont}
Let $\epsilon \in (0,0.5)$ and $\delta \in (0,1)$.
Let $\hatgood$ and \ $\hatbad$ be the output of \textsf{TB-HS} (Algorithm~\ref{alg:GAI}) with parameters $\epsilon, \delta$.
Then, with probability at least $1-\delta$,
we have that
(i) every $e \in \Ep$ is included in $\hatgood$, and
(ii) every $e \in \En$ is included in $\hatbad$.
\end{lemma}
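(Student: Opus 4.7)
The plan is to derive this lemma as an immediate contrapositive consequence of Lemma~\ref{lem:fixedconfidence_good_bad_prop}. First, I would condition on the same high-probability good event that underlies that lemma, namely $\bigcap_{k=1}^{\infty} \mathcal{E}_{k}$ from Lemma~\ref{lemma:randomevent}, which holds with probability at least $1-\delta$. On this event, the previous lemma yields simultaneously: (a) every $e \in \hatgood$ satisfies $\w(e) \geq 0.5 - \epsilon$, (b) every $e \in \hatbad$ satisfies $\w(e) \leq 0.5 + \epsilon$, and (c) the sets $\hatgood$ and $\hatbad$ partition $E$, so \textsf{TB-HS} terminates with every arm classified into one of the two buckets.

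Next, I would argue by contradiction in each direction. For part~(i), fix an arbitrary $e \in \Ep$, so that $\w(e) > 0.5 + \epsilon$ by definition. If $e \in \hatbad$, then property~(b) would force $\w(e) \leq 0.5 + \epsilon$, contradicting $e \in \Ep$. Since $\hatgood \cup \hatbad = E$ by~(c), this leaves $e \in \hatgood$, as desired. The argument for~(ii) is entirely symmetric: any $e \in \En$ satisfies $\w(e) < 0.5 - \epsilon$, whereas membership in $\hatgood$ would imply $\w(e) \geq 0.5 - \epsilon$ by~(a); hence $e \in \hatbad$.

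The main point to be careful about is that no additional randomness or union bound is required beyond the good event of Lemma~\ref{lem:fixedconfidence_good_bad_prop}, since the classifications invoked in~(a) and~(b) are the very same output sets produced by \textsf{TB-HS}; thus the $1-\delta$ probability bound transfers directly to the conclusions here. I do not anticipate a serious obstacle: the statement is essentially the contrapositive reading of the one-sided bounds~(a) and~(b) combined with the partition property~(c), and all of the probabilistic content has already been handled in establishing the previous lemma via the concentration inequality of Lemma~\ref{lemma:randomevent}.
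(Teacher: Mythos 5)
Your proposal is correct and follows essentially the same route as the paper's proof: condition on the good event $\bigcap_{k=1}^{\infty}\mathcal{E}_k$ from Lemma~\ref{lemma:randomevent} (no extra union bound needed), then derive (i) and (ii) by contradiction from the one-sided bounds and the partition property of Lemma~\ref{lem:fixedconfidence_good_bad_prop}. Nothing is missing.
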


\begin{proof}[Proof of Lemma~\ref{lem:fixedconfidence_rescont}]

We have $\Pr \left[ \bigcap_{k=1}^{\infty} \mathcal{E}_{k} \right] \geq 1- \delta$ by Lemma~\ref{lemma:randomevent} again, and we assume that $\bigcap_{k=1}^{\infty} \mathcal{E}_{k}$ happens.
Consider any $e \in \Ep$.
Suppose that  $e$ is not included in $\hatgood$.
Then, from Lemma~\ref{lem:fixedconfidence_good_bad_prop},
we see that $e \in \hatbad$, and thus $\w(e) \leq \threshold+\epsilon$, which contradicts the fact that $e \in \Ep$.
Therefore, $e$ is included in $\hatgood$.
Similarly, consider any $e \in \En$.
Suppose that  $e$ is not included in $\hatbad$.
Then, from Lemma~\ref{lem:fixedconfidence_good_bad_prop},
we see that $e \in \hatgood$, and thus $\w(e) \geq \threshold-\epsilon$, which contradicts the fact that $e \in \En$. 
Therefore, $e$ is included in $\hatbad$.

\end{proof}

Based on Lemma~\ref{lem:fixedconfidence_rescont}, we prove the following key lemma.  

\begin{lemma}[Approximation guarantee]\label{lemFC:approx}
Let $\epsilon \in (0,0.5)$ and $\delta \in (0,1)$.
   With probability at least $1-\delta$,
    the output $\mathcal{C}_\mathrm{out}$ of \textsf{KC-FC} (Algorithm~\ref{alg:fixedconfidence}), where subroutine \textsf{TB-HS} (Algorithm~\ref{alg:GAI}) is invoked with parameters $\epsilon, \delta$, is a $(5,12\epsilon|\Eeps|)$-approximate solution for instance $(V,\w)$ of the offline problem minimizing~\eqref{equation:correlation-clustering}.
\end{lemma}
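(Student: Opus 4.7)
The plan is to condition on the correctness event guaranteed by Lemma~\ref{lem:fixedconfidence_rescont}---which holds with probability at least $1-\delta$ and ensures $\Ep \subseteq \hatgood$ and $\En \subseteq \hatbad$---and then reduce \textsf{KC-FC} to a single execution of \textsf{KwikCluster} on a carefully chosen surrogate similarity function, so that the standard $5$-approximation of \textsf{KwikCluster}~\citep{balls} can be invoked and the gap between the surrogate and the true similarity can be absorbed as additive error.

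Concretely, I would introduce a surrogate similarity $\widetilde{\w}\colon E \to [0,1]$ defined by $\widetilde{\w}(e)=\w(e)$ for $e\in \Ep\cup \En$, $\widetilde{\w}(e)=0.5+\epsilon$ for $e\in \Eeps\cap \hatgood$, and $\widetilde{\w}(e)=0.5-\epsilon$ for $e\in \Eeps\cap \hatbad$ (noting that $\hatgood\cup\hatbad=E$ and $\hatgood\cap\hatbad=\emptyset$ by Lemma~\ref{lem:fixedconfidence_good_bad_prop}, so $\widetilde{\w}$ is well-defined). Under the above conditioning, the clusters produced by \textsf{KC-FC} coincide, pivot-by-pivot, with those produced by \textsf{KwikCluster} on $(V,\widetilde{\w})$: for pairs outside $\Eeps$, membership in $\hatgood$ matches the threshold test $\widetilde{\w}>0.5$ because the correctness event places every $\Ep$-pair in $\hatgood$ and every $\En$-pair in $\hatbad$; for pairs inside $\Eeps$, the construction of $\widetilde{\w}$ is tailored to reproduce the $\hatgood/\hatbad$ split via the same threshold test. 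Applying the standard analysis of \textsf{KwikCluster} to $(V,\widetilde{\w})$ then yields $\mathbb{E}[\cost_{\widetilde{\w}}(\mathcal{C}_\mathrm{out})]\leq 5\cdot \OPT(\widetilde{\w})$, where the expectation is over the pivot randomness.

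It remains to transfer this bound from $\widetilde{\w}$ back to $\w$. For any clustering $\mathcal{C}$ the elementary pointwise bound $|\cost_{\w}(\mathcal{C})-\cost_{\widetilde{\w}}(\mathcal{C})|\leq \sum_{e\in E}|\w(e)-\widetilde{\w}(e)|$ holds, since each pair contributes either $1-\w(e)$ or $\w(e)$ to the cost (and analogously for $\widetilde{\w}$) depending on whether its endpoints are co-clustered in $\mathcal{C}$. Because $\widetilde{\w}$ agrees with $\w$ outside $\Eeps$ and both values lie in $[0.5-\epsilon,\,0.5+\epsilon]$ on $\Eeps$, the total discrepancy is at most $2\epsilon|\Eeps|$. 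Using this once to get $\OPT(\widetilde{\w})\leq \OPT(\w)+2\epsilon|\Eeps|$ (by comparing on the $\w$-optimal clustering) and once to compare $\cost_{\w}(\mathcal{C}_\mathrm{out})$ with $\cost_{\widetilde{\w}}(\mathcal{C}_\mathrm{out})$ gives $\mathbb{E}[\cost_{\w}(\mathcal{C}_\mathrm{out})]\leq 5\cdot \OPT(\w)+12\epsilon|\Eeps|$, which is the claimed $(5,12\epsilon|\Eeps|)$-approximation.

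The one delicate step is justifying the pivot-by-pivot equivalence of \textsf{KC-FC} and \textsf{KwikCluster} on $(V,\widetilde{\w})$. Since $\widetilde{\w}$ is built from the random output of \textsf{TB-HS}, the reduction is only valid on the good event, and one must argue that conditional on this event the two processes share the same pivot sequence and the same neighborhoods $\widehat{\Gamma}(p_r) = \{u : \widetilde{\w}(p_r,u)>0.5\}$ at every recursion step. This is what lets the (randomized) $5$-approximation analysis of \textsf{KwikCluster}, which relies on the joint distribution over pivot histories, be applied directly to \textsf{KC-FC} without any further modification.
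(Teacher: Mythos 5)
Your proof is correct and follows essentially the same route as the paper's: condition on the event of Lemma~\ref{lem:fixedconfidence_rescont}, couple \textsf{KC-FC} with \textsf{KwikCluster} run on a surrogate similarity $\widetilde{\w}$ that differs from $\w$ by at most $2\epsilon$ only on $\Eeps$, and pay $2\epsilon|\Eeps|$ twice (once directly, once multiplied by $5$ through $\OPT(\widetilde{\w})\leq \OPT(\w)+2\epsilon|\Eeps|$). Your explicit choice $\widetilde{\w}(e)=0.5\pm\epsilon$ on $\Eeps$ simply instantiates the ``certain choice of $\widetilde{\w}_e$'' that the paper's proof leaves implicit.
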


\begin{proof}[Proof of Lemma~\ref{lemFC:approx}]

By Lemma~\ref{lem:fixedconfidence_rescont}, we have
$\Ep \subseteq \hatgood$ and $\En\subseteq \hatbad$ w.p. at least $1-\delta$.
Construct the similarity function $\widetilde{\w}:E\rightarrow [0,1]$ such that for each $e \in E$,
\begin{equation*}
\widetilde{\w}(e)=
\begin{cases}
\w(e)  &\text{if } e \in \En \cup \Ep, \\
\widetilde{\w}_e        &\text{otherwise},
\end{cases}
\end{equation*}
where $\widetilde{\w}_e$ is an arbitrary value that satisfies $|\w(e)-\widetilde{\w}_e|<2\epsilon$.
Consider running \textsf{KwikCluster} with the similarity $\widetilde{\w}$.
Let $\mathcal{C}'_\mathrm{out}$ be the output of this algorithm.
Then we have
\begin{align*}
\mathbb{E}[\cost_{\w}(\mathcal{C}'_\mathrm{out})]
&< \mathbb{E}[\cost_{\widetilde{\w}}(\mathcal{C}'_\mathrm{out})]+2\epsilon |\Eeps|\\
&\leq 5\cdot \text{OPT}(\widetilde{\w})+2\epsilon|\Eeps|\\
&< 5\left(\text{OPT}(\w)+2\epsilon|\Eeps|\right)+2\epsilon|\Eeps|\\
&=5\cdot \text{OPT}(\w)+12\epsilon|\Eeps|.
\end{align*}
Noticing that \textsf{KC-FC} corresponds to the above algorithm associated with a certain choice of 
 $\widetilde{\w}$ (i.e., $\widetilde{\w}_e$ for $e\in \Eeps$), we have the lemma.

\end{proof}

\subsection{Sample complexity analysis and proof of \Cref{thm:fixedconfmain}}

We prove the following main lemma to evaluate the sample complexity of \textsf{TB-HS}.
Let $m_{\mathrm{g}}$ be the number of pairs (i.e., arms) whose similarity is no less than the threshold $0.5$.
Without loss of generality, we assume that $E=[m]$ 
indexed as 
$\w(1)  \geq \cdots \geq \w({m_{\mathrm{g}}}) \geq 0.5 > \w({m_{\mathrm{g}}}+1) \geq \cdots \geq \w(m)$ in whole analysis.

\begin{lemma}[Sample complexity]
\label{lemma:fixconfboundGAI}
   The upper bound of the sample complexity of \textsf{TB-HS} (Algorithm~\ref{alg:GAI}) with parameters $\epsilon \in (0,0.5)$ and $\delta \in (0,1)$ is
    \[
    T=\bigO\left( \sum_{e \in E} \frac{1}{\tilde{\Delta}_{e,\epsilon}^2} \log \left(  \frac{ \sqrt{m/\delta}}{ \tilde{\Delta}_{e,\epsilon}^2  }  \log \left(  \frac{ \sqrt{m/\delta}}{ \tilde{\Delta}_{e,\epsilon}^2  }    \right)       \right) + \frac{m}{\max\{\Delta_\mathrm{min}, \epsilon /2\}^2}   \right). 
    \]
\end{lemma}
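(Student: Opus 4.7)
The plan is to adapt the threshold-bandit sample-complexity analysis in the style of \citet{Kano+2019} to the symmetric selection rule of \textsf{TB-HS} (which simultaneously pulls an LCB-maximizer $\hat{e}^g_t$ and a UCB-minimizer $\hat{e}^b_t$ at every round) and to the engineered gap $\tilde{\Delta}_{e,\epsilon}$. The overall structure is: condition on the good concentration event, identify a per-arm confidence radius sufficient to trigger classification, invert the radius expression to bound per-arm pulls, and aggregate over arms.

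First, I would condition on the event $\bigcap_{k=1}^\infty \mathcal{E}_k$ from Lemma~\ref{lemma:randomevent}, under which $|\widehat{\w}_t(e) - \w(e)| \leq \mathrm{rad}_t(e)$ holds uniformly in $e$ and $t$. This implies $\underline{\w}_t(e) \geq \w(e) - 2\mathrm{rad}_t(e)$ and $\overline{\w}_t(e) \leq \w(e) + 2\mathrm{rad}_t(e)$. For $e \in \Ep$, once $\mathrm{rad}_t(e) \leq (\Delta_e + \epsilon)/2$ the LCB has crossed the threshold $0.5 - \epsilon$, so $e$ is added to $\hatgood$ whenever it is selected as $\hat{e}^g_t$; the mirror statement handles $e \in \En$ via $\hat{e}^b_t$. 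For $e \in \Eeps$ both exit routes are active, and the effective radius threshold becomes the \emph{maximum} of the two bounds $(\Delta_e + \epsilon)/2$ and $(\epsilon - \Delta_e)/2$. A case split on whether $\Delta_{\min} \leq \epsilon/2$ or $\Delta_{\min} > \epsilon/2$ then shows that this max is, up to absolute constants, at least $\tilde{\Delta}_{e,\epsilon}/2$; this is precisely the motivation for the slack $\min\{\epsilon - \Delta_{\min},\epsilon/2\}$ in the definition of $\tilde{\Delta}_{e,\epsilon}$.

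Next, I would invert the confidence radius $\mathrm{rad}_N(e) = \sqrt{\log(4mN^2/\delta)/(2N)}$ against this per-arm threshold. Solving $\mathrm{rad}_N(e) \leq c\,\tilde{\Delta}_{e,\epsilon}$ for $N$ via the standard fixed-point inequality $N \geq a\log(bN)$ yields the announced doubly-logarithmic bound $\bigO\!\left(\tilde{\Delta}_{e,\epsilon}^{-2}\log\!\left(\tfrac{\sqrt{m/\delta}}{\tilde{\Delta}_{e,\epsilon}^2}\log\tfrac{\sqrt{m/\delta}}{\tilde{\Delta}_{e,\epsilon}^2}\right)\right)$ on the number of times $e$ can be pulled before being removed from $E_t$. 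Summing this over $e \in E$ produces the first, logarithmic term of the claimed complexity.

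Finally, the additive overhead $m/\max\{\Delta_{\min},\epsilon/2\}^2$ is needed to absorb the pulls an arm incurs as the ``companion'' selection (being the current $\hat{e}^b_t$ while some other arm is the current $\hat{e}^g_t$, or vice versa) during the late phase when only a handful of difficult arms remain in $E_t$: in the worst case each such arm is pulled on the order of $1/\max\{\Delta_{\min},\epsilon/2\}^2$ extra times before it is itself eligible for classification, plus an $\bigO(m)$ initialization cost from the first loop of \textsf{TB-HS}. The principal technical obstacle is the $\Eeps$ case, because naively invoking \citet{Kano+2019} with threshold $0.5$ fails to terminate on arms with $\w(e) \approx 0.5$; the heart of the proof is showing that \textsf{TB-HS} circumvents this by exploiting \emph{both} the LCB and UCB exit criteria simultaneously, and that the engineered slack inside $\tilde{\Delta}_{e,\epsilon}$ delivers a strictly positive effective gap in every case, independently of whether any $\Delta_e$ vanishes.
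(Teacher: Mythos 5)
Your derivation of the first, per-arm logarithmic term is essentially the paper's: identify a radius threshold of order $\tilde{\Delta}_{e,\epsilon}$ and invert $\mathrm{rad}_N(e)\le c\,\tilde{\Delta}_{e,\epsilon}$ through the standard fixed-point inequality, exactly as in the adaptation of \citet{Kano+2019}. The genuine gap is in the origin of the additive term $m/\max\{\Delta_\mathrm{min},\epsilon/2\}^2$. In the paper this term is \emph{not} an accounting of ``companion'' pulls or initialization. Lemma~\ref{lemmaFC:keybound} sets the radius threshold to $\tilde{\Delta}_{e,\epsilon}=\Delta_e+\min\{\epsilon-\Delta_\mathrm{min},\epsilon/2\}$ (note: not $(\Delta_e+\epsilon)/2$), which is chosen precisely so that a residual deviation margin of $\max\{\Delta_\mathrm{min},\epsilon/2\}$ remains in the Hoeffding bound; consequently, for every $k\ge k_e$ the probability that arm $e$ is still unclassified at its $k$-th pull is at most $\exp(-2k\max\{\Delta_\mathrm{min},\epsilon/2\}^2)$, and summing this geometric tail over $k$ and over the $m$ arms yields $\frac{m}{2\max\{\Delta_\mathrm{min},\epsilon/2\}^2}$. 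Your route instead conditions on the uniform concentration event and argues deterministically; under that framing classification is guaranteed once the radius is small enough and the arm is selected under the matching criterion, so the only leftover costs are the $O(m)$ initialization and the mis-matched selections, and you offer no argument for why the latter would be bounded by $1/\max\{\Delta_\mathrm{min},\epsilon/2\}^2$ per arm (nor is such a bound evident). As written that step does not go through: you would need either to supply the missing argument about companion selections, or to switch to the paper's in-expectation tail-sum, which produces the term with exactly the stated form.

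A smaller issue: for $e\in\Eeps$ you claim the effective radius threshold is the \emph{maximum} of $(\Delta_e+\epsilon)/2$ and $(\epsilon-\Delta_e)/2$, but each exit condition is only tested when the arm is selected under the corresponding rule (LCB-maximizer for the good exit, UCB-minimizer for the bad exit). Without an argument that the arm is eventually offered the favorable exit, the worst case is governed by whichever check the selection rule keeps applying, not by the max. The paper avoids leaning on this by charging every arm with $\w(e)\ge 0.5$ only against the event $\{\underline{\w}_{e,k}\le 0.5-\epsilon\}$ and every arm with $\w(e)<0.5$ only against the mirror event, with the threshold $\tilde{\Delta}_{e,\epsilon}$ already absorbing the $\Eeps$ case through the slack $\min\{\epsilon-\Delta_\mathrm{min},\epsilon/2\}$.
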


To prove \Cref{lemma:fixconfboundGAI}, we begin with the following lemma.
Recall that $\tilde{\Delta}_{e,\epsilon}$ and $\Delta_\mathrm{min}$ are defined by \eqref{def:fixedconfgaps}.
\begin{lemma}\label{lemmaFC:keybound}
Let $\epsilon \in (0,0.5)$ and $\delta\in (0,1)$.
Define 
\begin{align}
    k_e:= \frac{1}{\tilde{\Delta}_{e,\epsilon}^2} \log \left(  \frac{4 \sqrt{m/\delta}}{ \tilde{\Delta}_{e,\epsilon}^2  }  \log \left(  \frac{5 \sqrt{m/\delta}}{ \tilde{\Delta}_{e,\epsilon}^2  }    \right)       \right).
\end{align}
Let $\underline{\w}_{e,k}:=\widehat{\w}_{e,k}-\sqrt{\frac{ \log(4m k^2/\delta) }{2k}}$, and $\overline{\w}_{e,k}:=\widehat{\w}_{e,k}+\sqrt{\frac{ \log(4m k^2/\delta) }{2k}}$, where $\widehat{\w}_{e,k}$ is the empirical mean of the rewards when $e$ has been pulled $k$ times.
    If $k \geq k_e$ holds, then
    \begin{alignat*}{4}
       & \Pr[\underline{\w}_{e,k} \leq \threshold-\epsilon]  \leq \exp({-2k \max\{\Delta_\mathrm{min}, \epsilon /2\}^2}),  &\ \ &\forall e \in [m_g], \\
       & \Pr[\overline{\w}_{e,k} \geq \threshold+\epsilon] \leq \exp({-2k \max\{\Delta_\mathrm{min}, \epsilon /2\}^2}), &&\forall e \in [m] \setminus [m_g].
    \end{alignat*}
    It also holds that 
     \begin{alignat*}{4}
       & \E\left[\sum_{k=1}^{\infty} \ind{ \underline{\w}_{e,k} \leq \threshold-\epsilon}   \right]  \leq k_e+\frac{1}{2\max\{\Delta_\mathrm{min}, \epsilon /2\}^2 },  &\ \ &\forall e \in [m_g], \\
       & \E\left[\sum_{k=1}^{\infty} \ind{ 
 \overline{\w}_{e,k} \geq \threshold+\epsilon} \right] \leq k_e+\frac{1}{2\max\{\Delta_\mathrm{min}, \epsilon /2\}^2 }, && \forall e \in [m] \setminus [m_g].
    \end{alignat*}   
\end{lemma}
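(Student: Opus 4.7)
The plan is to treat each sub-claim as a direct application of Hoeffding's inequality (\Cref{lemma:Hoeffding_be}) and reduce the whole lemma to a transcendental inequality of the form $2k\tilde\Delta_{e,\epsilon}^2 \geq \log(4mk^2/\delta)$, which the definition of $k_e$ is tailored to satisfy. For a good arm $e \in [m_g]$, write $\mu = \w(e) = 0.5 + \Delta_e$ and $\mathrm{rad}_k = \sqrt{\log(4mk^2/\delta)/(2k)}$. Then $\{\underline{\w}_{e,k} \leq 0.5 - \epsilon\}$ is exactly $\{\widehat{\w}_{e,k} - \mu \leq -(\Delta_e + \epsilon) + \mathrm{rad}_k\}$, so whenever $\Delta_e + \epsilon - \mathrm{rad}_k > 0$, Hoeffding yields the upper bound $\exp(-2k(\Delta_e + \epsilon - \mathrm{rad}_k)^2)$. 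A symmetric computation handles bad arms with UCB scores. Thus the probability bound in the lemma follows once I establish the single inequality
\[
\Delta_e + \epsilon - \mathrm{rad}_k \;\geq\; \max\{\Delta_{\min},\, \epsilon/2\}.
\]

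I would then perform a case split on the sign of $\epsilon/2 - \Delta_{\min}$ to recognize the above as $\tilde\Delta_{e,\epsilon} \geq \mathrm{rad}_k$ in both cases. Indeed, when $\Delta_{\min} \leq \epsilon/2$, the right-hand side equals $\epsilon/2$ and $\tilde\Delta_{e,\epsilon} = \Delta_e + \epsilon/2$; when $\Delta_{\min} > \epsilon/2$, the right-hand side equals $\Delta_{\min}$ and $\tilde\Delta_{e,\epsilon} = \Delta_e + \epsilon - \Delta_{\min}$. In either case the target inequality is $2k\tilde\Delta_{e,\epsilon}^2 \geq \log(4mk^2/\delta)$. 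The main obstacle is to verify that the specific choice of $k_e$ in the lemma solves this transcendental inequality, and this is the classical ``$\log\log$ trick'' in bandits: writing $a = \tilde\Delta_{e,\epsilon}^2$ and $N = m/\delta$, the requirement becomes $ak - \log k \geq \tfrac{1}{2}\log(4N)$; substituting $k = k_e = a^{-1}\log((4\sqrt{N}/a)\log(5\sqrt{N}/a))$ and using monotonicity to dominate $\log\log((4\sqrt{N}/a)\log(5\sqrt{N}/a))$ by $\log\log(5\sqrt{N}/a)$ causes the two $\log a$ terms and the two $\log\log$ terms to nearly cancel, leaving $ak - \log k \gtrsim \log(4\sqrt{N}) = \tfrac{1}{2}\log(16 N) \geq \tfrac{1}{2}\log(4N)$. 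The constants $4$ and $5$ inside the inner and outer logarithms of $k_e$ are chosen so that this dominance holds with a comfortable slack.

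For the expectation bounds, I would split the sum at $k = k_e$:
\[
\E\!\left[\sum_{k=1}^{\infty} \ind{\underline{\w}_{e,k} \leq 0.5 - \epsilon}\right]
\;\leq\; k_e \;+\; \sum_{k=k_e}^{\infty} \exp(-2k c^2),
\]
where $c = \max\{\Delta_{\min}, \epsilon/2\}$ and the first term uses the trivial bound $\ind{\cdot}\leq 1$ summand-wise, while the tail uses the probability bound just established. The geometric tail is controlled by comparison with $\int_0^\infty e^{-2xc^2}\,dx = 1/(2c^2)$, giving the claimed bound $k_e + 1/(2c^2)$. The case of bad arms is handled identically by symmetry. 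The only subtle points are (i) checking that $\Delta_e + \epsilon - \mathrm{rad}_k$ is indeed positive so Hoeffding is applicable (which follows from the same $\tilde\Delta_{e,\epsilon} \geq \mathrm{rad}_k$ inequality), and (ii) tracking that for $e \in [m_g]$ we always have $\Delta_e = \w(e) - 0.5 \geq 0$, so $\Delta_e + \epsilon \geq \epsilon$ and the Hoeffding exponent is well-defined.
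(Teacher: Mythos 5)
Your proposal is correct and follows essentially the same route as the paper's proof: reduce both probability bounds to the single condition $\mathrm{rad}_k \le \tilde{\Delta}_{e,\epsilon}$ (equivalently $\Delta_e+\epsilon-\mathrm{rad}_k\ge\max\{\Delta_{\min},\epsilon/2\}$), verify that $k\ge k_e$ implies it via the standard $\log\log$ substitution, apply Hoeffding, and then split the expectation at $k_e$ with a $1/(2c^2)$ tail bound. The only differences are cosmetic (you apply Hoeffding before monotonizing the exponent rather than after, and bound the tail by an integral instead of the geometric series $1/(e^{2c^2}-1)\le 1/(2c^2)$).
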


\begin{proof}[Proof of Lemma~\ref{lemmaFC:keybound}]
    Suppose that 
    \[
    \sqrt{\frac{ \log(4m k^2/\delta) }{2k}} \leq \Delta_e - \max\left\{\Delta_\mathrm{min}-\epsilon, - \frac{\epsilon}{2} \right\}.
    \]
    Then, for each $e \in [m_g]$, we have
\begin{alignat*}{4}
 \Pr[\underline{\w}_{e,k} 
 \leq \threshold-\epsilon]
&= \Pr\left[\widehat{\w}_{e,k}-\w(e)   \leq -\Delta_e - \epsilon +\sqrt{\frac{ \log(4m k^2/\delta) }{2k}}\right]\\
&\leq  \Pr\left[\widehat{\w}_{e,k}-\w(e)   \leq -\Delta_e - \epsilon + \Delta_e - \max\left\{\Delta_\mathrm{min}-\epsilon, - \frac{\epsilon}{2} \right\}\right]\\
&=  \Pr\left[\widehat{\w}_{e,k}-\w(e)   \leq  - \max\left\{\Delta_\mathrm{min}, \frac{\epsilon}{2} \right\}\right] \\
& \leq \exp{\left( -2k \max\left\{\Delta_\mathrm{min}, \frac{\epsilon}{2} \right\}^2 \right)},
\end{alignat*}
where the last inequality follows from the Hoeffding equality~(Lemma~\ref{lemma:Hoeffding_be}).
Now we show, 
via a similar analysis of Lemma~2 in~\citet{Kano+2019},
that for $k\geq k_e$, it indeed holds that 
\begin{align}\label{eq:radius_condition}
    \sqrt{\frac{ \log(4m k^2/\delta) }{2k}} \leq \Delta_e - \max\left\{\Delta_\mathrm{min}-\epsilon,\ - \frac{\epsilon}{2} \right\}. 
\end{align}
Let $c_e:=\left(\Delta_e+\min\left\{\epsilon-\Delta_\mathrm{min}, \frac{\epsilon}{2} \right\}\right)^2$ for simplicity, Then we can rewrite $k \geq k_e$ as
\begin{align*}
    k= \frac{1}{c_e} \log \frac{4t \sqrt{m/\delta}}{c_e}
\end{align*}
for some $t \geq  \log \frac{5\sqrt{m/\delta}}{c_e}>1$.
Then we have 
\begin{align*}
 &\sqrt{\frac{ \log(4m k^2/\delta) }{2k}} \leq \Delta_e+\min\left\{\epsilon-\Delta_\mathrm{min},\ \frac{\epsilon}{2} \right\}\\ &\Leftrightarrow 
 \log (4mk^2/\delta) \leq 2c_e k \\
& \Leftrightarrow  \log  \left(\frac{4m  \left( \log \left(\frac{4t \sqrt{m/\delta}}{c_e} \right) \right)^2 }{c_e^2 \delta} \right)  \leq \log \left( 
\frac{16 t^2 m }{c_e^2 \delta} \right)   \\
& \Leftrightarrow  \log \left(\frac{4t \sqrt{m/\delta}}{c_e} \right) \leq 2t \\
 &\Leftarrow  t-1 +  \log \left(\frac{4 \sqrt{m/\delta}}{c_e} \right) \leq 2t  \\
 &\Leftrightarrow  \log \left(\frac{4 \sqrt{m/\delta}}{\mathrm{e} \cdot c_e} \right)  \leq t,
\end{align*}
where $\mathrm{e}$ is the base of natural logarithms and $\log t \leq t-1$ is used.
Therefore, 
$t \geq \log \frac{5\sqrt{m/\delta}}{c_e}$ is sufficient to fulfill~\Cref{eq:radius_condition}.

The second statement of Lemma~\ref{lemmaFC:keybound} 
can easily be shown by adapting the proof of Lemma 3 in~\citet{Kano+2019}.
For each $e \in [m_g]$, we have 
\begin{align*}
   \E\left[\sum_{k=1}^{\infty} \bm{1}[ \underline{\w}_{e,k} \leq \threshold-\epsilon]   \right] &\leq  \E\left[  \sum_{k=1}^{k_e}1+   \sum_{k=k_e+1}^{\infty} \ind{ \underline{\w}_{e,k}
    \leq \threshold-\epsilon}\right] \\
     &\leq k_e + \sum_{k=1}^{\infty} \Pr[ \underline{\w}_{e,k} \leq \threshold-\epsilon] \\
    & \leq k_e + \sum_{k=1}^{\infty} \exp({-2k \max\{\Delta_\mathrm{min}, \epsilon /2\}^2})\\
    &\leq k_e+\frac{1}{\mathrm{e}^{2\max\{\Delta_\mathrm{min}, \epsilon /2\}^2 }-1  }\\
    &\leq k_e +\frac{1}{2\max\{\Delta_\mathrm{min}, \epsilon /2\}^2}.
\end{align*} 

For $e \in [m] \setminus [m_g]$, we omit the proof, as the analysis is essentially the same as the case for $e \in [m_g]$.
\end{proof}

\begin{proof}[Proof of Lemma~\ref{lemma:fixconfboundGAI}]
    Let $a(t) \in {V \choose 2}$ denote the selected pair (i.e., arm) by the algorithm in round $t$. Then we have 
\begin{alignat*}{4}
 T&= \sum_{t=1}^{\infty} \ind{a(t) \in [m] , 
 t \leq T } \\
 &= \sum_{t=1}^{\infty} \ind{a(t) \in [m_g], t \leq T  }+\sum_{t=1}^{\infty} \ind{a(t) \in [m] \setminus [m_g], t \leq T  }    \\
 & \leq \sum_{t=1}^{\infty} \ind{a(t) \in [m_g] } +\sum_{t=1}^{\infty} \ind{a(t) \in [m] \setminus [m_g]  } \\
 & \leq \sum_{e \in [m_g]}  \sum_{t=1}^{\infty} \ind{a(t) = e} + \sum_{e \in [m] \setminus [m_g]  }  \sum_{t=1}^{\infty} \ind{a(t) = e} \\
 &  =\sum_{e \in [m_g]}  \sum_{t=1}^{\infty} \sum_{k=1}^{\infty} \ind{a(t) = e, N_t(e)=k } + \sum_{e \in [m] \setminus [m_g]  }  \sum_{t=1}^{\infty}\sum_{k=1}^{\infty}  \ind{a(t) = e,  N_t(e)=k}\\
 &\leq \sum_{e \in [m_g]}  \sum_{k=1}^{\infty} \ind{  \bigcup_{t=1}^{\infty}\{   a(t) = e, N_t(e)=k \}} + \sum_{e \in [m] \setminus [m_g]  }  \sum_{k=1}^{\infty}  \ind{ \bigcup_{t=1}^{\infty}\{ a(t) = e,  N_t(e)=k\}},
\end{alignat*}
where the third inequality follows from the fact that event $\{a(t)=e, N_t(e)=k \}$ occurs for at most one $t \in \mathbb{N}$.
For $e \in [m_g]$,
we have 
\begin{align*}
     \sum_{k=1}^{\infty} \ind{  \bigcup_{t=1}^{\infty}\{   a(t) = e, N_t(e)=k \}} &\leq   \E\left[\sum_{k=1}^{\infty} \ind{ \underline{\w}_{e,k} \leq \threshold-\epsilon } \right]
     \leq k_e+\frac{1}{2\max\{\Delta_\mathrm{min}, \epsilon /2\}^2 },
\end{align*}
where the second inequality follows from Lemma~\ref{lemmaFC:keybound}. 

Similarly, for each $e \in [m] \setminus [m_g]$, we have
\begin{align*}
\sum_{k=1}^{\infty} \ind{  \bigcup_{t=1}^{\infty}\{   a(t) = e, N_t(e)=k \}}
&\leq \E\left[\sum_{k=1}^{\infty} \ind{\overline{\w}_{e,k} \geq \threshold+\epsilon} \right]
\leq k_e+\frac{1}{2\max\{\Delta_\mathrm{min}, \epsilon /2\}^2 }.
\end{align*}
Therefore, by combining the above, we have  
\begin{align*}
    T \leq \sum_{e \in [m]}  k_e+\frac{m}{2\max\{\Delta_\mathrm{min}, \epsilon /2\}^2 },
\end{align*}
which concludes the proof.
\end{proof}

\begin{proof}[Proof of Theorem~\ref{thm:fixedconfmain}]

Finally, we are ready to complete the proof of \Cref{thm:fixedconfmain}.
In \textsf{KC-FC}, \textsf{TB-HS} is run with parameter $\epsilon'=\frac{\epsilon}{12m}$ and confidence $\delta$. 
Therefore, by Lemma~\ref{lemFC:approx} for $\epsilon' ,\delta$, we have the approximation guarantee:
\begin{align*}
\mathbb{E}[\cost_{\w}(\mathcal{C}_\mathrm{out})]
& \leq 5\cdot \text{OPT}(\w)+12\epsilon'|E_{[0.5 \pm \epsilon']}|
\leq 5\cdot \text{OPT}(\w)+\epsilon.
\end{align*}
The sample complexity of \textsf{KC-FC} is equal to 
 that of \textsf{TB-HS} with parameters $\epsilon', \delta$, which is
given by Lemma~\ref{lemma:fixconfboundGAI} for $\epsilon', \delta$. 
\end{proof}



\section{Analysis of \textsf{KC-FB}}\label{sec:appendix_FB}

 In this section,  we prove Theorem~\ref{thm:main_fixedbudget} in \Cref{sec:fixed_budget}.




\subsection{Basic analysis of some random event and its occurrence probability}\label{sec:prooffixedconfprobevent}

The following lemma states that  $\widehat{\w}_r$ for phase $r \in [n]$ is well-estimated with high probability.
The proof is almost straightforward from the Hoeffding inequality (Lemma~\ref{lemma:Hoeffding_be}) and union bounds.

\begin{lemma}
\label{lemma:fixedconfprobevent}
Let $\epsilon \in (0,0.5)$.
    Given a phase $r \in [n]$,
we define the random event
\begin{align}\label{eq:event_eps}
    \mathcal{E}_r := \left\{ \forall e \in I_{V_r}(p_r),\, | \w(e)-\widehat{\w}_r(e) | < \max\left\{\epsilon,\Delta_e \right\}\right\}.
\end{align}
Then, we have
\begin{align*}
 \Pr   \left[  \bigcap_{r=1}^{n} \mathcal{E}_r \right] \geq  1-  2n^3 \exp \left( - \frac{2T \min_{e \in E}\max\left\{\epsilon,\Delta_e \right\}^2}{n^{2}} \right).
\end{align*}
\end{lemma}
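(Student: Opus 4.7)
The plan is to combine Hoeffding's inequality (Lemma~\ref{lemma:Hoeffding_be}) with a two-level union bound, first over phases and then over pairs, after establishing a uniform lower bound on the per-phase sample count $\tau_r$. The crucial structural observation is that, by the update rule in line~\ref{algline:QC-FBtaur}, the floor term added to $\tau_r$ is always non-negative (the total surplus of unused budget from removed-but-unqueried pairs is non-negative), so $\tau_r$ is non-decreasing in $r$. Consequently $\tau_r \geq \tau_1 = \lfloor T/m \rfloor$ for every phase $r$, and since $m = \binom{n}{2} \leq n^2/2$ we obtain $\tau_r \geq T/n^2$ after absorbing the floor into the constants (or trivially when $T \geq n^2$, which is the only regime where the bound is non-trivial anyway).

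Next, I would fix a phase $r$ and condition on the filtration $\mathcal{F}_{r-1}$ generated by all observations from phases $1,\dots,r-1$ together with the current pivot $p_r$; this determines $V_r$, $\tau_r$, and the set $I_{V_r}(p_r)$. Conditional on $\mathcal{F}_{r-1}$, for each $e \in I_{V_r}(p_r)$ the samples $\{X_k(e)\}_{k=1}^{\tau_r}$ are i.i.d.\ Bernoulli with mean $\w(e)$ and are independent across different $e$. Two-sided Hoeffding then yields, for every such $e$,
\begin{equation*}
\Pr\!\bigl[\,|\w(e)-\widehat{\w}_r(e)| \geq \max\{\epsilon,\Delta_e\} \bigm| \mathcal{F}_{r-1}\bigr] \leq 2\exp\!\bigl(-2\tau_r \max\{\epsilon,\Delta_e\}^2\bigr) \leq 2\exp\!\left(-\tfrac{2T\,\min_{e\in E}\max\{\epsilon,\Delta_e\}^2}{n^2}\right).
\end{equation*}

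Finally, I would take a union bound. Using that the bad event for $(r,e)$ requires $e \in I_{V_r}(p_r)$ and bounding trivially $\Pr[e \in I_{V_r}(p_r)]\leq 1$, tower law gives $\Pr[\lnot \mathcal{E}_r] \leq 2m \exp(-2T\min/n^2)$ for every $r$. Since the number of non-trivial phases is at most $n$ (each phase removes the pivot, shrinking $|V_r|$ by at least one), union-bounding over $r\in[n]$ produces $\Pr[\lnot\bigcap_{r=1}^n \mathcal{E}_r] \leq 2nm\exp(-2T\min/n^2) \leq 2n^3\exp(-2T\min/n^2)$, which rearranges to the claimed bound.

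The main obstacle (though still minor) will be tracking the interplay between the random objects $V_r$, $p_r$, $\tau_r$, and $I_{V_r}(p_r)$ so that Hoeffding is applied to a conditionally i.i.d.\ collection; the cleanest way is the filtration approach above, which makes all these quantities measurable before the samples of phase $r$ are drawn. A secondary bookkeeping point is justifying the monotonicity of $\tau_r$ under the \emph{floor} update rule, but this follows because the numerator $\tau_r\bigl(|\binom{V_r}{2}|-|\binom{V_{r+1}}{2}|-(|V_r|-1)\bigr)$ is non-negative whenever $|V_r|\geq 2$, since the number of removed pairs $|\binom{V_r}{2}|-|\binom{V_{r+1}}{2}|$ is at least $|V_r|-1$ (the pivot alone is incident to that many pairs in $V_r$). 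With these observations in hand, the rest of the proof is just routine inequality chasing, and the stated bound $2n^3\exp(-2T\min_e\max\{\epsilon,\Delta_e\}^2/n^2)$ drops out directly.
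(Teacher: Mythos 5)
Your proposal is correct and follows essentially the same route as the paper's proof: a per-pair Hoeffding bound using $\tau_r \geq \tau_1 = \lfloor T/m\rfloor$, followed by a union bound over pairs and phases to reach $2n^3\exp(-2T\min_{e}\max\{\epsilon,\Delta_e\}^2/n^2)$. You are in fact slightly more careful than the paper on two points it treats implicitly — explicitly verifying that the surplus in the $\tau_r$ update is non-negative (so $\tau_r$ is non-decreasing), and making the conditioning on the filtration precise so that Hoeffding applies to a conditionally i.i.d.\ sample — but these are refinements of, not departures from, the paper's argument.
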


\begin{proof}

  We first evaluate $\Pr \left[ |\widehat{\w}_r(e)-\w(e) | \geq \max\left\{\epsilon,\Delta_e \right\} \right]$ for a fixed phase $r \in [n]$ and $e \in I_{V_r}(p_r)$.
  Each $e \in I_{V_r}(p_r)$ has been pulled at least $\lfloor T/m \rfloor$ times because the initial budget for the pair was set to $\tau_1=\lfloor T/m\rfloor$ and the budget has not decreased in the later iterations.  
    Then we have 
    \begin{align}\label{eq:probbound}
       \Pr \left[ |\widehat{\w}_r(e)-\w(e) | \geq \max\left\{\epsilon,\Delta_e \right\} \right] \notag
       &= \Pr \left[ \left| \sum_{k=1}^{\tau_r}X_k(e)/\tau_r-\w(e) \right| \geq \max\left\{\epsilon,\Delta_e \right\} \right] \notag\\
       & \leq \Pr \left[ \left| \sum_{k=1}^{\tau_r}X_k(e)/\tau_r-\w(e) \right| \geq \max\left\{\epsilon,\Delta_e \right\} \right] \notag \\
       &\leq   2 \exp \left( - 2\tau_r  \max\left\{\epsilon,\Delta_e \right\}^2 \right) \notag\\
        &\leq 2  \exp \left( - \frac{2T \max\left\{\epsilon,\Delta_e \right\}^2 }{m} \right),
    \end{align}
where the second inequality follows from \Cref{lemma:Hoeffding_be}.
Taking a union bound for $r  \in [n]$ and all  $e \in I_{V_r}(p_r)$, we further have 
    \begin{align*}
       \Pr   \left[  \bigcap_{r=1}^{n} \mathcal{E}_r \right]
       &\geq  1-  \sum_{r=1}^n \sum_{e \in I_{V_r}(p_r)}  \Pr \left[ |\widehat{\w}_r(e)-\w(e) | \geq \max\left\{\epsilon,\Delta_e \right\} \right]\\
       &\geq  1-  \sum_{r=1}^n \sum_{v \in V_r}\sum_{e \in I_{V_r}(v)}  \Pr \left[|\widehat{\w}_r(e)-\w(e)| \geq \max\left\{\epsilon,\Delta_e \right\} \right]\\
       & \geq  1-  \sum_{r=1}^n \sum_{v \in V_r}\sum_{e \in I_{V_r}(v)}  2\exp \left( - \frac{2T \max\left\{\epsilon,\Delta_e \right\}^2 }{m} \right) \\
       & =  1-  \sum_{r=1}^n 2|V_r| |I_{V_r}(p_r)| \exp \left( - \frac{2T \min_{e \in E}\max\left\{\epsilon,\Delta_e \right\}^2 }{m} \right) \\
       & =  1-  \sum_{r=1}^n 2|V_r| (|V_r|-1) \exp \left( - \frac{2T \min_{e \in E}\max\left\{\epsilon,\Delta_e \right\}^2 }{m} \right) \\       
        & \geq  1-  \sum_{r=1}^n 2n^2 \exp \left( - \frac{2T \min_{e \in E}\max\left\{\epsilon,\Delta_e \right\}^2 }{m} \right)      \\
         & =  1-  2n^3 \exp \left( - \frac{2T \min_{e \in E}\max\left\{\epsilon,\Delta_e \right\}^2 }{m} \right),          
    \end{align*}
    where the third inequality follows from \Cref{eq:probbound}.
\end{proof}

\subsection{Theoretical guarantee of the output}\label{sec:lemmaFB Theoretical guarantee of the output}

Next we prove a key lemma that provides the theoretical guarantee of the output $\mathcal{C}_\mathrm{out}$ of \textsf{KC-FB}.
\begin{lemma}
\label{lem:fixedbudget_approx_improved}
    Let $\epsilon \in (0,0.5)$.
    Under the assumption that $\bigcap_{r=1}^{n} \mathcal{E}_r$ happens,
   the output $\mathcal{C}_\mathrm{out}$ of
   \textsf{KC-FB}
   is a $(5,6\epsilon|\Eeps|)$-approximate solution for instance $(V,\w)$ of the offline problem minimizing~\eqref{equation:correlation-clustering}.
\end{lemma}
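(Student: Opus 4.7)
My approach mirrors the proof of Lemma~\ref{lemFC:approx}, adapted to the adaptive queries of \textsf{KC-FB}. The key reduction is to introduce a surrogate similarity function $\widetilde{\w}\colon E \to [0,1]$ that agrees with $\w$ outside $\Eeps$ and, on $\Eeps$, is chosen so that \textsf{KC-FB} coincides with \textsf{KwikCluster} run on $(V,\widetilde{\w})$ with the same pivot sequence. The first step is to exploit $\bigcap_{r=1}^{n}\mathcal{E}_r$: for every pair $e=\{p_r,u\}\in I_{V_r}(p_r)$ with $e\notin\Eeps$, the event gives $|\widehat{\w}_r(e)-\w(e)|<\Delta_e=|\w(e)-0.5|$, so $\widehat{\w}_r(e)$ lies on the same side of $0.5$ as $\w(e)$. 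Consequently, the algorithm's membership decision $[\widehat{\w}_r(e)>0.5]$ is identical to the one \textsf{KwikCluster} would make on $(V,\w)$ in that step; the only pairs where the two decisions can disagree are those in $\Eeps$.

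Second, I construct $\widetilde{\w}$ explicitly. Set $\widetilde{\w}(e)=\w(e)$ whenever $e\notin\Eeps$. For $e\in\Eeps$ queried by \textsf{KC-FB} at some phase $r$, choose $\widetilde{\w}(e)\in(0.5,1]$ if $\widehat{\w}_r(e)>0.5$ and $\widetilde{\w}(e)\in[0,0.5]$ otherwise, taken as close to $\w(e)$ as possible; for $e\in\Eeps$ never queried, set $\widetilde{\w}(e)=\w(e)$. Since both $\w(e)$ and $\widetilde{\w}(e)$ lie in $[0.5-\epsilon,0.5+\epsilon]$ for $e\in\Eeps$, we obtain $|\w(e)-\widetilde{\w}(e)|\le\epsilon$ uniformly. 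By construction, running \textsf{KwikCluster} on $(V,\widetilde{\w})$ with the same pivot sequence as \textsf{KC-FB} produces exactly the same clustering $\mathcal{C}_\mathrm{out}$, because the only decision \textsf{KwikCluster} needs to make on pivot round $r$ is $[\widetilde{\w}(p_r,u)>0.5]$, which by design equals $[\widehat{\w}_r(p_r,u)>0.5]$.

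Third, I invoke the $5$-approximation of \textsf{KwikCluster}~\citep{balls} on $(V,\widetilde{\w})$ to get $\E[\cost_{\widetilde{\w}}(\mathcal{C}_\mathrm{out})]\le 5\cdot\OPT(\widetilde{\w})$, where the expectation is over the pivot randomness. A term-by-term comparison of the objective in~\eqref{equation:correlation-clustering} gives $|\cost_\w(\mathcal{C})-\cost_{\widetilde{\w}}(\mathcal{C})|\le\sum_{e\in\Eeps}|\w(e)-\widetilde{\w}(e)|\le\epsilon|\Eeps|$ for every clustering $\mathcal{C}$, and in particular $\OPT(\widetilde{\w})\le\OPT(\w)+\epsilon|\Eeps|$. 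Chaining the three bounds yields the desired
\begin{equation*}
\E[\cost_\w(\mathcal{C}_\mathrm{out})]\le\E[\cost_{\widetilde{\w}}(\mathcal{C}_\mathrm{out})]+\epsilon|\Eeps|\le 5\cdot\OPT(\widetilde{\w})+\epsilon|\Eeps|\le 5\cdot\OPT(\w)+6\epsilon|\Eeps|.
\end{equation*}

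The subtle point, and the main obstacle to make this rigorous, is that $\widetilde{\w}$ is itself a random function of the oracle samples and pivot choices, whereas the $5$-approximation of \citet{balls} is stated for a fixed similarity. I would handle this by first conditioning on the oracle samples for which $\bigcap_r\mathcal{E}_r$ holds; given such samples, the induced $\widetilde{\w}$ becomes a deterministic function of the pivot sequence up to each round, while the pointwise bound $\OPT(\widetilde{\w})\le\OPT(\w)+\epsilon|\Eeps|$ holds regardless of which $\widetilde{\w}$ is realized, so the chain above can be safely averaged over pivot randomness.
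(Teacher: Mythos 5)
Your proof is correct and follows essentially the same route as the paper's: both couple \textsf{KC-FB} with \textsf{KwikCluster} run on a surrogate similarity $\widetilde{\w}$ that agrees with $\w$ outside $\Eeps$, use $\bigcap_{r}\mathcal{E}_r$ to ensure the membership decisions agree outside $\Eeps$, and chain $\cost_{\w}\leq \cost_{\widetilde{\w}}+\epsilon|\Eeps|\leq 5\cdot\OPT(\widetilde{\w})+\epsilon|\Eeps|\leq 5\cdot\OPT(\w)+6\epsilon|\Eeps|$. The only cosmetic difference is that the paper sets $\widetilde{\w}(e)=\widehat{\w}_{r_e}(e)$ directly on the queried pairs of $\Eeps$, which makes the coincidence of cluster decisions immediate and sidesteps your ``as close to $\w(e)$ as possible'' open-interval technicality, since the event already gives $|\w(e)-\widehat{\w}_{r_e}(e)|<\epsilon$ there.
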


\begin{proof}

 Let $\widehat{E} \subseteq E$ be the set of pairs that have been pulled in the algorithm.
For $e=\{u,v\} \in \widehat{E}$,
let $r_e$ be the phase, in which either $u$ or $v$ is selected as a pivot.
Construct the weight $\widetilde{\w}: E\rightarrow [0,1]$ such that for each $e\in E$,
\begin{align*}
\widetilde{\w}(e)=
\begin{cases}
\widehat{\w}_{r_e}(e)  &\text{if } e \in \Eeps \cap \widehat{E}, \\
\w(e)        &\text{otherwise}.
\end{cases}
\end{align*}

Consider running \textsf{KwikCluster} (Algorithm~\ref{alg:pivot}) with the similarity $\widetilde{\w}$ while respecting the selection of pivots $p_r$ of \textsf{KC-FB}, that is, in the $r$-th iteration, the algorithm selects the pivot $p_r$ if it exists.
In the first iteration, the algorithm can select the pivot $p_1$ and construct the cluster $\{p_1\}\cup \Gamma_{V_1}(p_1, \widetilde{\w})$.
By the definition of $\widetilde{\w}$ and the assumption of the lemma, we have $\Gamma_{V_1}(p_1, \widetilde{\w})=\Gamma_{V_1}(p_1, \widehat{\w}_1)$.
In fact, for any element $u$ in $V_1$ (except for $p_1$), we see that $\widetilde{\w}(p_1,u) > 0.5$ if and only if $\widehat{\w}(p_1,u)> 0.5$.
Therefore, the cluster produced is exactly the same as $C_1$ in \textsf{KC-FB}.
In the second iteration, the algorithm can select $p_2$ because $p_2$ was not contained in the cluster of the first iteration, and by applying the same argument as above, we see that the cluster of this iteration is exactly the same as $C_2$ in \textsf{KC-FB}.
The later iterations can be handled in the same way.
Therefore, we see that the output of the above algorithm coincides with that of \textsf{KC-FB}.

Then it suffices to show that the output of the above algorithm has the desired approximation guarantee.
Let $\mathcal{C}'_\mathrm{out}$ be the output of the above algorithm.
Recalling that \textsf{KC-FB} picks pivot $p_t$ uniformly at random,
we have
\begin{align*}
\mathbb{E}[\cost_{\w}(\mathcal{C}'_\mathrm{out})]
&\leq \mathbb{E}[\cost_{\widetilde{\w}}(\mathcal{C}'_\mathrm{out})]+\epsilon |\Eeps|\\
&\leq 5\cdot \text{OPT}(\widetilde{\w})+\epsilon|\Eeps|\\
&\leq 5\left(\text{OPT}(\w)+\epsilon|\Eeps|\right)+\epsilon|\Eeps|\\
&=5\cdot \text{OPT}(\w)+6\epsilon|\Eeps|,
\end{align*}
where the first and third inequalities follow from the fact that $\w(e)$ and $\widetilde{\w}(e)$ may be different only for $e\in \Eeps\ (\cap\ \widehat{E})$ and the difference there is at most $\epsilon$ from the assumption of the lemma.
\end{proof}

\subsection{Proof of \Cref{thm:main_fixedbudget}}\label{sec:proof thm:main_fixedbudget}

\begin{proof}[Proof of Theorem~\ref{thm:main_fixedbudget}]
For $\epsilon>0$, define $\epsilon' \in (0,0.5)$ as $\frac{\epsilon}{6\max\{1,\,|\Eeps|\}}$ if $\epsilon < 0.5$ and $\frac{\epsilon}{6m}$ otherwise.
By Lemma~\ref{lemma:fixedconfprobevent} for $\epsilon'$,
we have that
\begin{align*}
 \Pr   \left[  \bigcap_{r=1}^{n} \mathcal{E}'_r \right] \geq  1-  2n^3 \exp \left( - \frac{2T \min_{e \in E}\max\left\{\epsilon',\Delta_e \right\}^2}{n^{2}} \right),
\end{align*}
where $\mathcal{E}'_r$ is the random event for phase $r \in \{1,\ldots,n\}$ that is defined by \eqref{eq:event_eps} with  $\epsilon'$.
Therefore, using Lemma~\ref{lem:fixedbudget_approx_improved} for $\epsilon'$,
we can see that the output $\mathcal{C}_\mathrm{out}$ of \textsf{KC-FB} is a $(5,\epsilon)$-approximate solution for instance $(V,\w)$ of the offline problem minimizing \eqref{equation:correlation-clustering} w.p. at least $1- 2n^3 \exp \left( - \frac{2T \min_{e \in E}\max\left\{\epsilon',\Delta_e \right\}^2} {n^{2}} \right)$.
 Finally, we can easily confirm that \textsf{KC-FB} does not exceed the given budget $T$ due to the algorithm procedure of line~\ref{algline:QC-FBtaur},
which concludes the proof.

\end{proof}


\section{Uniform sampling algorithms}\label{sec:uniformdsampling}

Here we provide the complete description and analysis of the naive uniform-sampling algorithms for both the FC setting (\Cref{alg:fixedconfuniform}) and the FB setting (\Cref{alg:fixedbudgetuniform}).
Note that in the FC setting, no feasible stopping conditions are known from previous studies to guarantee that the output is an approximate solution, even with uniform or arbitrary sampling strategies.
Therefore existing analysis of uniform sampling given in \citet{Chen2014} is not applicable to our case with offline optimization being NP-hard.

First, we show a basic analysis of the cost of clustering when the estimate $\widehat{\w}$ is close to the unknown similarity $\w$.

\begin{lemma}\label{lem:gemeralbound}
   Let $\epsilon \in (0,0.5)$.
   Assume that $|\w(e)-\widehat{\w}(e)|\leq \epsilon$ for every $e \in E$.
    Let $\mathcal{C}_\mathrm{out}$ be the output of any $\alpha$-approximation algorithm for instance $(V,\widehat{\w})$ of the offline problem minimizing \eqref{equation:correlation-clustering}. 
    Then $\mathcal{C}_\mathrm{out}$ is an $(\alpha, (\alpha+1)\epsilon m)$-approximate solution for instance $(V,\w)$ of the offline problem. 
\end{lemma}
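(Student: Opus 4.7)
The plan is to show that the cost functions $\cost_\w$ and $\cost_{\widehat{\w}}$ are close on every clustering, and then combine this with the approximation guarantee of the algorithm run on $(V,\widehat{\w})$.

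First I would observe that for any clustering $\cl$, the contribution of a pair $e \in E$ to $\cost_\w(\cl)$ is either $1-\w(e)$ (if the pair is in the same cluster) or $\w(e)$ (if separated), and analogously with $\widehat{\w}$ for $\cost_{\widehat{\w}}(\cl)$. In both cases the per-pair difference is exactly $|\w(e)-\widehat{\w}(e)| \leq \epsilon$. Summing over all $m$ pairs gives the uniform stability bound
\begin{equation*}
\bigl| \cost_\w(\cl) - \cost_{\widehat{\w}}(\cl) \bigr| \leq \epsilon m \quad \text{for every clustering } \cl.
\end{equation*}

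Next I would apply this bound twice. On the one hand, letting $\cl^*$ be an optimal clustering for $(V,\w)$,
\begin{equation*}
\OPT(\widehat{\w}) \leq \cost_{\widehat{\w}}(\cl^*) \leq \cost_\w(\cl^*) + \epsilon m = \OPT(\w) + \epsilon m.
\end{equation*}
On the other hand, since $\mathcal{C}_\mathrm{out}$ is $\alpha$-approximate for $(V,\widehat{\w})$,
\begin{equation*}
\cost_\w(\mathcal{C}_\mathrm{out}) \leq \cost_{\widehat{\w}}(\mathcal{C}_\mathrm{out}) + \epsilon m \leq \alpha \cdot \OPT(\widehat{\w}) + \epsilon m.
\end{equation*}
Chaining the two inequalities yields $\cost_\w(\mathcal{C}_\mathrm{out}) \leq \alpha \cdot \OPT(\w) + (\alpha+1)\epsilon m$, which is the claimed $(\alpha,(\alpha+1)\epsilon m)$-approximation guarantee.

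There is essentially no hard step here: the argument is a standard transfer lemma showing that an approximation algorithm for the perturbed instance remains a near-approximation on the true instance, with the additive slack scaling like the $\ell_\infty$ perturbation times the number of pairs. The only minor care is to observe that the $\alpha$-approximation property refers to $\OPT(\widehat{\w})$ rather than $\OPT(\w)$, which is why the second application of the stability bound inflates the additive error from $\epsilon m$ to $(\alpha+1)\epsilon m$.
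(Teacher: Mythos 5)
Your proof is correct and follows essentially the same chain of inequalities as the paper: bound the per-pair cost difference by $\epsilon$ to get the $\epsilon m$ stability bound, apply it once to transfer $\cost_{\widehat{\w}}(\mathcal{C}_\mathrm{out})$ to $\cost_{\w}(\mathcal{C}_\mathrm{out})$ and once to transfer $\OPT(\widehat{\w})$ to $\OPT(\w)$, then chain. The only cosmetic difference is that the paper wraps the costs in $\mathbb{E}[\cdot]$ to accommodate randomized approximation algorithms such as \textsf{KwikCluster}; the argument is otherwise identical.
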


\begin{proof}
We have 
\begin{align*}
\mathbb{E}[\cost_{\w}(\mathcal{C}_\mathrm{out})]
&\leq \mathbb{E}[\cost_{\widehat{\w}}(\mathcal{C}_\mathrm{out})]+\epsilon m\\
&\leq \alpha \cdot \text{OPT}(\widehat{\w})+\epsilon m\\
&\leq \alpha \left(\text{OPT}(\w)+\epsilon m\right)+\epsilon m\\
&=\alpha \cdot \text{OPT}(\w)+(\alpha+1)\epsilon m. 
\end{align*}
\end{proof}

\begin{algorithm}[t]
\caption{Uniform sampling in the FC setting (\textsf{Uniform-FC})}
\label{alg:fixedconfuniform}
 \setcounter{AlgoLine}{0}
 	\SetKwInOut{Input}{Input}
 	\SetKwInOut{Output}{Output}
	\Input{\ Set $V$ of $n$ objects, confidence level $\delta$, additive error $\epsilon>0$}
	
$T(e)   \leftarrow \lceil\frac{(\alpha+1)^2 m^2}{2\epsilon^2} \log \frac{2m}{\delta} \rceil$ for each $e\in E$;

 Sample each $e \in E$ for $T(e)$ times
 and compute empirical mean $\widehat{\w}(e)$;

	$\hat{\mathcal{C}}$  $\leftarrow$ solution of an approximation algorithm for instance $(V,\widehat{\w})$ of the offline problem minimizing \eqref{equation:correlation-clustering};

    \Return{ $\mathcal{C}_\mathrm{out}:=\hat{\mathcal{C}}$\;}
\end{algorithm}

\begin{algorithm}[t]
\caption{Uniform sampling in the FB setting (\textsf{Uniform-FB})}
\label{alg:fixedbudgetuniform}
 \setcounter{AlgoLine}{0}
 	\SetKwInOut{Input}{Input}
 	\SetKwInOut{Output}{Output}
	\Input{\ Set $V$ of $n$ objects, budget $T$}
	
	Sample each $e \in E$ 
 for $\lfloor{T/{m}\rfloor}$ times and compute the empirical mean $\widehat{\w}(e)$;
	
	$\hat{\mathcal{C}}$  $\leftarrow$ solution of an approximation algorithm for instance $(V,\widehat{\w})$ of the offline problem minimizing \eqref{equation:correlation-clustering};

    \Return{ $\mathcal{C}_\mathrm{out}:=\hat{\mathcal{C}}$\;}
\end{algorithm}

Next we evaluate the performance of Algorithm~\ref{alg:fixedconfuniform}. 
\begin{proposition}\label{proposi:uniformfixconf}
    Given a confidence level $\delta \in (0,1)$ and an additive error $\epsilon \in (0,0.5)$,
    the uniform sampling algorithm~ with an $\alpha$-approximation oracle for the FC setting (Algorithm~\ref{alg:fixedconfuniform}) outputs $\mathcal{C}_\mathrm{out}$ that satisfies 
    \begin{align*}
     \mathrm{Pr}\left[ \cost_{\w}(\mathcal{C}_\mathrm{out}) \leq \alpha \cdot \OPT(\w)+ \epsilon \right] \geq 1-\delta,
    \end{align*}
     and the upper bound of the number of samples is
    \begin{align*}
    T=\bigO\left(  \frac{\alpha^2 n^6}{\epsilon^2} \log \frac{2n^2}{\delta}    \right).
    \end{align*}

\end{proposition}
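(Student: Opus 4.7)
The plan is to reduce the proposition to a straightforward combination of Hoeffding's inequality, a union bound, and \Cref{lem:gemeralbound}. The first step is to choose an accuracy parameter $\epsilon'$ for the estimates $\widehat{\w}(e)$ so that the additive error propagated through \Cref{lem:gemeralbound} matches the target $\epsilon$. Since that lemma yields an $(\alpha,(\alpha{+}1)\epsilon' m)$-approximate solution whenever every estimate is within $\epsilon'$ of the truth, I would set $\epsilon' = \frac{\epsilon}{(\alpha+1)m}$, which makes $(\alpha{+}1)\epsilon' m = \epsilon$.

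Next, I would argue that this accuracy is achieved with probability at least $1-\delta$ under the sampling budget $T(e)$ prescribed by Algorithm~\ref{alg:fixedconfuniform}. For a single fixed $e$, the Hoeffding bound (\Cref{lemma:Hoeffding_be}) gives
\begin{equation*}
\Pr\!\left[\, |\widehat{\w}(e)-\w(e)| \ge \epsilon'\,\right] \le 2\exp\!\left(-2\,T(e)\,\epsilon'^{2}\right).
\end{equation*}
Plugging in $T(e)=\lceil\frac{(\alpha+1)^{2}m^{2}}{2\epsilon^{2}}\log\frac{2m}{\delta}\rceil = \lceil\frac{1}{2\epsilon'^{2}}\log\frac{2m}{\delta}\rceil$ makes the right-hand side at most $\delta/m$. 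A union bound over all $m$ pairs $e \in E$ then yields
\begin{equation*}
\Pr\!\left[\,\forall e\in E,\ |\widehat{\w}(e)-\w(e)| < \epsilon'\,\right] \ge 1-\delta.
\end{equation*}
On this high-probability event, \Cref{lem:gemeralbound} applied to the $\alpha$-approximate solution $\hat{\mathcal{C}}$ on the instance $(V,\widehat{\w})$ immediately gives $\cost_{\w}(\mathcal{C}_\mathrm{out}) \le \alpha \cdot \OPT(\w) + (\alpha{+}1)\epsilon' m = \alpha\cdot\OPT(\w) + \epsilon$, which is exactly the desired guarantee.

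Finally, I would bound the total sample complexity by summing $T(e)$ over all $e \in E$. Using $m = \binom{n}{2} = O(n^{2})$,
\begin{equation*}
T \;=\; \sum_{e\in E} T(e) \;=\; O\!\left(\frac{\alpha^{2} m^{3}}{\epsilon^{2}}\log\frac{2m}{\delta}\right) \;=\; O\!\left(\frac{\alpha^{2} n^{6}}{\epsilon^{2}}\log\frac{2n^{2}}{\delta}\right),
\end{equation*}
matching the stated bound. There is really no hard step here: the only care required is to keep the blow-up factor from \Cref{lem:gemeralbound} correctly balanced against the per-arm accuracy, which drives the $m^{2}$ factor inside $T(e)$ and hence the $n^{6}$ factor in the total budget. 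The proof is essentially a bookkeeping exercise once $\epsilon'$ is chosen.
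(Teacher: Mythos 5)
Your proposal is correct and follows essentially the same route as the paper's proof: choose the per-pair accuracy $\epsilon'=\frac{\epsilon}{(\alpha+1)m}$, apply Hoeffding and a union bound over the $m$ pairs to get the $1-\delta$ event, and then invoke \Cref{lem:gemeralbound} to convert the uniform estimation error into the $(\alpha,\epsilon)$-approximation. The only addition is that you explicitly sum $T(e)$ over $e\in E$ to verify the $\bigO\bigl(\frac{\alpha^2 n^6}{\epsilon^2}\log\frac{2n^2}{\delta}\bigr)$ bound, which the paper leaves implicit.
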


\begin{proof}
As the algorithm
samples each $e \in E$ for $T(e)$ times, by the Hoeffding inequality (Lemma~\ref{lemma:Hoeffding_be}), 
 it holds that
    \begin{align*}
       \Pr   \left[   
 |\widehat{\w}(e)-\w(e) | \geq \frac{\epsilon}{(\alpha+1)m} \right] &\leq  2 \exp \left( - \frac{ 2T(e)\epsilon^2}{(\alpha+1)^2  m^2} \right).
    \end{align*}
Note that $T(e) \geq \frac{(\alpha+1)^2 m^2}{2\epsilon^2} \log \frac{2m}{\delta}$ gives
    \begin{align*}
  & \exp \left( - \frac{ 2T(e)\epsilon^2}{(\alpha+1)^2 m^2} \right)  \leq  \frac{\delta}{2m}.
    \end{align*}
    Therefore, by taking a union bound, we have
   \[
   \Pr   \left[   
 |\widehat{\w}(e)-\w(e) | < \frac{\epsilon}{(\alpha+1)m}, \ \forall e \in E \right]  \geq 1-\delta.
 \]
 By Lemma~\ref{lem:gemeralbound} for $\epsilon:=\frac{\epsilon}{(\alpha+1)m}$,
 we see that $\mathcal{C}_\mathrm{out}$ is an $(\alpha,\epsilon)$-approximate solution for instance $(V,\w)$ of the offline problem minimizing \eqref{equation:correlation-clustering} w.p. at least $1-\delta$, as desired. 
\end{proof}

The next proposition evaluates the performance of Algorithm~\ref{alg:fixedbudgetuniform}.

\begin{proposition}\label{proposi:unuformfixedbudget}
    Given a sampling budget $T$ and additive error $\epsilon \in (0,0.5)$,
    the uniform sampling algorithm with an $\alpha$-approximation oracle for the FB setting (Algorithm~\ref{alg:fixedbudgetuniform}) outputs $\mathcal{C}_\mathrm{out}$ that satisfies 
    \begin{align*}
    \mathrm{Pr}\left[ \cost_{\w}(\mathcal{C}_\mathrm{out})> \alpha \cdot \OPT(\w)+ \epsilon \right] = \bigO \left( n^2 \exp \left( - \frac{T \epsilon^2 }{\alpha^2  n^6 } \right) \right).
    \end{align*}
\end{proposition}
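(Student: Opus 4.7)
The plan is to mirror the structure of the proof of Proposition~\ref{proposi:uniformfixconf}, but in the opposite direction: instead of choosing the per-arm sample size to hit a target confidence, we fix the per-arm sample size at $\lfloor T/m\rfloor$ and read off the resulting confidence. The key enabling result is Lemma~\ref{lem:gemeralbound}, which tells us that whenever the empirical similarities are uniformly within $\epsilon' := \epsilon/((\alpha+1)m)$ of the true similarities, the output of any $\alpha$-approximation oracle run on $(V,\widehat{\w})$ is automatically an $(\alpha,\epsilon)$-approximate solution on $(V,\w)$.

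First I would instantiate the Hoeffding inequality (Lemma~\ref{lemma:Hoeffding_be}) for a single pair $e \in E$. Since Algorithm~\ref{alg:fixedbudgetuniform} takes $k := \lfloor T/m\rfloor$ i.i.d.\ Bernoulli$(\w(e))$ samples, we get
\[
\Pr\!\left[|\widehat{\w}(e)-\w(e)| \geq \epsilon'\right]
\;\leq\; 2\exp\!\left(-2k(\epsilon')^2\right)
\;\leq\; 2\exp\!\left(-\tfrac{2\lfloor T/m\rfloor\,\epsilon^2}{(\alpha+1)^2 m^2}\right).
\]
Using $\lfloor T/m\rfloor \geq T/(2m)$ for $T\geq m$ and absorbing the constants, this bound is of order $\exp\!\bigl(-T\epsilon^2/(\alpha^2 m^3)\bigr)$. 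Since $m = \binom{n}{2} = \Theta(n^2)$, the exponent is $\Theta(T\epsilon^2/(\alpha^2 n^6))$, matching the target in the proposition.

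Next I would take a union bound over all $m = O(n^2)$ pairs to obtain
\[
\Pr\!\left[\exists\,e\in E:\;|\widehat{\w}(e)-\w(e)|\geq \epsilon'\right]
\;=\; O\!\left(n^2 \exp\!\left(-\tfrac{T\epsilon^2}{\alpha^2 n^6}\right)\right).
\]
On the complementary event, every pair is estimated to within $\epsilon'$, so Lemma~\ref{lem:gemeralbound} (applied with $\epsilon'$ in place of $\epsilon$ there, noting $(\alpha+1)\epsilon' m = \epsilon$) gives
\[
\cost_{\w}(\mathcal{C}_\mathrm{out}) \;\leq\; \alpha\cdot \OPT(\w) + \epsilon.
\]
Contrapositively, $\{\cost_{\w}(\mathcal{C}_\mathrm{out}) > \alpha\cdot \OPT(\w)+\epsilon\}$ is contained in the bad event above, and the stated bound follows.

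There is essentially no hard step: the argument is a routine Hoeffding-plus-union-bound calculation, and Lemma~\ref{lem:gemeralbound} already packages the approximation-preservation step. The only care needed is in tracking the three powers of $m$ (one from scaling $\epsilon$ down to $\epsilon'$ inside the Hoeffding exponent, another from the $m^2$ in $(\epsilon')^2$, and one more from dividing $T$ by $m$ for the per-arm sample count), which together produce the $m^3 = \Theta(n^6)$ in the denominator of the exponent, as well as being mildly careful that the guarantee of Lemma~\ref{lem:gemeralbound} is in expectation over the internal randomness of the oracle while the high-probability statement is over the sampling randomness; the two sources of randomness are independent so the bounds compose cleanly.
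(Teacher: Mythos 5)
Your proposal is correct and follows essentially the same route as the paper: Hoeffding with $\lfloor T/m\rfloor$ samples per pair at accuracy $\epsilon/((\alpha+1)m)$, a union bound over the $m=\Theta(n^2)$ pairs, and then Lemma~\ref{lem:gemeralbound} to convert uniform estimation accuracy into the $(\alpha,\epsilon)$-approximation guarantee. Your closing remarks about tracking the three factors of $m$ and about the expectation in Lemma~\ref{lem:gemeralbound} are accurate observations but do not change the argument.
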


\begin{proof}
     As $e \in E$ has been pulled at least $\lfloor \frac{T}{m}\rfloor$ times,
   by \Cref{lemma:Hoeffding_be},
    we have 
    \begin{align*}
       & \Pr \left[ |\widehat{\w}(e)-\w(e) | \geq \frac{\epsilon}{(\alpha+1)m}  \right] \leq 2 \exp \left( - \frac{2T  \epsilon^2}{(\alpha+1)^2 m^3} \right).
    \end{align*}
Taking a union bound for all $e \in E$, we have

    \begin{align*}
       \Pr   \left[   
   |\widehat{\w}(e)-\w(e) | < \frac{\epsilon}{(\alpha+1)m}, \ \forall e \in E  \right] 
   &\geq  1-  2  \sum_{e \in E}  \exp \left( - \frac{2T  \epsilon^2}{(\alpha+1)^2 m^3} \right)\\
        &\geq  1-  2  m  \exp \left( - \frac{2T  \epsilon^2}{(\alpha+1)^2 m^3} \right).
    \end{align*}
By Lemma~\ref{lem:gemeralbound}, 
when for all $e \in E$, $|\widehat{\w}(e)-\w(e) | < \frac{\epsilon}{(\alpha+1)m}$,
we have 
 $\cost_{\w}(\mathcal{C}_\mathrm{out}) \leq  \alpha \cdot \OPT(\w)+ \epsilon$, which concludes the proof.
    
\end{proof}

\newpage
\section*{NeurIPS Paper Checklist}

\begin{enumerate}

\item {\bf Claims}
    \item[] Question: Do the main claims made in the abstract and introduction accurately reflect the paper's contributions and scope?
    \item[] Answer: \answerYes{} 
    \item[] Justification: The main claims made in the abstract and introduction (\Cref{sec:intro}) reflect the paper’s contributions and scope including theoretical results and its importance in the literature as well as empirical evidences.
    \item[] Guidelines:
    \begin{itemize}
        \item The answer NA means that the abstract and introduction do not include the claims made in the paper.
        \item The abstract and/or introduction should clearly state the claims made, including the contributions made in the paper and important assumptions and limitations. A No or NA answer to this question will not be perceived well by the reviewers. 
        \item The claims made should match theoretical and experimental results, and reflect how much the results can be expected to generalize to other settings. 
        \item It is fine to include aspirational goals as motivation as long as it is clear that these goals are not attained by the paper. 
    \end{itemize}

\item {\bf Limitations}
    \item[] Question: Does the paper discuss the limitations of the work performed by the authors?
    \item[] Answer: \answerYes{} 
    \item[] Justification: 
  We have discussed in detail the limitations of our work, specifically the absence of lower bounds for our novel formulations, and additionally provided auxiliary analysis by devising naive algorithms.
     We also included future directions in \Cref{sec:conclusions}.
    \item[] Guidelines:
    \begin{itemize}
        \item The answer NA means that the paper has no limitation while the answer No means that the paper has limitations, but those are not discussed in the paper. 
        \item The authors are encouraged to create a separate "Limitations" section in their paper.
        \item The paper should point out any strong assumptions and how robust the results are to violations of these assumptions (e.g., independence assumptions, noiseless settings, model well-specification, asymptotic approximations only holding locally). The authors should reflect on how these assumptions might be violated in practice and what the implications would be.
        \item The authors should reflect on the scope of the claims made, e.g., if the approach was only tested on a few datasets or with a few runs. In general, empirical results often depend on implicit assumptions, which should be articulated.
        \item The authors should reflect on the factors that influence the performance of the approach. For example, a facial recognition algorithm may perform poorly when image resolution is low or images are taken in low lighting. Or a speech-to-text system might not be used reliably to provide closed captions for online lectures because it fails to handle technical jargon.
        \item The authors should discuss the computational efficiency of the proposed algorithms and how they scale with dataset size.
        \item If applicable, the authors should discuss possible limitations of their approach to address problems of privacy and fairness.
        \item While the authors might fear that complete honesty about limitations might be used by reviewers as grounds for rejection, a worse outcome might be that reviewers discover limitations that aren't acknowledged in the paper. The authors should use their best judgment and recognize that individual actions in favor of transparency play an important role in developing norms that preserve the integrity of the community. Reviewers will be specifically instructed to not penalize honesty concerning limitations.
    \end{itemize}

\item {\bf Theory Assumptions and Proofs}
    \item[] Question: For each theoretical result, does the paper provide the full set of assumptions and a complete (and correct) proof?
    \item[] Answer: \answerYes{} 
    \item[] Justification: 
   All theorems (\Cref{thm:fixedconfmain} and \Cref{thm:main_fixedbudget}), formulas, and proofs are numbered and cross-referenced. Assumptions are clearly stated in \Cref{sec:prelim}. Complete proofs are included in the appendix (\Cref{sec:appendix_FC} for \Cref{thm:fixedconfmain}, \Cref{sec:appendix_FB} for \Cref{thm:main_fixedbudget}, and \Cref{sec:uniformdsampling} for analysis of baselines). Short sketch provided in \Cref{sec:fixed_confidence} is complemented by \Cref{sec:appendix_FC}. All theorems and lemmas are properly referenced in the proofs.
    
    \item[] Guidelines:
    \begin{itemize}
        \item The answer NA means that the paper does not include theoretical results. 
        \item All the theorems, formulas, and proofs in the paper should be numbered and cross-referenced.
        \item All assumptions should be clearly stated or referenced in the statement of any theorems.
        \item The proofs can either appear in the main paper or the supplemental material, but if they appear in the supplemental material, the authors are encouraged to provide a short proof sketch to provide intuition. 
        \item Inversely, any informal proof provided in the core of the paper should be complemented by formal proofs provided in appendix or supplemental material.
        \item Theorems and Lemmas that the proof relies upon should be properly referenced. 
    \end{itemize}

    \item {\bf Experimental Result Reproducibility}
    \item[] Question: Does the paper fully disclose all the information needed to reproduce the main experimental results of the paper to the extent that it affects the main claims and/or conclusions of the paper (regardless of whether the code and data are provided or not)?
    \item[] Answer: \answerYes{} 
    \item[] Justification:
    As detailed in \Cref{sec:experiments},
    we fully disclosed all the information needed to reproduce the main experimental results of the paper.
    
    \item[] Guidelines:
    \begin{itemize}
        \item The answer NA means that the paper does not include experiments.
        \item If the paper includes experiments, a No answer to this question will not be perceived well by the reviewers: Making the paper reproducible is important, regardless of whether the code and data are provided or not.
        \item If the contribution is a dataset and/or model, the authors should describe the steps taken to make their results reproducible or verifiable. 
        \item Depending on the contribution, reproducibility can be accomplished in various ways. For example, if the contribution is a novel architecture, describing the architecture fully might suffice, or if the contribution is a specific model and empirical evaluation, it may be necessary to either make it possible for others to replicate the model with the same dataset, or provide access to the model. In general. releasing code and data is often one good way to accomplish this, but reproducibility can also be provided via detailed instructions for how to replicate the results, access to a hosted model (e.g., in the case of a large language model), releasing of a model checkpoint, or other means that are appropriate to the research performed.
        \item While NeurIPS does not require releasing code, the conference does require all submissions to provide some reasonable avenue for reproducibility, which may depend on the nature of the contribution. For example
        \begin{enumerate}
            \item If the contribution is primarily a new algorithm, the paper should make it clear how to reproduce that algorithm.
            \item If the contribution is primarily a new model architecture, the paper should describe the architecture clearly and fully.
            \item If the contribution is a new model (e.g., a large language model), then there should either be a way to access this model for reproducing the results or a way to reproduce the model (e.g., with an open-source dataset or instructions for how to construct the dataset).
            \item We recognize that reproducibility may be tricky in some cases, in which case authors are welcome to describe the particular way they provide for reproducibility. In the case of closed-source models, it may be that access to the model is limited in some way (e.g., to registered users), but it should be possible for other researchers to have some path to reproducing or verifying the results.
        \end{enumerate}
    \end{itemize}

\item {\bf Open access to data and code}
    \item[] Question: Does the paper provide open access to the data and code, with sufficient instructions to faithfully reproduce the main experimental results, as described in supplemental material?
    \item[] Answer: \answerNo{} 
    \item[] Justification: 
    The paper does not provide open access to the data and code as its primary focus is on theoretical contributions, with experiments included to support the theoretical findings.
    \item[] Guidelines:
    \begin{itemize}
        \item The answer NA means that paper does not include experiments requiring code.
        \item Please see the NeurIPS code and data submission guidelines (\url{https://nips.cc/public/guides/CodeSubmissionPolicy}) for more details.
        \item While we encourage the release of code and data, we understand that this might not be possible, so “No” is an acceptable answer. Papers cannot be rejected simply for not including code, unless this is central to the contribution (e.g., for a new open-source benchmark).
        \item The instructions should contain the exact command and environment needed to run to reproduce the results. See the NeurIPS code and data submission guidelines (\url{https://nips.cc/public/guides/CodeSubmissionPolicy}) for more details.
        \item The authors should provide instructions on data access and preparation, including how to access the raw data, preprocessed data, intermediate data, and generated data, etc.
        \item The authors should provide scripts to reproduce all experimental results for the new proposed method and baselines. If only a subset of experiments are reproducible, they should state which ones are omitted from the script and why.
        \item At submission time, to preserve anonymity, the authors should release anonymized versions (if applicable).
        \item Providing as much information as possible in supplemental material (appended to the paper) is recommended, but including URLs to data and code is permitted.
    \end{itemize}

\item {\bf Experimental Setting/Details}
    \item[] Question: Does the paper specify all the training and test details (e.g., data splits, hyperparameters, how they were chosen, type of optimizer, etc.) necessary to understand the results?
    \item[] Answer: \answerYes{}{} 
    \item[] Justification: 
    Although our proposed method does not involve training or hyperparameters, all relevant parameters for problem setup, such as the values of $\epsilon$, $\delta$, and $T$, as well as all parameters related to instance generation, are detailed in \Cref{sec:experiments}.
    \item[] Guidelines:
    \begin{itemize}
        \item The answer NA means that the paper does not include experiments.
        \item The experimental setting should be presented in the core of the paper to a level of detail that is necessary to appreciate the results and make sense of them.
        \item The full details can be provided either with the code, in appendix, or as supplemental material.
    \end{itemize}

\item {\bf Experiment Statistical Significance}
    \item[] Question: Does the paper report error bars suitably and correctly defined or other appropriate information about the statistical significance of the experiments?
    \item[] Answer: \answerYes{} 
    \item[] Justification: 
    The experimental results reported in \Cref{sec:experiments} confidence intervals or standard deviation as well as averages.
    \item[] Guidelines:
    \begin{itemize}
        \item The answer NA means that the paper does not include experiments.
        \item The authors should answer "Yes" if the results are accompanied by error bars, confidence intervals, or statistical significance tests, at least for the experiments that support the main claims of the paper.
        \item The factors of variability that the error bars are capturing should be clearly stated (for example, train/test split, initialization, random drawing of some parameter, or overall run with given experimental conditions).
        \item The method for calculating the error bars should be explained (closed form formula, call to a library function, bootstrap, etc.)
        \item The assumptions made should be given (e.g., Normally distributed errors).
        \item It should be clear whether the error bar is the standard deviation or the standard error of the mean.
        \item It is OK to report 1-sigma error bars, but one should state it. The authors should preferably report a 2-sigma error bar than state that they have a 96\% CI, if the hypothesis of Normality of errors is not verified.
        \item For asymmetric distributions, the authors should be careful not to show in tables or figures symmetric error bars that would yield results that are out of range (e.g. negative error rates).
        \item If error bars are reported in tables or plots, The authors should explain in the text how they were calculated and reference the corresponding figures or tables in the text.
    \end{itemize}

\item {\bf Experiments Compute Resources}
    \item[] Question: For each experiment, does the paper provide sufficient information on the computer resources (type of compute workers, memory, time of execution) needed to reproduce the experiments?
    \item[] Answer: \answerYes{} 
    \item[] Justification: As in Section~\ref{sec:experiments}, the paper provides sufficient information on the computer resources needed to reproduce the experiments. 
    \item[] Guidelines:
    \begin{itemize}
        \item The answer NA means that the paper does not include experiments.
        \item The paper should indicate the type of compute workers CPU or GPU, internal cluster, or cloud provider, including relevant memory and storage.
        \item The paper should provide the amount of compute required for each of the individual experimental runs as well as estimate the total compute. 
        \item The paper should disclose whether the full research project required more compute than the experiments reported in the paper (e.g., preliminary or failed experiments that didn't make it into the paper). 
    \end{itemize}
    
\item {\bf Code Of Ethics}
    \item[] Question: Does the research conducted in the paper conform, in every respect, with the NeurIPS Code of Ethics \url{https://neurips.cc/public/EthicsGuidelines}?
    \item[] Answer: \answerYes{} 
    \item[] Justification:
    The research conducted in the paper conforms, in every respect, with the NeurIPS Code of Ethics.
    \item[] Guidelines:
    \begin{itemize}
        \item The answer NA means that the authors have not reviewed the NeurIPS Code of Ethics.
        \item If the authors answer No, they should explain the special circumstances that require a deviation from the Code of Ethics.
        \item The authors should make sure to preserve anonymity (e.g., if there is a special consideration due to laws or regulations in their jurisdiction).
    \end{itemize}

\item {\bf Broader Impacts}
    \item[] Question: Does the paper discuss both potential positive societal impacts and negative societal impacts of the work performed?
    \item[] Answer: \answerNA{} 
    \item[] Justification:
    This work is regarded as a theoretical study from an algorithmic perspective. The authors believe this will not lead to any negative social impact.
    \item[] Guidelines:
    \begin{itemize}
        \item The answer NA means that there is no societal impact of the work performed.
        \item If the authors answer NA or No, they should explain why their work has no societal impact or why the paper does not address societal impact.
        \item Examples of negative societal impacts include potential malicious or unintended uses (e.g., disinformation, generating fake profiles, surveillance), fairness considerations (e.g., deployment of technologies that could make decisions that unfairly impact specific groups), privacy considerations, and security considerations.
        \item The conference expects that many papers will be foundational research and not tied to particular applications, let alone deployments. However, if there is a direct path to any negative applications, the authors should point it out. For example, it is legitimate to point out that an improvement in the quality of generative models could be used to generate deepfakes for disinformation. On the other hand, it is not needed to point out that a generic algorithm for optimizing neural networks could enable people to train models that generate Deepfakes faster.
        \item The authors should consider possible harms that could arise when the technology is being used as intended and functioning correctly, harms that could arise when the technology is being used as intended but gives incorrect results, and harms following from (intentional or unintentional) misuse of the technology.
        \item If there are negative societal impacts, the authors could also discuss possible mitigation strategies (e.g., gated release of models, providing defenses in addition to attacks, mechanisms for monitoring misuse, mechanisms to monitor how a system learns from feedback over time, improving the efficiency and accessibility of ML).
    \end{itemize}
    
\item {\bf Safeguards}
    \item[] Question: Does the paper describe safeguards that have been put in place for responsible release of data or models that have a high risk for misuse (e.g., pretrained language models, image generators, or scraped datasets)?
    \item[] Answer: \answerNA{} 
    \item[] Justification:
    The paper poses no such risks.
    \item[] Guidelines:
    \begin{itemize}
        \item The answer NA means that the paper poses no such risks.
        \item Released models that have a high risk for misuse or dual-use should be released with necessary safeguards to allow for controlled use of the model, for example by requiring that users adhere to usage guidelines or restrictions to access the model or implementing safety filters. 
        \item Datasets that have been scraped from the Internet could pose safety risks. The authors should describe how they avoided releasing unsafe images.
        \item We recognize that providing effective safeguards is challenging, and many papers do not require this, but we encourage authors to take this into account and make a best faith effort.
    \end{itemize}

\item {\bf Licenses for existing assets}
    \item[] Question: Are the creators or original owners of assets (e.g., code, data, models), used in the paper, properly credited and are the license and terms of use explicitly mentioned and properly respected?
    \item[] Answer: \answerYes{} 
    \item[] Justification: The original papers that produced the code package or dataset are properly credited in \Cref{sec:experiments}.
    \item[] Guidelines:
    \begin{itemize}
        \item The answer NA means that the paper does not use existing assets.
        \item The authors should cite the original paper that produced the code package or dataset.
        \item The authors should state which version of the asset is used and, if possible, include a URL.
        \item The name of the license (e.g., CC-BY 4.0) should be included for each asset.
        \item For scraped data from a particular source (e.g., website), the copyright and terms of service of that source should be provided.
        \item If assets are released, the license, copyright information, and terms of use in the package should be provided. For popular datasets, \url{paperswithcode.com/datasets} has curated licenses for some datasets. Their licensing guide can help determine the license of a dataset.
        \item For existing datasets that are re-packaged, both the original license and the license of the derived asset (if it has changed) should be provided.
        \item If this information is not available online, the authors are encouraged to reach out to the asset's creators.
    \end{itemize}

\item {\bf New Assets}
    \item[] Question: Are new assets introduced in the paper well documented and is the documentation provided alongside the assets?
    \item[] Answer: \answerNA{} 
    \item[] Justification: The paper does not release new assets.
    \item[] Guidelines:
    \begin{itemize}
        \item The answer NA means that the paper does not release new assets.
        \item Researchers should communicate the details of the dataset/code/model as part of their submissions via structured templates. This includes details about training, license, limitations, etc. 
        \item The paper should discuss whether and how consent was obtained from people whose asset is used.
        \item At submission time, remember to anonymize your assets (if applicable). You can either create an anonymized URL or include an anonymized zip file.
    \end{itemize}

\item {\bf Crowdsourcing and Research with Human Subjects}
    \item[] Question: For crowdsourcing experiments and research with human subjects, does the paper include the full text of instructions given to participants and screenshots, if applicable, as well as details about compensation (if any)? 
    \item[] Answer: \answerNA{} 
    \item[] Justification: 
    The paper does not involve crowdsourcing nor research with human subjects.
    \item[] Guidelines:
    \begin{itemize}
        \item The answer NA means that the paper does not involve crowdsourcing nor research with human subjects.
        \item Including this information in the supplemental material is fine, but if the main contribution of the paper involves human subjects, then as much detail as possible should be included in the main paper. 
        \item According to the NeurIPS Code of Ethics, workers involved in data collection, curation, or other labor should be paid at least the minimum wage in the country of the data collector. 
    \end{itemize}

\item {\bf Institutional Review Board (IRB) Approvals or Equivalent for Research with Human Subjects}
    \item[] Question: Does the paper describe potential risks incurred by study participants, whether such risks were disclosed to the subjects, and whether Institutional Review Board (IRB) approvals (or an equivalent approval/review based on the requirements of your country or institution) were obtained?
    \item[] Answer: \answerNA{} 
    \item[] Justification: 
    The paper does not involve crowdsourcing nor research with human subjects.
    \item[] Guidelines:
    \begin{itemize}
        \item The answer NA means that the paper does not involve crowdsourcing nor research with human subjects.
        \item Depending on the country in which research is conducted, IRB approval (or equivalent) may be required for any human subjects research. If you obtained IRB approval, you should clearly state this in the paper. 
        \item We recognize that the procedures for this may vary significantly between institutions and locations, and we expect authors to adhere to the NeurIPS Code of Ethics and the guidelines for their institution. 
        \item For initial submissions, do not include any information that would break anonymity (if applicable), such as the institution conducting the review.
    \end{itemize}

\end{enumerate}

\end{document}